\documentclass[lettersize,onecolumn]{IEEEtran} %,journal
\usepackage{amsmath,amssymb,amsfonts}
\usepackage{algorithm}
\usepackage{algorithmic}
\usepackage{array}
\usepackage[caption=false,font=normalsize,labelfont=sf,textfont=sf]{subfig}
\usepackage{textcomp}
\usepackage{stfloats}
\usepackage{url}
\usepackage{verbatim}
\usepackage{graphicx}
\usepackage{cite}
\usepackage{amsthm}
\usepackage{thmtools}
\usepackage{thm-restate}
\usepackage{float}
\newfloat{protocol}{htbp}{lop}
\floatname{protocol}{Protocol}
\declaretheorem[name=Theorem]{theorem}
\declaretheorem[name=Lemma,     sibling=theorem]{lemma}

\declaretheorem[name=Definition, sibling=theorem]{definition}
\declaretheorem[name=Assumption, sibling=theorem]{assumption}
\declaretheorem[name=Example,   sibling=theorem]{example}
\declaretheorem[name=Remark,    sibling=theorem]{remark}

\begin{document}

\title{MESHA: Mechanism-Enforced Sequential Halving\\ for Strategic Linear Bandits}

\author{Xin Li,~\IEEEmembership{Graduate Student Member,~IEEE}, Zixin Zhong,~\IEEEmembership{Member,~IEEE}
\thanks{Xin Li and Zixin Zhong are both with the Data Science and Analytics Thrust, Hong Kong University of Science and Technology (Guangzhou).}
}

%\markboth{Under Review of IEEE TRANSACTIONS ON INFORMATION THEORY, July 2026}
%{Journal of \LaTeX\ Class Files,~Vol.~1, No.~2, December~2023}%
%{Shell \MakeLowercase{\textit{et al.}}: A Sample Article Using IEEEtran.cls for IEEE Journals}

%\IEEEpubid{Under Review~\copyright~2026 IEEE}

\maketitle

\begin{abstract}

We design and analyze \underline{M}echanism-\underline{E}nforced \underline{S}equential \underline{HA}lving (MESHA), an algorithm for Best Arm Identification (BAI) in strategic linear bandits. In this setting, each arm may strategically misreport its feature vector to maximize the probability of being identified as the best arm, when rewards are generated from the arms' true but unobservable features. The design of MESHA applies the na\"ive uniform sampling rule and an epoch-wise Grim Trigger Condition (GTC): the former reduces the impact of arms' strategic behaviours and the latter eliminates arms whose reported features severely deviate from the ground truth. Considering an arbitrary Nash Equilibrium, we prove that any arm would attempt to pass the GTC check to maximize its identified probability and derive an upper bound on the failure probability of MESHA within a fixed budget $T$. We also show that state-of-the-art linear BAI algorithms with $G$-optimal design would fail in such strategic environment, as the optimal design (OD)-based sampling rule based on strategically reported features may {\it starve} the optimal arm of any sampling budget. Finally, extensive numerical experiments indicate that MESHA 
 outperforms baselines that rely on OD-based sampling rules as well as the feature-agnostic baselines, corroborating the efficacy of MESHA.

\end{abstract}

\begin{IEEEkeywords}
Strategic linear bandits, best arm identification, mechanism design.
\end{IEEEkeywords}

\section{INTRODUCTION}

\IEEEPARstart{C}{onsider} a hiring platform seeking to hire the optimal candidate from a pool of $K$ applicants within a limited budget of $T$ interview rounds. Each candidate $i$ is inherently characterized by a true background profile, represented as a feature vector $x_{i} \in \mathbb{R}^d$. The platform evaluates these candidates based on a unified criterion modeled by an unknown latent vector $\theta^*$, and the true underlying score of candidate $i$ is represented by the inner product $\langle \theta^*, x_i \rangle$. At each 
round, the platform selects one candidate to interview, and obtains a noisy evaluation of that candidate.
In this setting, candidates act as self-interested agents whose primary objective is to maximize their probability of being selected as the final hire. To achieve this, they may strategically manipulate their submitted profiles, presenting a falsified feature vector $x_{t,i} \neq x_i$ to increase their chances of being selected for interview at any round $t$. However, once selected, their actual performance depends on their true profile $x_i$ rather than the reported feature $x_{t,i}$. 
Consequently, to identify the optimal candidate, the platform must judiciously select interviewees at each round, taking into account noisy evaluations and strategic profile manipulation.

To address such challenge for hiring platforms, we formalize the problem as a fixed-budget Best Arm Identification (BAI) task in strategic linear bandits. The learner's goal is to identify the optimal candidate within a limited budget of $T$ interview rounds. While state-of-the-art (SOTA) linear BAI algorithms have achieved minimax-optimal performance in classical linear bandits \cite{yang2022minimax}, they are fundamentally vulnerable to arms' strategic behavior. Specifically, these current SOTA methods rely on optimal design (OD)-based sampling rules such as $G$-optimal design and $\mathcal{XY}$-allocation~\cite{soare2014best}; however, we show in this work that such algorithm design allows self-interested arms to manipulate the sampling allocation by misreporting their 
features, which can lead to the failure of BAI.%learner's failure.

Recent work has begun to explore the intersection of reinforcement learning and mechanism design. For instance, \cite{zheng2022ai} proposed a two-level deep RL framework that jointly trains self-interested agents and a social planner to optimize tax policy in economic simulations, and \cite{kleine2024strategic,braverman2019multi} integrated mechanism design to to address strategic behavior in bandit problems with the objective of regret minimization.
Specially the OptGTM algorithm \cite{kleine2024strategic} introduces a Grim Trigger Condition (GTC) to discourage arms from misreporting their features in the contextual linear bandit setting. 
However, OptGTM's focus on regret minimization prevents it from efficiently identifying the best arm.
Despite these existing studies, the BAI task in strategic bandits remains under-explored.
To bridge this gap, we propose \underline{M}echanism-\underline{E}nforced \underline{S}equential \underline{HA}lving (MESHA), 
which integrates an epoch-wise GTC mechanism alongside a uniform sampling rule. Crucially, while prior work~\cite{kleine2024strategic,verma2025cobra} required stringent assumptions on feature reporting, by proving that the GTC inherently constrains arm behavior under a Nash Equilibrium, we establish performance guarantees for MESHA under substantially weaker and more practical assumptions.

\subsection{Literature Review}

Bandit literature broadly splits into two paradigms: Regret Minimization (RM) and Best-Arm Identification (BAI). First, in 
RM, a learner aims to maximize cumulative reward over $T$ rounds, with regret quantifying the gap between the learner’s cumulative reward and that of an oracle always pulling the optimal arm.
Second, the BAI (also called pure exploration) problem is studied under two settings: (i) in the fixed-budget setting, a learner aims to maximize the success BAI probability within a fixed budget $T$; (ii) in the fixed-confidence setting, a learner aims to minimize the sample complexity required for BAI with a fixed confidence $\delta$. The RM and BAI tasks necessitate fundamentally different algorithmic designs and theoretical analyses. More specifically, although both efficient RM and BAI algorithms need to carefully balance between exploitation and exploration,
exploitation is more critical in
achieving the optimal performance for RM,
whereas exploration is more crucial for BAI~\cite{lattimore2020bandit,zhong2021achieving,degenne2019bridging}.  
To contextualize our work, we first review existing literature on RM in both stochastic and linear bandit settings. Next, we discuss BAI works across both stochastic and linear settings under fixed-confidence and fixed-budget objectives. Finally, we examine existing work on strategic bandits.

\textbf{Regret Minimization.} The regret minimization problem in bandits has been extensively studied~\cite{auer2002finite,abbasi2011improved}. In the standard multi-armed bandit setting, \cite{lai1985asymptotically} established the first instance-dependent regret lower bound, and \cite{auer2002finite} proposed the UCB1 algorithm which 
achieves a regret nearly-matching this lower bound.\cite{thompson1933likelihood,agrawal2012analysis} studied the Thompson sampling algorithms, an alternative Bayesian approach that also 
achieves near-optimal regret while being computationally 
efficient in practice~\cite{chapelle2011empirical}.  Meanwhile, \cite{auer2002using} proposed the LINREL algorithm, 
one of the first algorithms for stochastic linear bandits. \cite{dani2008stochastic} 
established an $\Omega(d\sqrt{T})$ minimax lower bound on 
regret, and \cite{abbasi2011improved} proposed the OFUL algorithm, which achieves $\tilde{O}(d\sqrt{T})$ regret and is therefore minimax optimal up to logarithmic factors. %\zx{add more algorithms}

 \textbf{Best Arm Identification.}
In standard multi-armed bandits, the 
fixed-confidence BAI problem was studied by~\cite{even2002pac,even2006action}, who established Median Elimination and Action Elimination algorithms with $(\varepsilon,\delta)$-PAC guarantees via arm elimination. ~\cite{kaufmann2016complexity} later derived 
a universal instance-dependent lower bound on the sample complexity of any algorithm and proposed the 
asymptotically optimal Track-and-Stop algorithm, establishing a tight characterization of the sample complexity. In the fixed-budget setting, \cite{audibert2010best} proposed and analyzed the Successive Rejects algorithm and derived a lower bound on the failure probability of any algorithm with a Bernoulli instance, while \cite{karnin2013almost} proposed Sequential Halving (SH), achieving near-optimal sample complexity with an epoch-based design. 
Furthermore, \cite{zhao2023revisiting} refined the analysis of SH by deriving a superior upper bound under a sufficiently large budget $T$ and polynomially spaced arm gaps. For our comparison, we adopt the more general result of SH established in \cite{karnin2013almost}.
Moreover, \cite{shen2019universal,gabillon2012best} provided a unified study of both the fixed-confidence and fixed-budget settings.

In linear bandits, where arms' rewards are linear 
products of known feature vectors and a common, unknown latent vector, the fixed-confidence BAI problem was first studied by \cite{soare2014best}, who introduced the $\mathcal{XY}$-adaptive algorithm with an optimal design (OD)-based sampling strategy. Subsequently, the LinGapE algorithm proposed by \cite{xu2018fully} is with a better sample complexity guarantee and it is a fully adaptive algorithm.
%for fixed-confidence BAI tasks. 
Moreover, in the transductive setting, \cite{fiez2019sequential} designed the RAGE algorithm and derived the upper bound on its sample complexity. For the objective of 
fixed-budget BAI, \cite{yang2022minimax} designed the OD-LinBAI algorithm based on G-optimal design and proved that the upper bound of OD-LinBAI's failure probability matches the universal lower bound (also established in this work) up to logarithmic factors; hence, OD-LinBAI is minimax optimal and serves as the current SOTA. Note that $\mathcal{XY}$-adaptive and OD-LinBAI are both optimal design(OD)-based algorithms.

\textbf{Bandits with Strategic Arms.} 
The study of bandit problems with strategic arms was pioneered by~\cite{braverman2019multi}, who proposed a model where each arm receives a private stochastic reward $v_a$ upon being pulled, after which the pulled arm strategically reports a reward $\tilde{v}_a$ to the learner, retaining the residual $(v_a - \tilde{v}_a)$ for itself. Because each arm aims to maximize its cumulative retained reward over $T$ rounds, a fundamental conflict of interest arises: the learner seeks to maximize $\sum v_a$, whereas each arm is incentivized to under-report its reward. This severe information asymmetry can mislead the learner into pulling suboptimal arms, ultimately incurring linear regret.
Building on this, \cite{feng2020intrinsic} proved that classical algorithms such as UCB and $\epsilon$-greedy are naturally resistant to strategic reward inflation, as long as the manipulation budget of each arm grows sub-linearly with time. \cite{esmaeili2025robust} studied a similar model where arms are allowed to modify their rewards at a cost, and showed that UCB is not only robust to arms' strategic behavior but also can incentivize arms to perform rationally without introducing additional mechanism design. 

Meanwhile,\cite{esmaeili2023replication,shin2022multi} investigated an alternative manifestation of strategic environment, where arms can create copies of themselves to increase their chances of being selected; these works design mechanism-enforced algorithms — PI-ETC and H-UCB — that are robust to such replication strategies. 

More recently, \cite{kleine2024strategic, verma2025cobra} extended strategic learning to linear and contextual bandits by proposing the OptGTM and COBRA algorithms, respectively. To counter feature misreporting, OptGTM integrates a GTC mechanism into LinUCB. On the other hand, inspired by the Vickrey-Clarke-Groves (VCG) mechanism, COBRA contrasts an arm's individual optimistic reward estimate against a pessimistic estimate built from the history of all other arms. Crucially, the guarantees for both algorithms depend on strigent assumptions: (i) OptGTM relies on the condition that an arm's reported mean perfectly aligns with its true mean (Lemma E.1 of~\cite{kleine2024strategic}); (ii) COBRA demands that empirical UCB estimates consistently upper-bound true expected rewards under joint strategic behavior—a property the authors note fails when multiple agents manipulate data simultaneously (Assumption 1 of \cite{verma2025cobra}). Beyond these assumptions, both methods focus exclusively on RM but are without exploration for the BAI task. Finally, while their specific implementations differ, both OptGTM and MESHA build upon GTC principles, and hence we position OptGTM as the more relevant baseline for comparison.

To the best of our knowledge, no prior work has comprehensively explored the BAI task in the presence of strategic arms. To bridge this gap, we develop the MESHA algorithm, which couples a na\"ive sampling rule with a robust incentive mechanism to ensure a high success probability. Additionally, we expose how SOTA sampling rules for linear bandits break down under strategic environments, underscoring the necessity of our approach.

\subsection{Contributions}

In this work, we study the fixed-budget BAI problem in strategic linear bandits, where arms may strategically misreport their features to maximize their 
probability of being identified as the optimal arm. Our primary contributions are as follows:

\begin{itemize}
    \item \textbf{The MESHA Algorithm.} We propose 
    Mechanism-Enforced Sequential Halving (MESHA), a BAI algorithm designed for strategic linear bandits. MESHA employs a uniform sampling rule to prevent arms from manipulating the sampling allocation, and 
    incorporates an {\it epoch-wise} Grim Trigger Condition (GTC) to eliminate arms whose reported features severely deviate from the ground truth. Assuming an arbitrary Nash Equilibrium, we prove that every arm would attempt to pass the GTC check to maximize its identified probability, building on which we bound the failure probability of MESHA under a weaker and more practical assumption than that in existing works~\cite{kleine2024strategic,verma2025cobra}.

    \item \textbf{Theoretical Guarantees for MESHA.} We derive a lower bound on MESHA's failure probability of near-optimal arm identification under a fixed budget $T$, assuming an arbitrary Nash equilibrium. 
    MESHA's failure probability exhibits an exponential decay with respect to the budget $T$, demonstrating that the algorithm preserves the optimal convergence guarantees of non-strategic BAI baselines even within a strategic environment. Our analysis also reveals an inherent $\mathcal{O}(d^2 \log T)$ overhead, which arises from MESHA's need to disincentivize arms from deviating significantly from the ground truth.

    \item \textbf{Failure of SOTA Linear BAI Algorithms.}  
    We demonstrate that SOTA linear BAI algorithms~\cite{yang2022minimax} utilizing G-optimal design-based sampling rules are fundamentally susceptible to strategic manipulation. Absent an incentive mechanism design, strategic arms can directly mislead the learner via feature misreporting. Furthermore, even when equipped with a GTC check, these baselines are vulnerable to a novel \emph{starvation attack}. In this attack, suboptimal arms coordinate their reported features so that the optimal arm's feature vector is trapped within the cone spanned by the suboptimal features. This forces the G-optimal design to allocate zero samples to the optimal arm; as a result, the suboptimal arms can pass the GTC check while the optimal arm is completely starved of pulls and missed by the learner. We prove that this failure is structural — no statistical test evaluating only reported features can identify manipulations that distort the underlying feature geometry — thereby justifying both the uniform sampling and GTC framework embedded in MESHA.

    \item \textbf{Numerical Experiments.} 
    We conduct extensive experiments to evaluate the BAI performance of MESHA against several SOTA baselines. The empirical results demonstrate that MESHA consistently achieves successful BAI across varying budgets $T$, feature dimensions $d$, and arm 
    counts $K$, whereas OD-based algorithms collapse in strategic environments. 
    Furthermore, these experiments highlight the practical advantages of MESHA: first, MESHA's advantage over feature-agnostic algorithms widens significantly as $K$ increases; second, 
    its performance gap with OD-LinBAI~\cite{yang2022minimax} — even when the latter is evaluated in a non-strategic environment — 
    narrows as $d$ increases.
\end{itemize}

\section{Problem Formulation}
This section formalizes the BAI problem in strategic linear bandits. Section \ref{basicsetup} introduces the underlying learning dynamics. Section \ref{armsandequil} then defines the utility functions — which describe arms' strategic behaviors —and the corresponding Nash Equilibrium.
\subsection{Underlying Dynamics}%Basic Problem Setup}
\label{basicsetup}
For any $n \in \mathbb{N}$, we denote the set $\{1, \ldots , n\}$ as $[n]$. A random variable $X $ (or its distribution) is $\xi$-sub-Gaussian ($\xi$-SG) if
$\mathbb{E}\big[ \exp( \lambda (X -\mathbb E( X)) \big] \leq \exp( {\lambda^{2} \xi^{2}}/{2}).$ 
In particular, a random variable supported on $[a,b]$ is $\xi$-SG with $\xi=(b-a)/2$.
Let there be $K \in \mathbb{N}$ arms indexed as $[K]$. 
Each arm $i\in[K]$ is characterized by a true, unobservable feature vector $x_i \in \mathbb{R}^d$ and its quality is measured by $\mu_i := \langle \theta^*, x_i \rangle$, where $\theta^*$ is a unknown fixed latent vector. The unique optimal arm $i^*$ is defined by $\arg\max\limits_{i \in [K]} \mu_i$. The sub-optimality gap for any suboptimal arm $i$ is strictly positive, defined as $\Delta_i := \mu_{i^*} - \mu_i > 0$ for all $i \neq i^*$. 

Given a fixed budget of $T$ rounds, each arm acts as a self-interested agent aiming to maximize its probability of being identified as the best arm. To achieve this goal, each arm $i\in[K]$ may strategically report a manipulated feature vector $x_{t,i} \in \mathbb{R}^d$ at any round $t \in [T]$. 
At each round $t$, the learner observes the reported features $\{x_{t,1}, \dots, x_{t,K}\}$, selects an arm $i_t$ and then receives a noisy reward generated based on the selected arm's true feature vector:
\begin{equation}
r_{t,i_t} = \langle \theta^*, x_{i_t} \rangle + \eta_t,
\end{equation}
 where $\eta_t$ is an independent $\xi$-SG noise. 
Besides, each arm $i$ knows its true feature $x_i$ and the latent vector $\theta^*$. Moreover, all arms observe the learner's sequential sampling decisions $i_t$ and the corresponding reward realizations $r_{t,i_t}$ over time. We adopt a {\it full information} assumption that each arm knows 
the design of the learner's algorithm $\mathcal{A}$.
% , including its sampling rule and elimination criterion. 
This assumption is natural when the 
leaner's algorithm is publicly announced — such as a hiring 
platform that publishes its evaluation procedure, and 
represents the worst case for the learner since arms 
can tailor their misreporting strategies optimally 
against the known algorithm.

Without loss of generality, we assume that all true features $x_i$, all reported features $x_{t,i}$ and the latent vector $\theta^*$ are bounded in $\ell_2$-norm: $\|x_i\|_2 \le 1$, $\|x_{t,i}\|_2 \le 1$, and $\|\theta^*\|_2 \le 1$ for all $i \in [K]$ and $t \in [T]$; we also set $\xi=1$. 
\subsection{Arm Utility Function and Nash Equilibrium}
\label{armsandequil}
Let $\sigma_i$ denote arm $i$'s strategy, which maps its true feature $x_i$, latent parameter $\theta^*$, the algorithm $\mathcal{A}$ and the interaction history $\mathcal{H}_{t}=\left\{(x_{s,i_s},r_{s,i_s})\right\}_{s\leq t-1}$ to a reported feature $x_{t,i} \in \mathbb{R}^d$ at round $t$. We define the joint strategy profile of all arms as $\boldsymbol{\sigma} := (\sigma_1, \dots, \sigma_K)$, and the profile of all arms except $i$ as $\boldsymbol{\sigma}_{-i}$, i.e., $\boldsymbol{\sigma}_{-i}=(\sigma_1,\cdots,\sigma_{i-1},\sigma_{i+1},\cdots,\sigma_K)$. The {\it truthful reporting strategy,} where $x_{t,i} = x_i$ for all $t \in [T]$, is denoted by $\sigma_i^*$ for arm $i$. 
To maximize its probability of being selected as the best arm, each arm $i \in [K]$ strategically reports a manipulated feature vector $x_{t,i}$ according to its individual strategy $\sigma_i$. 
Let $i_{out,T}^{\mathcal{A},\boldsymbol{\sigma}}$ denote the arm identified as the best arm.
Our work studies the learner's BAI performance when the strategy profile $\boldsymbol{\sigma}$ reaches a Nash equilibrium. To formalize this strategic environment, we begin by defining the arm utility functions and the corresponding Nash equilibrium.

\begin{definition}{(Arm's Utility Function)} Given a learning algorithm $\mathcal{A}$, the utility function of each arm $i \in [K]$ under strategy profile $\boldsymbol{\sigma}$ is defined as the probability of being identified as the best arm, i.e., 
\begin{equation}
U_i(\mathcal{A}, \boldsymbol{\sigma}) := \mathbb{P}(i_{out,T}^{\mathcal{A},\boldsymbol{\sigma}} = i \mid \boldsymbol{\sigma}).
\end{equation}
\end{definition}

\begin{definition}{(Nash Equilibrium)}
    A strategy profile $\boldsymbol{\sigma}$ forms a Nash Equilibrium (NE) under algorithm $\mathcal{A}$ if, for any arm $i \in [K]$ and any alternative strategy $\sigma_i'$, the following holds:
\begin{equation}
U_i(\mathcal{A}, (\sigma_i, \boldsymbol{\sigma}_{-i})) \ge U_i(\mathcal{A}, (\sigma_i', \boldsymbol{\sigma}_{-i})).
\end{equation}
The set of all strategy profiles forming a Nash Equilibrium under $\mathcal{A}$ is denoted as $\text{NE}(\mathcal{A})$.
\end{definition} 
% Under the $(\zeta, T)$-PAC setting, a budget $T > 0$ is fixed. 
For fixed $\zeta\in \mathbb{R}^+$ and $T\in\mathbb{N}^+$, an algorithm $\mathcal{A}$ is said to be  $(\zeta, \delta_T)$-PAC {\emph (probably approximately correct)} if 
\begin{equation}
\mathbb{P} (\Delta_{i_{out,T}^{\mathcal{A},\boldsymbol{\sigma}}} \ge \zeta \mid 
\boldsymbol{\sigma} \in \text{NE}(\mathcal{A})) \leq \delta_T,
\end{equation}
where $\delta_T$ is a function of $T$.
Our goal is to design a 
$(\zeta, \delta_T)$-PAC algorithm $\mathcal{A}$ such that both $\zeta$ and $\delta_T$ are
as small as possible. This failure probability also reflects the hardness 
of BAI task under strategic manipulation.
We abbreviate $i_{out,T}^{\mathcal{A},\boldsymbol{\sigma}}$ as $i_{\mathrm{out}}$ when there is no ambiguity.
To further clarify the interaction among learner, arms and environment, we describe the dynamics in the following Protocol \ref{protocol}.
\begin{protocol}[h]
\label{protocol}
\caption{Interaction Protocol: BAI in Strategic Linear Bandits}
\begin{algorithmic}[1]
\STATE Learner picks algorithm $\mathcal{A}$ 
      given a fixed budget $T$ and arm set $[K]$.
\FOR{$t = 1, \ldots, T$}
    \STATE All arms report potentially gamed features $\{x_{t,1},\cdots,x_{t,K}\}$ based on $\boldsymbol{\sigma}$, learner selects arm $i_t \in [K]$ based on $\mathcal{H}_t$ and receives reward
           \begin{equation*}
               r_{t,i_t} := \langle \theta^*, x_{i_t} \rangle + \eta_{t}
           \end{equation*}
           where $\eta_t$ is zero-mean $\xi$-SG noise,
           $\theta^* \in \mathbb{R}^d$ is unknown latent vector, and the reward $r_{t,i_t}$
           is generated based on the true feature 
           $x_{i_t}$.%, instead of the reported $x_{t,i_t}$
\ENDFOR
\STATE Learner \textbf{outputs} the identified optimal arm $i_{\mathrm{out}}$ based on $\mathcal{H}_T$.
\end{algorithmic}
\end{protocol}
\section{THE MESHA ALGORITHM}

We now present the \underline{M}echanism-\underline{E}nforced \underline{S}equential \underline{HA}lving (MESHA) algorithm.
% and postpone detailed analysis to the appendices. 
% 
Designed for BAI in strategic linear bandits, MESHA divides the total budget $T$ into $R =\left\lceil \log_2 K \right\rceil$ epochs.
In each epoch $r\in[R]$, MESHA maintains an active set $\mathcal{A}_{r-1}$ and allocates the budget $n_r$ to sample each arm. Moreover, the design of MESHA rests on two core principles. First, to prevent being misled by potentially manipulated features, MESHA samples arms uniformly within each epoch. This ensures that the budget allocation is decoupled from the arms' reported features $x_{t,i}$. Second, to constrain arms' strategic behaviors, MESHA incorporates a mechanism called Grim Trigger Condition (GTC) at the end of each epoch. The GTC acts as a statistical consistency check between estimated rewards based on reported features and actual observed rewards generated with true features. Since all arms aim to maximize its utility, that is, the probability of being identified as the best arm,  they would attempt to increase the survival probability, and hence the threat of elimination via the GTC forces them to constrain their strategic deviations under an Nash Equilibrium formed by any $\boldsymbol{\sigma} \in \text{NE}(\text{MESHA})$. 
% To facilitate understanding, t
The pipeline of MESHA are provided in Algorithm \ref{alg:alg1} as well as  Figure \ref{fig:pipelineof} and also elaborated as below.

\begin{algorithm}[h]
\caption{\textbf{M}echanism-\textbf{E}nforced \textbf{S}equential \textbf{HA}lving (MESHA)} \label{alg:alg1}
\begin{algorithmic}[1]
\STATE {\bfseries Input:} total budget $T$, arm set $\mathcal{A} = [K]$, and target accuracy $\zeta$.% and parameter $\lambda$.
        \STATE {\bfseries Initialize:} $t_0=0$, $\mathcal{A}_0=\mathcal{A}$, $R=\lceil\log_2 K\rceil$, $S_0=\emptyset$, $\lambda=1$, and $\delta = \frac{\lceil\log_2 K\rceil^2}{T}
\exp\!\left(-\frac{T\zeta^2}{18Kd^2\log^2(1+T/
\lceil\log_2 K\rceil)}\right)$.
        \FOR{$r=1$ {\bfseries to} $R$}
        \STATE Pull each arm $i\in\mathcal{A}_{r-1}$ for \[
        n_r(i)=\left\lfloor\frac{T}{|\mathcal{A}_{r-1}|\lceil\log_2K\rceil}\right\rfloor
        \]
        times and update \[t_r=t_{r-1}+|\mathcal{A}_{r-1}|\cdot n_r(i),\quad S_r=\{t_{r-1},\dots,t_r-1\}.\]
        \STATE  Collect reported vectors $\{x_{t,1},\cdots,x_{t,K}\} \subset \mathbb{R}^d$ for all $t\in S_r$ and
        update statistics $V_{r,i}$ and $\hat{\theta}_{r,i}$ for arm $i\in\mathcal{A}_{r-1}$:\[
        V_{r,i}=\lambda I_d+\sum\limits_{t\in S_r}x_{t,i_t}x_{t,i_t}^\top\mathbf{1}(i_t=i), \quad 
        \hat{\theta}_{r,i}=V_{r,i}^{-1}\sum\limits_{t\in S_r}r_{t,i_t}x_{t,i_t}\mathbf{1}(i_t=i).
        \]
        \STATE Update the cumulative lower confidence bound of reward based on reported features for arm $i$:
        \[
        \text{RLCB}_{r,i}=\sum\limits_{t\in S_r}\Big(\langle\hat{\theta}_{r,i_t},x_{t,i_t}\rangle-\beta_{r,i}\|x_{t,i_t}\|_{V_{r,i_t}^{-1}}\Big)\cdot\mathbf{1}(i_t=i).
        \]
        \STATE Update the upper confidence bound of cumulative reward based on actual rewards for arm $i$:
        \[
            \text{AUCB}_{r,i}=\sum\limits_{t\in S_r}r_{t,i_t}\mathbf{1}(i_t=i)+\sqrt{2n_r(i)\log(4KR/\delta)}.
        \]
        \FOR{each arm $i\in\mathcal{A}_{r-1}$}
        \IF{$\text{RLCB}_{r,i}>\text{AUCB}_{r,i}$}
        \STATE Eliminate arm $i$ from set $\mathcal{A}_{r-1}$.
        \ELSE
        \STATE Update its estimated reward\[
        \hat{\mu}_{r,i}=\frac{\sum\limits_{t\in S_r}\langle\hat{\theta}_{r,i_t},x_{t,i_t}\rangle\cdot\mathbf{1}(i_t=i)}{n_r(i)}.
        \]
        \ENDIF
        \ENDFOR
        \STATE Update $\mathcal{A}_r$ as the set of $\min(|\mathcal{A}_{r-1}|,\lceil\frac{K}{2^r}\rceil)$ arms in $\mathcal{A}_{r-1}$ with largest $\hat{\mu}_{r,i}$.
        \ENDFOR
        \STATE{\bfseries Output} the single arm $i_{\text{out}}$ in set $\mathcal{A}_R$.
\end{algorithmic}
\label{alg1}
\end{algorithm}

\begin{figure}
    \centering
    \includegraphics[width=0.95\linewidth]{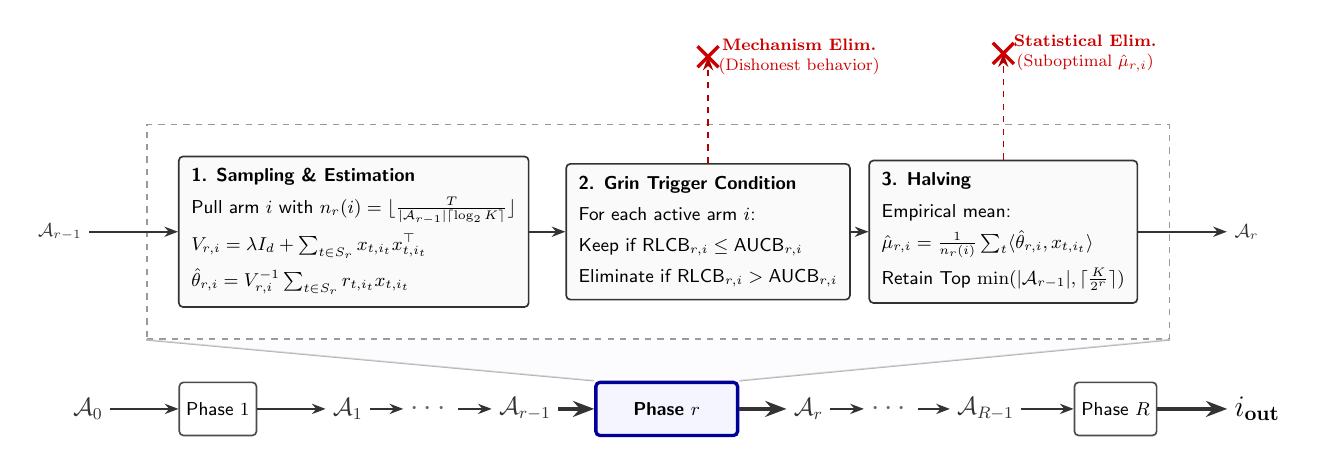}
    \caption{The pipeline of MESHA. In each epoch $r$, arms are first sampled uniformly and corresponding statistics are updated accordingly. The GTC check then eliminates arms exhibiting exceedingly dishonest behavior (Mechanism Elimination), and Sequential Halving removes the remaining suboptimal arms based on empirical means (Statistical Elimination). This process repeats over at most $R = \left\lceil \log_2 K \right\rceil$ epochs until a single arm $i_{\text{out}}$ is identified.}\label{fig:pipelineof}
\end{figure}
\subsection{Partition of Budget}

MESHA operates over a sequence of $R =\left \lceil \log_2 K \right\rceil$ epochs. In the beginning, the algorithm initializes the active arm set as $\mathcal{A}_0 = [K]$. In each epoch $r \in [R]$, the learner samples every active arm $i \in \mathcal{A}_{r-1}$ for an equal number of rounds, denoted by $n_r$.  The per-arm sampling budget for epoch $r$ is defined as:
\[
n_r = \left\lfloor \frac{T}{|\mathcal{A}_{r-1}| R} \right\rfloor
\]
where $|\mathcal{A}_{r-1}|$ is the number of active arms at the beginning of epoch $r$. As the algorithm progresses and the set $\mathcal{A}_{r-1}$ shrinks, the per-arm budget $n_r$ increases. This sampling rule allows MESHA to obtain increasingly accurate reward estimates for the remaining candidates in $\mathcal{A}_{r-1}$ during later epochs, and the partitioning design ensures that MESHA terminates and outputs one arm within the budget $T$. Such feasibility is formalized in the following lemma. 
\begin{lemma}
\label{lemma:feasible}
   $\sum\limits_{r=1}^R|\mathcal{A}_{r-1}|\cdot n_r(i)\mathbf{1}(i\in\mathcal{A}_{r-1})\leq T$.
\end{lemma}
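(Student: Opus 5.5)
The plan is to bound each epoch's consumption separately and then sum over the $R=\lceil\log_2 K\rceil$ epochs. The statement is somewhat loosely indexed: $n_r(i)$ does not depend on $i$, and the indicator only records that the pulls in epoch $r$ are made by arms in $\mathcal{A}_{r-1}$. So I read the left-hand side as the total number of pulls, $\sum_{r=1}^R |\mathcal{A}_{r-1}|\, n_r$, where $n_r=\lfloor T/(|\mathcal{A}_{r-1}|R)\rfloor$. This is exactly $t_R$ as updated in Algorithm~\ref{alg:alg1}.

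The key step is the elementary floor inequality $m\lfloor a/m\rfloor\le a$ for any $a\ge 0$ and integer $m\ge1$, applied with $m=|\mathcal{A}_{r-1}|$. Two facts need to be checked first.

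First, $\mathcal{A}_{r-1}$ is nonempty whenever epoch $r$ is executed. If it were empty, the corresponding term would simply be $0$ under the convention that no arm is pulled, so the bound holds trivially in that case.

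Second, $|\mathcal{A}_{r-1}|\ge1$ makes $n_r$ well defined. Then
\[
|\mathcal{A}_{r-1}|\, n_r = |\mathcal{A}_{r-1}|\left\lfloor \frac{T}{|\mathcal{A}_{r-1}|R}\right\rfloor \le |\mathcal{A}_{r-1}|\cdot\frac{T}{|\mathcal{A}_{r-1}|R}=\frac{T}{R}.
\]
This per-epoch bound holds regardless of how many arms the GTC check removed or of the arms' strategies. The sets $\mathcal{A}_{r-1}$ are random, but the inequality holds pathwise for every realization. GTC eliminations only shrink $|\mathcal{A}_{r-1}|$, and the argument never needs any particular size.

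Summing over $r=1,\dots,R$ gives $\sum_{r=1}^R |\mathcal{A}_{r-1}|\, n_r\le R\cdot T/R=T$, which is the claim. It also follows that $t_R\le T$, so the algorithm never exceeds its budget.

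There is no substantive obstacle here. The only care needed is notational: the per-arm budget $n_r(i)$ is the same for all active arms, and the indicator $\mathbf{1}(i\in\mathcal{A}_{r-1})$ should be understood as restricting the count to active arms. With that reading, each epoch's contribution is exactly $|\mathcal{A}_{r-1}|\, n_r$. If one instead reads the sum as summing over $i$ as well, the count is the same, $\sum_{i\in\mathcal{A}_{r-1}} n_r = |\mathcal{A}_{r-1}|\, n_r$, so the same bound applies.
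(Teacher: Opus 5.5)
Your proof is correct and matches the paper's argument: you bound each epoch's pulls by $|\mathcal{A}_{r-1}|\lfloor T/(|\mathcal{A}_{r-1}|R)\rfloor \le T/R$ and sum over the $R$ epochs. Your remarks on the loose indexing of $n_r(i)$ and the possibly empty active set are harmless clarifications that the paper's proof leaves implicit.
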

\begin{proof}[Proof of Lemma \ref{lemma:feasible}]
    By definition of $\mathcal{A}_{r-1}$ and $n_r(i)$ in line 18 and line 7 in algorithm \ref{alg:alg1}, we have
\[
    \sum\limits_{r\in[R],i\in\mathcal{A}_{r-1}}|\mathcal{A}_{r-1}|\cdot n_r(i)=\sum\limits_{r=1}^R|\mathcal{A}_{r-1}|\cdot\lfloor\frac{T}{|\mathcal{A}_{r-1}|\lceil\log_2K\rceil}\rfloor
        \leq \sum\limits_{r=1}^R|\mathcal{A}_{r-1}|\cdot\frac{T}{|\mathcal{A}_{r-1}|\lceil\log_2K\rceil}
        =\sum\limits_{r=1}^R\frac{T}{\lceil\log_2K\rceil}
        =T.
\]
\end{proof}
\subsection{Estimation of Active Arms}

At the end of each epoch $r \in [R]$,
to isolate the potential effects of strategic arms, MESHA maintains a unique estimator $\hat{\theta}_{r,i}$ for each arm $i \in \mathcal{A}_{r-1}$, which depends merely on the interaction history with arm $i$ during epoch $r$:
\[ \hat{\theta}_{r,i} = V_{r,i}^{-1} \sum_{t \in S_r} r_{t,i_t} x_{t,i_t} \mathbf{1}(i_t = i), \]
where $V_{r,i} = \lambda I_d + \sum_{t \in S_r} x_{t,i_t} x_{t,i_t}^\top \mathbf{1}(i_t = i)$ and $S_r$ is the set of all rounds within epoch $r$. Based on these individual estimators, the estimated mean of arm $i$ is computed as follows:
\[ \hat{\mu}_{r,i} = \frac{\sum_{t \in S_r} \langle \hat{\theta}_{r,i}, x_{t,i_t} \rangle \cdot \mathbf{1}(i_t = i)}{n_r(i)}. \]
MESHA estimates rewards for each arm separately to decouple the potential strategic behaviors of different arms, which guarantees MESHA can detect the strategic level of each arm individually. This serves as the basis for the success of subsequent epoch-wise Grim Trigger Condition (GTC) mechanism.
\subsection{Epoch-wise Grim Trigger Condition}

The {\em epoch-wise} GTC in MESHA exploits a fundamental tension in strategic manipulation. 
An arm that inflates its reported features to appear more attractive 
cannot make its actual rewards look consistent with its authentic features. 
Specifically, if arm $i$ reports an exaggerated feature vector $x_{t,i}$, 
the learner's estimator $\hat{\theta}_{r,i}$ would predict high rewards 
for arm $i$. However, true rewards depend merely on the unobserved true feature $x_i$ 
and are unaffected by the manipulation. This creates a detectable gap: 
estimated cumulative rewards based on reported features might exceed observed cumulative rewards.

To spot arms' severe strategic behaviors, the {\em epoch-wise} GTC in MESHA detects the gap by comparing two confidence 
bounds. At the end of each epoch, the learner computes a lower confidence bound $\textbf{RLCB}_{r,i}$ on rewards that are predicted using reported features, and an upper confidence bound $\textbf{AUCB}_{r,i}$ on rewards that are actually observed for all arm $i\in\mathcal{A}_{r-1}$. If the former exceeds the latter for one arm, the arm's reported features seem to be inconsistent with its actual reward. In this case, it is suspicious that the arm behaves dishonestly and the learner hence eliminates this arm immediately and permanently. This GTC design is formalized in Definition \ref{def:gtc}. 
Crucially, when the round-wise GTC check in OptGTM~\cite{kleine2024strategic} depends on the full observation history, the GTC check in MESHA operates epoch-wise, evaluating only the history of the recent epoch. This decoupling mechanism also serves as a cornerstone for MESHA's performance guarantees.

\begin{definition}{(Grim Trigger Condition)}
\label{def:gtc}
    Let $\beta_{r,i}=\sqrt{d\log\big(\frac{1+n_r(i)}{\delta_r}\big)}+1$. At the end of each epoch $r$, if arm $i$ satisfies
    \[
   \text{RLCB}_{r,i} > \text{AUCB}_{r,i},
    \]
    MESHA will eliminate arm $i$ from the active set $\mathcal{A}_r$ and exclude it from all subsequent rounds, where
   \begin{align*}
       \text{RLCB}_{r,i}&:=\sum\limits_{t\in S_r,i_t=i}\langle\hat{\theta}_{r,i},x_{t,i}\rangle-\beta_{r,i}\|x_{t,i}\|_{V_{r,i}^{-1}},\\
       \text{AUCB}_{r,i}&:=\sum_{t\in S_r,i_t=i}r_{t,i_t}+\sqrt{2n_{r}(i)\log(2/\delta_r)}.
   \end{align*}
\end{definition}

The epoch-wise GTC imposes a survival constraint on the self-interested arms. Since an arm's utility immediately drops to zero upon elimination, rational arms are incentivized to bound their strategic deviations in order to pass the GTC check. 
Besides, we note that the learner's observed rewards should be close to the mean rewards. Formally speaking, with probability $1-\delta$, the event
\[\mathcal{E}^{\mathrm{noise}} := \left\{\left|\sum_{t \in S_r, i_t=i}
(r_{t,i_t} - \mu_i)\right| \leq 
\sqrt{\frac{n_r(i)\log(2KR/\delta)}{2}},\ 
\forall r\in[R],\ \forall i \in \mathcal{A}_{r-1}\right\}\]
holds. Conditioned on this event, we can characterize the equilibrium behavior of arms as follows.

\begin{restatable}{lemma}{lemGTCpass}
\label{lemma:GTCpass}
Fix any strategy profile $\boldsymbol{\sigma} \in \mathrm{NE}(\mathrm{MESHA})$. For any epoch $r \in [R]$ and any arm $i \in \mathcal{A}_{r-1}$, arm $i$ must pass the GTC check at the end of epoch $r$ conditioned on event $\mathcal{E}^{\mathrm{noise}}$.
\end{restatable}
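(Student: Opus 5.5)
The argument is by contradiction combined with a unilateral-deviation comparison. Suppose that under some $\boldsymbol{\sigma}\in\mathrm{NE}(\mathrm{MESHA})$, on $\mathcal{E}^{\mathrm{noise}}$, an active arm $i\in\mathcal{A}_{r-1}$ fails the GTC at the end of epoch $r$. Elimination is permanent, so on every such history we have $i_{\mathrm{out}}\neq i$. The plan is to build an alternative strategy $\sigma_i'$ that agrees with $\sigma_i$ on every history except those leading to a GTC failure. On those, $\sigma_i'$ switches to truthful reporting $x_{t,i}=x_i$ for the whole epoch and every later one. The goal is to show $U_i(\mathrm{MESHA},(\sigma_i',\boldsymbol{\sigma}_{-i}))>U_i(\mathrm{MESHA},\boldsymbol{\sigma})$, which contradicts the NE property. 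One point needs care: failure is only revealed at the end of the epoch, and $\sigma_i$ may depend on the history. I would therefore let $\sigma_i'$ decide at the start of epoch $r$ whether the continuation of $\sigma_i$ risks failure; since arm $i$ knows $\theta^*$, $x_i$ and $\mathcal{A}$, it can evaluate this probability. Whenever that probability is positive, $\sigma_i'$ plays truthfully from that point on.

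The key technical step is a lemma: under truthful reporting, arm $i$ passes the GTC on $\mathcal{E}^{\mathrm{noise}}$. With $x_{t,i}=x_i$, we get $V_{r,i}=\lambda I+n_r x_ix_i^\top$ and $\hat\theta_{r,i}=V_{r,i}^{-1}x_i\sum_t r_{t,i}$. Writing $s=\|x_i\|_2^2$, this gives
\[
\langle\hat\theta_{r,i},x_i\rangle=\frac{s}{\lambda+n_r s}\sum_t r_{t,i},
\qquad\text{so}\qquad
n_r\langle\hat\theta_{r,i},x_i\rangle=\frac{n_r s}{1+n_r s}\sum_t r_{t,i}.
\]
This equals $\sum_t r_{t,i}$ minus a shrinkage term of absolute value at most $|\sum_t r_{t,i}|/(1+n_r s)$. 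On $\mathcal{E}^{\mathrm{noise}}$ we have $\sum_t r_{t,i}=n_r\mu_i+O(\sqrt{n_r\log(KR/\delta)})$ with $|\mu_i|\le1$. The bias term is then at most $n_r|\mu_i|/(1+n_rs)+O(\sqrt{n_r\log})$, and we must show it is dominated by the two slacks: $n_r\beta_{r,i}\|x_i\|_{V_{r,i}^{-1}}=n_r\beta_{r,i}\sqrt{s/(1+n_rs)}$ on the RLCB side, and $\sqrt{2n_r\log(\cdot)}$ on the AUCB side. For $s$ bounded away from zero, the RLCB slack is of order $\beta\sqrt{n_r}$, which absorbs the $O(1/s)$ bias. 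For small $s$, the bias is bounded by $n_r|\mu_i|\le n_r\|\theta^*\|\sqrt{s}$, and this is absorbed by $n_r\beta\sqrt{s/(1+n_rs)}\ge n_r\sqrt{s}\cdot\min(1,\beta/\sqrt{1+n_rs})$, after splitting on whether $n_rs\lesssim\beta^2$. The constant $+1$ in $\beta_{r,i}$ and $\lambda=1$ are exactly what cover the $\|\theta^*\|\le1$ regularization bias. I expect this case analysis, and checking that the $\log$ factors in $\mathcal{E}^{\mathrm{noise}}$ are no larger than those in AUCB, to be the main computational obstacle.

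Given the lemma, the deviation comparison goes as follows. Switching to truthful reporting affects only arm $i$'s own statistics, because estimates are per-arm and sampling is uniform, so the other arms' reports and statistics keep the same law. Hence $\sigma_i'$ converts every failure history into survival of epoch $r$, and it does not change any history where $\sigma_i$ already survives. Surviving gives a strictly positive probability of eventually being output. For instance, with positive probability under sub-Gaussian noise, arm $i$'s truthful estimate ranks in the top half in every remaining epoch. It keeps passing the GTC by the lemma, though this positivity claim needs some care for bounded noise. Therefore $U_i$ strictly increases whenever failure had positive probability, contradicting $\boldsymbol{\sigma}\in\mathrm{NE}$. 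So, on $\mathcal{E}^{\mathrm{noise}}$, failure cannot occur for any active arm in any epoch.
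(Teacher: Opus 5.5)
The step that fails is the deviation comparison. Your $\sigma_i'$ switches to truthful reporting at the start of epoch $r$ on every history $h$ from which $\sigma_i$ fails with positive probability. But pass/fail is settled by rewards realized after arm $i$'s last report of the epoch. So the switch also overrides $\sigma_i$ on every continuation of $h$ in which $\sigma_i$ would have passed. Your sentence that $\sigma_i'$ ``does not change any history where $\sigma_i$ already survives'' is therefore false for your own construction, and in general no deviation can act on the failing continuations alone.

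What you would actually need is $\mathbb{P}(i_{\mathrm{out}}=i\mid h,\sigma_i')>\mathbb{P}(\mathrm{pass}\mid h,\sigma_i)\,\mathbb{P}(i_{\mathrm{out}}=i\mid \mathrm{pass},h,\sigma_i)$, and nothing in the proposal supports it. Truthful reporting only scales $\sum_t r_{t,i}$ by $n_rs/(1+n_rs)<1$. Meanwhile $\sigma_i$ may be a gamble that, when it passes, pushes $\hat\mu_{r,i}$ toward the GTC ceiling $\mu_i+\omega_{r,i}$ of Lemma~\ref{lemma:gtc_bound}, and nothing excludes such a gamble being a best response. More fundamentally, suppose arm $i$ can arrange to fail only on continuations where it would not be output anyway. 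Then modifying a best response to fail exactly there leaves $U_i$ unchanged, so the modified strategy is also a best response and can appear in an NE. A best-response argument alone therefore cannot give the almost-sure claim. What your argument, and equally the paper's, establishes is weaker: from a reachable start-of-epoch history, arm $i$ cannot fail with conditional probability one. The paper gets this by writing $U_i(\mathrm{MESHA},(\sigma_i,\boldsymbol{\sigma}_{-i})\mid\mathcal{H}_{r-1},\mathcal{E}^{\mathrm{noise}})=0$, i.e.\ by treating failure as certain given $\mathcal{H}_{r-1}$, so it does not resolve the timing issue you spotted either.

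Three further remarks. First, your claim that the other arms' law is unchanged is unjustified. Each $\sigma_j$ maps $\mathcal{H}_t$, which records $x_{s,i_s}$ at every pull of arm $i$, so the other arms can react to your switch. This leaves arm $i$'s own GTC outcome untouched, but it undermines your positivity step. Second, that positivity step also mixes conditional and unconditional utility. NE constrains the unconditional $U_i$, whereas on $\mathcal{E}^{\mathrm{noise}}$ arm $i$'s truthful $\hat\mu_{r,i}$ is pinned near $\mu_i$, so its conditional chance of being output can be zero when gaps are large. Third, on the positive side, your truthful-reporting lemma is correct. It is sharper than the paper's unverified claim that a mean-matching strategy passes; for time-varying reports that claim would also need a confidence-ellipsoid event beyond $\mathcal{E}^{\mathrm{noise}}$. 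Your lemma needs no case split. With $S=\sum_t r_{t,i}$, the check is equivalent to $-S/(1+n_rs)\le n_r\beta_{r,i}\sqrt{s/(1+n_rs)}+\sqrt{2n_r\log(2/\delta_r)}$. This is trivial if $S\ge 0$. If $S<0$, it follows on $\mathcal{E}^{\mathrm{noise}}$ from $|\mu_i|\le\sqrt{s}$ and $\beta_{r,i}\ge 1\ge(1+n_rs)^{-1/2}$. So truthful reporting passes deterministically on $\mathcal{E}^{\mathrm{noise}}$.
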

\subsection{Failure probability bound of MESHA}
Given a fixed budget $T$, we now characterize the $(\zeta, T)$-PAC failure probability bound for MESHA assuming a corresponding Nash Equilibrium.

\begin{restatable}{theorem}{thmErrorBound}
\label{theorem:errorbound}
If the strategy profile $\boldsymbol{\sigma}$ forms a Nash Equilibrium under MESHA, then for any target accuracy $\zeta$ satisfying
\[\zeta \geq \frac{12d\sqrt{K\log_2K}\log(1+T)}{\sqrt{T}},\]
MESHA is with %satisfies
\[\mathbb{P}\big(\Delta_{i_{\mathrm{out}}}\geq\zeta\big) \leq \frac{\lceil\log_2K\rceil^2}{T}\exp\left(-\frac{T}{18\frac{K}{\zeta^2}d^2\log_2\!\left(1+\frac{T}{\lceil\log_2 K\rceil}\right)}\right).\]
\end{restatable}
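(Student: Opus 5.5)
Our plan is to combine Lemma~\ref{lemma:GTCpass} with a Sequential Halving style elimination argument, in which the per-arm confidence width is replaced by a bound that tolerates bounded misreporting.

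\textbf{Step 1: Estimation accuracy of surviving arms.} We work on the intersection of $\mathcal{E}^{\mathrm{noise}}$ with a self-normalized concentration event for each per-arm, per-epoch ridge estimator. By Lemma~\ref{lemma:GTCpass}, every active arm passes the GTC, so $\text{RLCB}_{r,i}\le\text{AUCB}_{r,i}$. Dividing by $n_r(i)$ and using $\mathcal{E}^{\mathrm{noise}}$ gives
\[\hat\mu_{r,i}\le \mu_i + \frac{1}{n_r(i)}\sum_{t\in S_r,i_t=i}\beta_{r,i}\|x_{t,i}\|_{V_{r,i}^{-1}} + c\sqrt{\log(KR/\delta)/n_r(i)}.\]
The elliptical potential lemma bounds $\sum\|x_{t,i}\|_{V^{-1}}\le\sqrt{2d\,n_r(i)\log(1+n_r(i)/d)}$. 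Together with $\beta_{r,i}\approx\sqrt{d\log(n_r/\delta)}$, the over-estimation is then of order $d\log(1+T)\sqrt{\log(1/\delta)/n_r}$. This is the source of the $d^2\log$ factor.

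For the reverse direction, we need a lower bound on the best arm's estimate. The optimal arm has no incentive to under-report, but an equilibrium argument alone does not force it to be truthful. We will therefore control $\hat\mu_{r,i^*}$ from below through the fit of the ridge regression. Because $\hat\theta_{r,i}$ is a least-squares fit to arm $i$'s own rewards, the average fitted value $\hat\mu_{r,i}$ is close to the average observed reward, up to the ridge bias $\lambda\|\hat\theta\|$ and the same elliptical term. This holds whatever the features are, since projecting the reward vector onto the span of the features nearly preserves the mean when the features are nonzero. We expect this symmetric two-sided bound, $|\hat\mu_{r,i}-\mu_i|\le \epsilon_r$ with $\epsilon_r\asymp d\log(1+T)\sqrt{\log(KR/\delta)/n_r}$, to be the main obstacle, because the reported features are adversarial. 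If the projection argument fails, we will first try using the optimal arm's best-response to argue that reporting a constant feature aligned with $\theta^*$ is weakly dominant.

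\textbf{Step 2: Halving analysis.} Given the two-sided accuracy, we follow Karnin et al. In epoch $r$, with $n_r\ge T/(|\mathcal{A}_{r-1}|R)-1$ and $|\mathcal{A}_{r-1}|\le K/2^{r-1}$, we take a union bound so that the best arm's estimate, and those of the arms with gap at least $\zeta$, are all within $\zeta/2$ accuracy. Accuracy $\epsilon_r\le\zeta/2$ holds once $n_r\gtrsim d^2\log^2(1+T)\log(1/\delta)/\zeta^2$. The assumed lower bound on $\zeta$ ensures this is consistent with the budget split, with the choice of $\delta$ given in Algorithm~\ref{alg:alg1}. Inverting this relation gives $\delta$ of the stated form $\frac{R^2}{T}\exp(-T\zeta^2/(18Kd^2\log(\cdot)))$. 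On the good event, $i^*$ is never ranked below $\lceil K/2^r\rceil$ arms with gap at least $\zeta$, and it is never GTC-eliminated, by Lemma~\ref{lemma:GTCpass}. So either $i^*$ is output or an arm with gap below $\zeta$ is output.

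\textbf{Step 3: Union bound.} The failure probability is bounded by $\mathbb{P}(\mathcal{E}^{c})$, summed over $R$ epochs and $K$ arms. The logarithmic factors absorbed into $\delta$ (the $KR$ inside the logarithm and the $R$ epochs) produce the prefactor $\lceil\log_2K\rceil^2/T$. Matching the constant $18$ is routine bookkeeping.
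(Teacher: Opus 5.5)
Your upper-bound half (GTC pass, then $\mathcal{E}^{\mathrm{noise}}$, then the potential bound) and your halving/union-bound skeleton agree with the paper. The gap is the lower bound on $\hat\mu_{r,i^*}$, which you rightly flag as the main obstacle: the projection argument you propose for it is false. Write $X$ for the $n\times d$ matrix of arm $i$'s reported features in epoch $r$, $y$ for its rewards, and $n=n_r(i)$. Then $\hat\mu_{r,i}=\frac{1}{n}\mathbf{1}^\top X(X^\top X+\lambda I_d)^{-1}X^\top y$. This tracks the reward average only if the all-ones vector lies (nearly) in the column span of $X$, i.e.\ only if some $v$ satisfies $\langle v,x_{t,i}\rangle\approx 1$ on every pull. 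Reported features need not permit this. Suppose $i^*$ reports $u\neq 0$ on half of its pulls and $-u$ on the other half. Then $\hat\mu_{r,i^*}=\frac{1}{n}\langle\hat\theta_{r,i^*},\sum_{t}x_{t,i^*}\rangle=0$ whatever the rewards, although no feature is zero. Moreover $\mathrm{RLCB}_{r,i^*}<0$ for this report, so it passes the GTC on $\mathcal{E}^{\mathrm{noise}}$ whenever $\mu_{i^*}\ge 0$. Hence no feature-independent two-sided bound exists, the GTC cannot supply one, and the lower bound must come from incentives. (Relatedly, your good event cannot include a self-normalized event around $\theta^*$ for the per-arm ridge estimators, since that fails under misreporting, cf.\ Remark~\ref{remark:est_event}. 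You do not need it: the upper bound uses only the GTC inequality, $\mathcal{E}^{\mathrm{noise}}$ and the potential bound.)

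The paper obtains $\hat\mu_{r,i}\ge\mu_i-\omega_{r,i}$ through exactly the equilibrium argument you set aside. That argument needs no truthfulness, only the exclusion of self-harming underestimates. Suppose that under some $\boldsymbol{\sigma}\in\mathrm{NE}(\mathrm{MESHA})$ an active arm had $\hat\mu_{r,i}<\mu_i-\omega_{r,i}$ with positive probability on a reachable history, on $\mathcal{E}^{\mathrm{noise}}$. It could then unilaterally switch to a report that still passes the GTC (the paper appeals to Remark~\ref{remark:est_event} here) and yields an estimate in $[\mu_i-\omega_{r,i},\mu_i+\omega_{r,i}]$. That raises its chance of surviving the halving step, hence its utility, contradicting equilibrium. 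Your fallback does not replace this step. Weak dominance of some report only forces an equilibrium strategy to match the best-response utility, so you would still have to show that an underestimate which causes $i^*$ to be halved out is strictly unprofitable. (Alignment with $\theta^*$ is also irrelevant: any nonzero constant report $x$ gives $\hat\mu_{r,i}=\frac{n\|x\|_2^2}{\lambda+n\|x\|_2^2}\cdot\frac{1}{n}\sum_{t\in S_r,i_t=i}r_{t,i_t}$, which is the natural deviation to use.) With that lemma in place, your Steps 2 and 3 go through as in the paper.
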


Theorem~\ref{theorem:errorbound} shows the failure probability of MESHA decays exponentially with 
$T$, indicating that MESHA preserves the convergence behavior 
of non-strategic SOTA algorithms in the strategic environment. 
Moreover, the factors $K$ and $d^2\log T$ in the 
exponent denominator are due to the strategic arms as elaborated in the Remark \ref{rem:tightness}.

\begin{remark}[Origins of $K$ and $d^2\log T$ in the Exponent]
\label{rem:tightness}
First, the $K$ factor arises from the possibility of 
incorrect elimination of the optimal arm $i^*$ during epoch 
$r=1$. Specifically, under uniform sampling, each arm is 
pulled $n_1 = \lfloor T/(KR) \rfloor$ times during $r=1$. 
The optimal arm $i^*$ may be eliminated during epoch $r=1$ if some suboptimal arm 
$i \neq i^*$ achieves a higher estimated mean reward than $i^*$, 
i.e., $\hat{\mu}_{1,i} \geq \hat{\mu}_{1,i^*}$. Applying a 
union bound over the probalities of such events for all $K-1$ suboptimal arms, the failure 
probability due to elimination in epoch $r=1$ scales as $K \cdot 
\exp(-\zeta^2/\omega_{1,i}^2)$. As substituting $n_1 \propto T/K$ 
implies  $\omega_{1,i}^2 \propto K/T$, the failure probability 
becomes $\exp(-T\zeta^2/K)$, which explicitly yields the $K$ 
factor. Since epoch $r=1$ is with the smallest per-arm budget 
$n_1$ and $\omega_{r,i}$ is decreasing in $n_r$, the failure probability due to epoch $1$
results as the bottleneck of analyzing MESHA.

Second, the $d^2\log T$ factor arises from the epoch-wise GTC check step: the confidence radius $\beta_{r,i} = O(\sqrt{d\log n_r})$ roots from the application of elliptical 
potential lemma introduced by \cite{abbasi2011improved}, and the trace bound 
$\sqrt{\sum_{t: i_t=i}\|x_{t,i}\|^2_{V_{r,i}^{-1}}} \leq 
\sqrt{d}$ roots from the definition of $V_{r,i}$. When we bound the strategic deviation of arms under Nash Equilibrium, these two terms leads to the $(d^2\log T)$ factor. Both terms persist under any reporting strategy and are mutually independent.
In contrast, OD-LinBAI achieves a tighter $d$  
factor in non-strategic environment~\cite{yang2022minimax} because under G-optimal design, the optimal allocation 
$\pi$ satisfies $\max_i x_i^\top V(\pi)^{-1} x_i \leq d$ by 
definition, which directly bounds the prediction uncertainty 
without the additional $\sqrt{d}$ from the trace identity. More explanation can be found in Appendix~\ref{appendix:tightness}. 

However, as we will show in Section~\ref{section:lowerbound}, 
G-optimal design is fundamentally vulnerable to strategic 
manipulation and its theoretical guarantee would collapse.%does not hold anymore.
\end{remark}
\subsection{Performance Evaluation of MESHA}
To evaluate MESHA's BAI performance, we compare the failure probability bound of MESHA established in Theorem~\ref{theorem:errorbound} against the bounds of
two natural baselines, as summarized in 
Table~\ref{table:comparison}. 
We evaluate these methods based on their failure probabilities, despite nuances in their underlying environments and objectives. 
% While OD-LinBAI remains inherently vulnerable to strategic manipulation, 
%
First, SH is immune to geometric distortions and serves as a feature-agnostic baseline. 
Second, OD-LinBAI \cite{yang2022minimax} — the current SOTA algorithm for BAI in non-strategic linear bandits — although inherently vulnerable to strategic manipulation, serves as a feature-aware baseline.
% : it is the current SOTA 
% algorithm for BAI in non-strategic linear bandits. 
Our MESHA algorithm, by contrast, is carefully engineered to counter strategic behaviors. 
Furthermore, when SH and OD-LinBAI aim to identify the exact optimal arm $i^*$, MESHA accommodates arms' strategic behaviors to robustly identify a $\zeta$-optimal arm.
Nevertheless, 
comparing MESHA against the two baselines isolates the two key dimensions of the BAI task in strategic bandits: the benefit of exploiting the arm features and the cost of mitigating strategic behavior.

\begin{table}[ht]
\renewcommand{\arraystretch}{1.8}
\centering
\caption{Theoretical Comparison between MESHA and Baselines}
\label{table:comparison}
\begin{tabular}{l|c|c|c|c}
\hline
\textbf{Algorithm} & \textbf{Setting} & \textbf{Sampling Rule} & 
\textbf{Exponential Decay Rate} & \textbf{Strategic Robustness} \\ 
\hline
\textbf{MESHA} (Ours) & Strategic Linear & Uniform & 
$\exp \left( -\frac{T}{(K/\zeta^2) \cdot d^2 \log_2 T} \right)$ & 
Yes (under NE) \\ 
\hline
\textbf{Classical SH} \cite{karnin2013almost} & Stochastic & Uniform & 
$\exp \left( -\frac{T}{H_2 \log_2 K} \right)$ & 
Yes (feature-free) \\ 
\hline
\textbf{OD-LinBAI} \cite{yang2022minimax} & Linear & $G$-optimal & 
$\exp \left( -\frac{T}{d \log_2 d} \right)$ & 
No (truthful arms only) \\ 
\hline
\end{tabular}
\end{table}

\textbf{Comparison with Classical SH.} Classical SH achieves 
a failure probability of $\exp\big(-T/(H_2\log_2 K)\big)$ 
where $H_2 = \max_i\, i\Delta_{(i)}^{-2}$, and is naturally 
robust to strategic manipulation since it has no access to feature 
vectors. However, $H_2$ can be significantly larger 
than $(K/\zeta^2)d^2$ when there exists a near-optimal 
arm with a significantly small suboptimality gap in the instance. Specifically, consider 
an instance with $K=16$, $d=4$, $\zeta=0.1$, where arm $1$ is 
the optimal arm, arm $2$ is with suboptimality gap 
$\Delta_2 = \zeta/(2d\sqrt{K}) \approx 0.003$, and the remaining 
arms $i \in \{3,\ldots,16\}$ are with suboptimality gap 
$\Delta_i = 2\zeta = 0.2$. In this case, 
$H_2 = \max(1 \cdot \Delta_2^{-2}, 2\cdot(2\zeta)^{-2}) 
= \Delta_2^{-2} = 102400 \gg 25600 = (K/\zeta^2)d^2$, indicating that the failure probability of SH can be far higher than that of MESHA. We note that since $\Delta_2 < \zeta$, this probability gap may partially result from that identification of arm $2$ is regarded as success for MESHA but not for SH.
Nevertheless, in scenarios where the $\zeta$-optimal arms are acceptable, MESHA achieves a higher success probability than SH by exploiting the strategic arm features.

\textbf{Comparison with OD-LinBAI.} OD-LinBAI achieves a 
smaller failure probability $\exp\big(-T/(d\log_2 d)\big)$ because 
under G-optimal design, the optimal allocation $\pi$ satisfies 
$\max_i x_i^\top V(\pi)^{-1}x_i \leq d$ by definition, 
which naturally bounds the prediction uncertainty with a single $d$ 
factor rather than $d^2$. However, this advantage relies 
heavily on optimal design-based sampling and is fundamentally 
vulnerable to the strategic setting: as to be elaborated in 
Section~\ref{section:lowerbound}, suboptimal arms can coordinate 
their reported features so that the optimal arm's feature vector is trapped within the cone spanned by the suboptimal features, causing G-optimal design to allocate zero 
pulls to the optimal arm while suboptimal arms can pass the GTC check and the optimal arm would be missed by the learner; this justifies the uniform sampling rule embedded in MESHA.
\section{Further Discussion on Algorithms for Linear Bandits}
\label{section:lowerbound}

Having introduced MESHA, a natural question arises: why not simply equip the current SOTA linear BAI algorithm \cite{yang2022minimax} with a GTC mechanism or apply OptGTM~\cite{kleine2024strategic} directly for BAI instead? We now answer this question by showing that such approaches would fail from the following fundamental perspectives. First, without any mechanism design, strategic 
arms can directly corrupt the learner's estimator by misreporting their features. Second, even when equipped with a GTC check, strategic arms can still evade it through a \textit{starvation attack} --- 
a coordinated manipulation where suboptimal arms misreport their features to ensure that the optimal arm receives zero pulls under $G$-optimal design-based sampling rules, while remaining consistent with their own 
rewards under the GTC check. Furthermore, we also see that the success of round-wise GTC check used in OptGTM \cite{kleine2024strategic} relies on a strong assumption which sidesteps the equilibrium analysis. In contrast, our epoch-wise GTC check is designed to actively constraint the arms' behavior, and our analysis only build on a strictly weaker and more reliable assumption.
These altogether justify both the uniform sampling rule and the GTC mechanism integrated in MESHA.

\subsection{Vulnerability of OD-LinBAI without Mechanism Design}

The current SOTA linear BAI algorithm OD-LinBAI \cite{yang2022minimax} 
relies on $G$-optimal design-based sampling rule to allocate pulls based on the 
reported feature vectors of arms. This sampling strategy 
assumes the reported features reflect the true underlying 
geometry of the arms. When arms are strategic, however, this 
assumption may no longer hold: arms can misreport their features to 
manipulate the sampling process and corrupt the learner's 
estimator. We illustrate this through the following example.

\begin{example}
\label{example:example1}
Consider a $2$-armed linear bandit. The true 
action space is defined as $\mathcal{X}=\{x_1, x_2\}$, where 
$x_1=(0.9, 0)$ and $x_2=(0.1, 0)$. The unknown environment 
parameter is $\theta^*=(1,0)$. At any round $t$, pulling arm 
$i_t$ yields a noisy reward $r_{t,i_t} = \langle\theta^*, 
x_{i_t}\rangle + \eta_t$, where $\eta_t \sim 
\text{Unif}(-0.05, 0.05)$. Consequently, the true expected 
rewards are $\mu_1 = 0.9$ (the optimal arm) and $\mu_2 = 0.1$. 
Suppose both arms strategically report identical features to  a \textbf{G-optimal design-based learner}: $x_{t,1}=x_{t,2}=(1,0)$ for all $t\in[T]$. Then the 
learner is unable to distinguish the two arms, and the failure 
probability approaches $1/2$.
\end{example}

\begin{proof}
Observing the reported set $\tilde{\mathcal{X}}=\{\tilde{x}_1=
(1,0), \tilde{x}_2=(1,0)\}$, any sampling rule based on 
$G$-optimal design will allocate pulls uniformly between the two 
arms. The learner computes the global least-squares estimator 
$\hat{\theta}_t$ as:
\[
\hat{\theta}_t=\Big(\sum_{s=1}^t x_{s,i_s}x_{s,i_s}^\top
\Big)^{-1}\sum_{s=1}^t x_{s,i_s} r_{s,i_s}.
\]
Since the reported features have a non-zero component only in the 
first dimension, the first element of the estimator simplifies to:
\[
\hat{\theta}_t^{(1)}=\frac{1}{t}\sum_{s=1}^t r_{s,i_s}.
\]
By the Law of Large Numbers, as $t \to \infty$, 
$\hat{\theta}_t^{(1)} \to \mathbb{E}[r] =0.9\times 0.5+0.1\times 0.5=0.5$. Thus, the 
empirical mean of both arms approaches $0.5$ when the algorithm proceeds, and the learner 
is unable to distinguish the optimal arm, resulting in 
an failure probability approaching $1/2$.
\end{proof}

In Example \ref{example:example1}, the manipulation succeeds 
because the two arms report identical features, causing \textbf{the 
optimal design-based algorithm}'s estimator to blend the rewards of both arms. The resulting estimator converges to a weighted average of 
both arms' true rewards, rather than the true reward of 
each individual arm, making identification impossible. This shows that without any 
mechanism design, even a trivial misreport is 
sufficient to cause identification failure.
\subsection{Vulnerability of OD-LinBAI with Mechanism Design} \label{sec:odlinbai_gtc} %Under the Grim Trigger Condition  

The failure of OD-LinBAI in Example \ref{example:example1} suggests that some 
form of consistency check is needed to detect manipulation. A 
natural candidate is OD-LinBAI-GTC, which combines OD-LinBAI with the same epoch-wise GTC used in MESHA. The pseudocode is shown in Algorithm \ref{alg:alg2}. 
However, we find that strategic arms can still pass this check 
through a \textit{starvation attack}, by exploiting a structural 
property of $G$-optimal design: if the optimal arm's reported 
feature falls within the cone spanned by the suboptimal arms' 
reported features, then $G$-optimal design allocates zero pulls 
to the optimal arm. We formalize this phenomenon in the 
following theorem and postpone its proof to Appendix \ref{appendix:C}.

\begin{restatable}{theorem}{thmMisthm}
\label{theorem:misthm}
Let $x_1,\cdots,x_K\in\mathbb{R}^d$. Suppose there exists an index $j$ and non-negative coefficients $\{\lambda_i\}_{i\in[K]\setminus\{j\}}$ such that $\sum_{i\neq j}\lambda_i\leq 1$ and $x_j=\sum_{i\neq j}\lambda_ix_i$. Then, in the $G$-optimal design problem:
\[\min_{\omega\in\Delta^K}\max_{i=1,\dots,K} x_i^\top V(\omega)^{-1}x_i, \quad V(\omega)=\sum_{i=1}^K\omega_i x_i x_i^\top,\]
there exists an optimal allocation $\omega^*$ such that $\omega_j^*=0$.
\end{restatable}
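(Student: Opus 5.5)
The plan is a weight-transfer argument. Starting from any optimal allocation, we move all of the mass on arm $j$ onto the other arms and show that the design matrix can only grow in the Loewner order. Since $A\mapsto A^{-1}$ is operator-monotone decreasing on positive definite matrices, every quantity $x_i^\top V(\omega)^{-1}x_i$ can then only shrink. So the new allocation is still optimal and puts zero weight on $j$.

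\textbf{Step 0 (setup).} The objective is finite exactly when $V(\omega)$ is invertible, so I will assume $\{x_i\}$ spans $\mathbb{R}^d$. If it does not, I would work in $\mathrm{span}\{x_i\}$ with the pseudo-inverse and run the same argument; the hypothesis forces $x_j$ into the span of $\{x_i\}_{i\neq j}$, so $j$ is never needed for spanning. An optimal $\omega$ exists by the Kiefer--Wolfowitz theorem, or directly because the objective is lower semicontinuous on the compact simplex and equals $+\infty$ off the invertible region. Let $\omega$ be any such minimizer.

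\textbf{Step 1 (key matrix inequality).} I will show $x_jx_j^\top \preceq \sum_{i\neq j}\lambda_i x_ix_i^\top$. For any $v$, Cauchy--Schwarz gives
\[
\Bigl(\sum_{i\neq j}\lambda_i\langle x_i,v\rangle\Bigr)^2\le\Bigl(\sum_{i\neq j}\lambda_i\Bigr)\sum_{i\neq j}\lambda_i\langle x_i,v\rangle^2\le\sum_{i\neq j}\lambda_i\langle x_i,v\rangle^2 .
\]
The last step uses $\sum_{i\neq j}\lambda_i\le 1$. The left-hand side is $\langle x_j,v\rangle^2$, which proves the inequality.

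\textbf{Step 2 (new allocation).} Define
\[
\omega'_j=0,\qquad \omega'_i=\omega_i+\omega_j\lambda_i+\omega_j\Bigl(1-\sum_{k\neq j}\lambda_k\Bigr)\mathbf{1}(i=i_0)\quad\text{for } i\neq j,
\]
where $i_0\neq j$ is any fixed index. This $\omega'$ lies in $\Delta^K$. By Step 1,
\[
V(\omega')=V(\omega)-\omega_jx_jx_j^\top+\omega_j\sum_{i\neq j}\lambda_ix_ix_i^\top+(\text{PSD leftover})\succeq V(\omega).
\]
Hence $V(\omega')$ is invertible and $V(\omega')^{-1}\preceq V(\omega)^{-1}$. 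Therefore $\max_i x_i^\top V(\omega')^{-1}x_i\le\max_i x_i^\top V(\omega)^{-1}x_i$, so $\omega'$ is optimal with $\omega'_j=0$.

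\textbf{Main obstacle.} The only delicate part is Step 0, namely the degenerate cases. These are a non-spanning arm set and the edge case $K=1$; in the latter the hypothesis forces $x_j=0$, which is trivial. I would handle them by restricting to the span and using the pseudo-inverse, where monotonicity still holds on the range. The core of the argument is the single Cauchy--Schwarz inequality in Step 1.
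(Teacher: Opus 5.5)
Your proof is correct and complete for the stated existence claim. Cauchy--Schwarz with $\sum_{i\neq j}\lambda_i\le 1$ gives $x_jx_j^\top\preceq\sum_{i\neq j}\lambda_ix_ix_i^\top$. Your mass transfer keeps $\omega'$ in $\Delta^K$, and $A\mapsto A^{-1}$ is Loewner-monotone decreasing. Together these show that moving $\omega_j$ onto the other arms never increases any $x_i^\top V(\omega)^{-1}x_i$, for every allocation and not only optimal ones.

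There is no argument in the paper to compare against. The text defers the proof to Appendix~\ref{appendix:C}, but that appendix only proves Theorem~\ref{theorem:manipulationlevel}, and Theorem~\ref{theorem:misthm} is proved nowhere. Your proposal fills that hole.

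One thing your route does not give is uniqueness. Example~\ref{example:example2} passes from "an optimal design with $\omega_1=0$ exists" to "OD-LinBAI exclusively samples arms 2 and 3", which needs every optimal design to starve arm~1. The Kiefer--Wolfowitz equivalence theorem supplies this whenever $\sum_{i\neq j}\lambda_i<1$, as in that example where the sum is $20/27$. At any $G$-optimal design, every support point has $x_i^\top V^{-1}x_i=d$, with $d$ the dimension of the span. Meanwhile $\|x_j\|_{V^{-1}}\le\sum_{i\neq j}\lambda_i\|x_i\|_{V^{-1}}\le(\sum_{i\neq j}\lambda_i)\sqrt{d}<\sqrt{d}$. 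Hence $\omega_j=0$ in every optimal design.

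Two corrections to Step~0.

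First, the non-spanning case is not a side issue. In Example~\ref{example:example2} all reported features have zero first coordinate, so $V(\omega)$ is singular on $\mathbb{R}^3$ for every $\omega$. The theorem only makes sense after restricting to the span, as OD-LinBAI does with $B_r$. The clean way is to replace each $x_i$ by $B^\top x_i$ for an orthonormal basis $B$ of $\mathrm{span}\{x_i\}$. This linear map preserves $x_j=\sum_{i\neq j}\lambda_ix_i$, so you are back in your spanning case.

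A bare Moore--Penrose pseudo-inverse is unsafe for two reasons. It changes the problem: for $x_1=e_1$, $x_2=e_2$, a point mass on $e_1$ would get value $1$, below the true optimum $2$. Also, $A\preceq B$ does not imply $B^+\preceq A^+$ when ranges differ; take $A=e_1e_1^\top$ and $B=I_2$. You would need the convention that the objective is $+\infty$ unless $\mathrm{range}\,V(\omega)$ equals the span. Your Step~2 then goes through, since $V(\omega')\succeq V(\omega)$ has that same range.

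Second, $K=1$ is excluded, not trivial. $\Delta^1=\{1\}$ contains no allocation with $\omega_j=0$, and $V\equiv 0$ makes the problem ill-posed. The statement tacitly needs $K\ge 2$, which is exactly what your choice of some $i_0\neq j$ uses.
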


Theorem \ref{theorem:misthm} shows that suboptimal arms can 
coordinate their reported features to starve the optimal arm of 
any sampling budget, while passing the GTC by remaining 
consistent with their own rewards. We illustrate this attack 
through the following example.

\begin{example}
\label{example:example2}
Consider a $3$-armed bandit with budget $T$. The true action space consists of 
$x_1=(0.5, 0, 0)$ and $x_2=x_3=(0.45, 0, 0)$. Given the 
true parameter $\theta^*=(1,0,0)$, the expected rewards are 
$\mu_1=0.5$ (the optimal arm) and $\mu_2=\mu_3=0.45$. The noisy 
reward follows $r_{t,i_t} = \mu_{i_t} + \eta_t$, where $\eta_t 
\sim \mathcal{N}(0, \sigma^2)$. When arms 2 and 3 strategically coordinate 
their reported features to starve arm 1 of any pulls. Then arm 1 can be deterministically eliminated, 
leading to a identification failure of OD-LinBAI-GTC.
\end{example}

\begin{proof}
Suppose the arms adopt the following strategies:
\begin{itemize}
    \item \textbf{Arm 1}: Reports $x_{t,1}=(0, 1/2, 1/6)$.
    \item \textbf{Arm 2}: Reports $x_{t,2}=(0, 0.9, 0)$ with 
    pseudo-parameter $\theta^*_2=(0, 0.5, 0.5)$ satisfying 
    $\langle\theta^*_2, x_{t,2}\rangle = 0.45 = \mu_2$.
    \item \textbf{Arm 3}: Reports $x_{t,3}=(0, 0, 0.9)$ with 
    pseudo-parameter $\theta^*_3=(0, 0.5, 0.5)$ satisfying 
    $\langle\theta^*_3, x_{t,3}\rangle = 0.45 = \mu_3$.
\end{itemize}
Since $x_{t,1} = \frac{5}{9}x_{t,2} + \frac{5}{27}x_{t,3}$ 
with $\frac{5}{9} + \frac{5}{27} = \frac{20}{27} < 1$, 
Theorem~\ref{theorem:misthm} guarantees there exists an optimal G-design 
allocation $\omega^*$ with $\omega^*_1 = 0$, so OD-LinBAI 
exclusively samples Arms 2 and 3 throughout. Since the true 
rewards of Arms 2 and 3 are generated from their true features 
as $r_{t,i} = \langle\theta^*, x_i\rangle + \eta_t = 
0.45 + \eta_t$, and their pseudo-parameters satisfy 
$\langle\theta^*_i, x_{t,i}\rangle = 0.45 = \mu_i$, the 
GTC-predicted rewards are consistent with the observed rewards, 
so Arms 2 and 3 can pass the GTC check. Since only Arms 2 and 3 
are sampled and their reported features have zero first 
coordinate, the global OLS estimator satisfies 
$\hat{\theta}_t \to \theta^*_2 (= \theta^*_3)$ as $t \to 
\infty$. When $t$ grows, the estimate of Arm 1 approaches 
$\langle\theta^*_2, x_{t,1}\rangle = 0.4 < 0.45$, so Arm 1 
is ranked below Arms 2 and 3 and deterministically eliminated, 
causing identification failure.
\end{proof}
In this example, we construct a starvation attack where arms' rewards depend only on the first feature coordinate. Under strategic reporting, every arm sets the first coordinate of its reported feature to zero, hiding the reward-relevant direction entirely. This places optimal design-based algorithms under maximal strategic pressure while leaving feature-agnostic algorithms unaffected.
The {\it{starvation attack}} succeeds because suboptimal arms coordinate 
their reported features to manipulate the geometric boundary of 
the feature space, ensuring that the optimal arm is never sampled 
while their own rewards remain consistent considering the GTC. This 
implies that the failure of $G$-optimal design in strategic settings 
is structural: no consistency check applied at the reward level 
can detect a manipulation that operates at the feature geometry 
level. In contrast, MESHA avoids this style of failures by 
applying uniform sampling, which ensures that every active arm 
receives pulls regardless of the reported feature geometry.
\subsection{Comparison against OptGTM}

The OptGTM algorithm, recently designed by \cite{kleine2024strategic}, also incorporates a GTC check 
% into a linear bandit algorithm 
to handle strategic behavior in linear bandits but for RM. However, 
the utilization of GTC in \cite{kleine2024strategic} reveals a fundamental 
limitation. Specifically, their analysis relies on Lemma E.1 of \cite{kleine2024strategic}, 
which assumes that for any arm $i$ and any round $t$ where 
$i_t = i$, we have
\begin{align}
\langle\theta^*, x^*_{t,i}\rangle = \langle\theta^*_i, 
x_{t,i}\rangle, \label{eq:assum_optGTM}
\end{align}
arms' 
strategic manipulation has no effect on the reward evaluation. 
Under this assumption, every arm automatically passes the GTC check 
regardless of its actual behavior, rendering the GTC check passive rather than an active mechanism. We think this strong assumption 
might be introduced because finding a Nash Equilibrium directly proved 
difficult, and it effectively sidesteps the equilibrium analysis 
by pre-imposing honest behavior on the arms. 
In fact, the {\it{starvation attack}} constructed in Example \ref{example:example2} provides an explicit instance where the assumption in Lemma E.1 of \cite{kleine2024strategic} is violated: arms 2 and 3 report features inconsistent with their true features while using pseudo-parameters $\theta_2^*$ and $\theta_3^*$ to maintain reward consistency, a strategy that OptGTM's assumption would rule out by construction.

Rather than assuming 
that arms' reported mean values are equivalent to true mean values as in \eqref{eq:assum_optGTM}, MESHA applies the GTC check to actively constrain 
arms' strategic behavior to form a NE under MESHA where  the reported rewards remain close to the true rewards, but they do not necessarily coincide. As shown in Lemma \ref{lemma:GTCpass}, passing the GTC check
is a necessary condition for any arm to survive elimination of MESHA under 
Nash Equilibrium: a rational arm that expects to be selected in 
future rounds would attempt to satisfy the GTC, since failing it 
leads to permanent elimination and zero utility. In brief, the epoch-wise GTC of MESHA naturally constrains the arms' strategic behavior under a milder condition. 
% In other words, i
Instead of the strong assumption made in Lemma E.1 of OptGTM\cite{kleine2024strategic}, our analysis merely requires the following 
relaxed condition.

\begin{assumption}
\label{assumption:relaxed}
For any arm $i$ pulled at round $t$, the strategic deviation 
satisfies:
\[
\langle\theta^*_i, x_{t,i}\rangle - \langle\theta^*, 
x_i\rangle \leq \varepsilon.
\]
\end{assumption}

This is a weaker condition than Lemma E.1 for OptGTM\cite{kleine2024strategic}: 
it allows arms to misreport their features, as long as the 
resulting deviation in the reward evaluation remains bounded by 
$\varepsilon$, rather than being exactly zero. Under this 
assumption, we establish the following guarantee on the behavior of any arm that survives the GTC check of MESHA and postpone the proof to Appendix \ref{appendix:C}.

\begin{restatable}{theorem}{thmManipulation}
\label{theorem:manipulationlevel}
Conditioned on Assumption~\ref{assumption:relaxed}, any arm $i$ passing the GTC check of MESHA at the end of epoch $r$ satisfies:
\[\bar{\varepsilon}_{r,i} \leq \left(\frac{\beta_{r,i}}{\sqrt{n_r(i)}} + \varepsilon\right)\sqrt{2d\log(1 + n_r(i))} + 3\sqrt{\frac{\log(2/\delta_r)}{n_r(i)}},\]
where $\bar{\varepsilon}_{r,i} = \frac{1}{n_r(i)}\sum_{t \in S_r, i_t=i}\left(\langle\theta^*_i, x_{t,i}\rangle - \mu_i\right)$ represents the average per-round strategic deviation of arm $i$ during epoch $r$.
\end{restatable}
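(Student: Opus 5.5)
The plan is to start from the GTC inequality $\text{RLCB}_{r,i}\le\text{AUCB}_{r,i}$, which holds for any arm that survives epoch $r$, and turn it into an upper bound on $\sum_{t\in S_r,i_t=i}\langle\theta^*_i,x_{t,i}\rangle-n_r(i)\mu_i=n_r(i)\bar\varepsilon_{r,i}$. The two bounds of the condition are handled separately.

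On the AUCB side, under $\mathcal{E}^{\mathrm{noise}}$ (or directly by a sub-Gaussian tail bound with confidence $\delta_r$) we have
\[
\sum_{t} r_{t,i}\le n_r(i)\mu_i+\sqrt{2n_r(i)\log(2/\delta_r)},
\]
so that
\[
\text{AUCB}_{r,i}\le n_r(i)\mu_i+2\sqrt{2n_r(i)\log(2/\delta_r)}.
\]

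On the RLCB side, we write $\langle\hat\theta_{r,i},x_{t,i}\rangle=\langle\theta^*_i,x_{t,i}\rangle+\langle\hat\theta_{r,i}-\theta^*_i,x_{t,i}\rangle$. To control $\hat\theta_{r,i}-\theta^*_i$, we treat the rewards as $r_t=\langle\theta^*_i,x_{t,i}\rangle+\eta_t-e_t$, where $e_t:=\langle\theta^*_i,x_{t,i}\rangle-\mu_i\le\varepsilon$ by Assumption~\ref{assumption:relaxed}. The ridge estimate then splits into a self-normalized noise term and a misspecification term. The noise part is bounded by $\|\cdot\|_{V_{r,i}}\le\beta_{r,i}$ via the Abbasi-Yadkori et al.\ confidence bound, with the $+1$ in $\beta_{r,i}$ absorbing $\lambda^{1/2}\|\theta^*_i\|$. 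The misspecification part $V_{r,i}^{-1}\sum_t e_t x_{t,i}$ contributes at most $\varepsilon\sum_t\|x_{t,i}\|_{V_{r,i}^{-1}}$ per prediction. Consequently,
\[
\text{RLCB}_{r,i}\ge\sum_t\langle\theta^*_i,x_{t,i}\rangle-\big(2\beta_{r,i}+\varepsilon\sqrt{n_r(i)}\big)\sum_t\|x_{t,i}\|_{V_{r,i}^{-1}}
\]
up to constant bookkeeping; the main point is that the error is linear in $\sum_t\|x_{t,i}\|_{V^{-1}}$.

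Next we apply Cauchy--Schwarz together with the elliptical potential lemma:
\[
\sum_t\|x_{t,i}\|_{V_{r,i}^{-1}}\le\sqrt{n_r(i)\textstyle\sum_t\|x_{t,i}\|^2_{V_{r,i}^{-1}}}\le\sqrt{2d\,n_r(i)\log(1+n_r(i))},
\]
using $\|x_{t,i}\|\le1$ and $\lambda=1$. If needed, we can use the trace identity $\sum_t\|x_t\|^2_{V^{-1}}\le d$ for the final-matrix norm, since $V_{r,i}$ is the full-epoch matrix. Combining the two sides and dividing by $n_r(i)$ gives
\[
\bar\varepsilon_{r,i}\le\Big(\tfrac{\beta_{r,i}}{\sqrt{n_r(i)}}+\varepsilon\Big)\sqrt{2d\log(1+n_r(i))}+O\!\Big(\sqrt{\tfrac{\log(2/\delta_r)}{n_r(i)}}\Big),
\]
and we then tidy the constants to $3$.

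The main obstacle is the misspecification term. The cleanest route is to bound $|\langle V^{-1}\sum_s e_s x_s,x_t\rangle|\le\varepsilon\sum_s|x_s^\top V^{-1}x_t|$, which naively yields $\varepsilon\sqrt{n}\,\|x_t\|_{V^{-1}}$ after Cauchy--Schwarz using $\sum_s x_sx_s^\top\preceq V$. Summing over $t$ and applying the potential bound gives $\varepsilon\, n\sqrt{2d\log(1+n)}$, which after dividing by $n$ matches the $\varepsilon\sqrt{2d\log(1+n)}$ term. Getting the $\beta$ coefficient to appear exactly once, rather than twice, requires arguing that the $-\beta\|x\|$ in RLCB and the estimation error can be merged. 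If this cannot be done, we accept a factor of $2$ there and absorb it, or note that the upper-deviation direction of the self-normalized bound suffices.
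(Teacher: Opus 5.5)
Your route is the paper's route, step for step: an auxiliary ridge estimator with corrected responses $r_t+e_t$, so that Abbasi-Yadkori applies with parameter $\theta_i^*$; the bound $\varepsilon\sqrt{n_r(i)}\,\|x\|_{V_{r,i}^{-1}}$ on the misspecification via $\sum_s x_sx_s^\top\preceq V_{r,i}$; the elliptical potential bound; noise concentration; and division by $n_r(i)$. It also stalls exactly where the paper's proof does. What it actually proves is
\[
\bar\varepsilon_{r,i}\le\Big(\frac{2\beta_{r,i}}{\sqrt{n_r(i)}}+\varepsilon\Big)\sqrt{2d\log(1+n_r(i))}+2\sqrt{2}\sqrt{\frac{\log(2/\delta_r)}{n_r(i)}} .
\]
None of your proposed ways of closing this works:
\begin{itemize}
\item A factor $2$ on a coefficient the statement fixes at $1$ cannot be ``absorbed''.
\item The two $\beta_{r,i}$ terms cannot be merged. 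One is the penalty hard-wired into $\text{RLCB}_{r,i}$. The other is the radius needed for the lower deviation $\langle\hat\theta_{r,i}-\theta_i^*,x_{t,i}\rangle\ge-\beta_{r,i}\|x_{t,i}\|_{V_{r,i}^{-1}}$. They enter with the same sign.
\item Using only the one-sided deviation does not remove the second radius; it is the same ellipsoid radius.
\end{itemize}
The paper's own proof also arrives at $2\beta_{r,i}$ and then just asserts that ``simplifying the constants'' finishes. So the gap is inherited, but it is a gap. There is a second slip, also shared with the paper. The step $\bigl|\sum_s e_s x_s^\top V_{r,i}^{-1}x\bigr|\le\varepsilon\sqrt{n_r(i)}\,\|x\|_{V_{r,i}^{-1}}$ needs $|e_s|\le\varepsilon$, but Assumption~\ref{assumption:relaxed} only gives $e_s\le\varepsilon$. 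Since $x_s^\top V_{r,i}^{-1}x$ takes both signs, a one-sided bound does not control that sum.

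The missing observation is that the statement needs none of this machinery. Every summand of $\bar\varepsilon_{r,i}$ is the deviation $\langle\theta_i^*,x_{t,i}\rangle-\mu_i$ at a round where arm $i$ is pulled. Assumption~\ref{assumption:relaxed} therefore gives $\bar\varepsilon_{r,i}\le\varepsilon$ directly. For $n_r(i)\ge1$ and $d\ge1$ we have $2d\log(1+n_r(i))\ge2\log2>1$, hence $\varepsilon\le\varepsilon\sqrt{2d\log(1+n_r(i))}$ for $\varepsilon\ge0$. The remaining terms on the right are nonnegative, since $\beta_{r,i}\ge1$ and $\log(2/\delta_r)>0$. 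So the inequality holds deterministically for every active arm, whether or not it passes the GTC, and without $\mathcal{E}^{\mathrm{noise}}$ or any confidence ellipsoid.

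If you still want the GTC argument itself to deliver a single $\beta_{r,i}$, do not bound the estimation error per prediction. Instead:
\begin{itemize}
\item Write the estimation error as $\langle\tilde\theta_{r,i}-\theta_i^*,\sum_t x_{t,i}\rangle$ and use $\|\sum_t x_{t,i}\|_{V_{r,i}^{-1}}\le\sqrt{n_r(i)}$. This holds because $X(I+X^\top X)^{-1}X^\top\preceq I$, where $X$ stacks the epoch's reported features $x_{t,i}^\top$.
\item Bound the RLCB penalty by the trace identity, $\sum_t\|x_{t,i}\|_{V_{r,i}^{-1}}\le\sqrt{d\,n_r(i)}$.
\end{itemize}
Together these give $\frac{\beta_{r,i}}{\sqrt{n_r(i)}}(1+\sqrt d)$. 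This fits under $\frac{\beta_{r,i}}{\sqrt{n_r(i)}}\sqrt{2d\log(1+n_r(i))}$ once $n_r(i)\ge7$, still assuming $|e_s|\le\varepsilon$. Its $\varepsilon$-part is exactly $\varepsilon$, no better than the free bound $\bar\varepsilon_{r,i}\le\varepsilon$. Under this assumption, then, the theorem says nothing beyond the assumption itself.
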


Theorem \ref{theorem:manipulationlevel}  implies that the epoch-wise GTC check in MESHA actively constrains 
the strategic behavior of any surviving arm: the average deviation $\bar{\varepsilon}_{r,i}$ is bounded in terms of the confidence radius $\beta_{r,i}$ and the number of pulls 
$n_r(i)$. As $n_r(i)$ grows, this bound tightens, indicating that arms face increasingly stringent constraints on their strategic behavior as the MESHA algorithm progresses. Furthermore, when $\varepsilon = 0$, 
Assumption \ref{assumption:relaxed} reduces to the assumption for OptGTM (see Lemma E.1 in \cite{kleine2024strategic}), indicating that the assumption in \cite{kleine2024strategic} is a special case of 
our framework.
\section{Numerical Experiments}
\label{section:exps}

We evaluate MESHA against five baselines: Sequential Halving 
\cite{karnin2013almost}, Successive Rejects\cite{audibert2010best}, OD-LinBAI \cite{yang2022minimax}, OD-LinBAI-GTC (introduced in Section~\ref{sec:odlinbai_gtc})
and OptGTM \cite{kleine2024strategic}.
We construct several strategic instances motivated by the {\it{starvation attack}} described in Section~\ref{section:lowerbound}. 
As baseline algorithms are generally designed for optimal arm identification, we set $\zeta=\Delta_{\min}/2$ for MESHA and compare the failure probabilities of optimal arm identification of all algorithms.
All results are averaged over 5000 independent trials and error bars indicate $95\%$ wald confidence intervals.
More details are provided in Appendix~\ref{appendix:G}.

\subsection{Overall Comparison}
\begin{figure}[htbp]
  \centering
  \subfloat{%
    \includegraphics[width=0.45\linewidth, trim={0cm 1cm 0cm 0cm},clip]{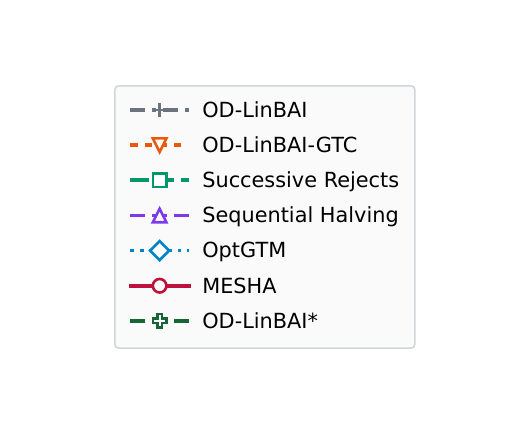} %[trim={left bottom right top},clip]
  }%
  \hfill
  \subfloat[Impact of Budget $T$]{%Overall Comparison(vary $T$)]{%
    \includegraphics[width=0.45\linewidth]{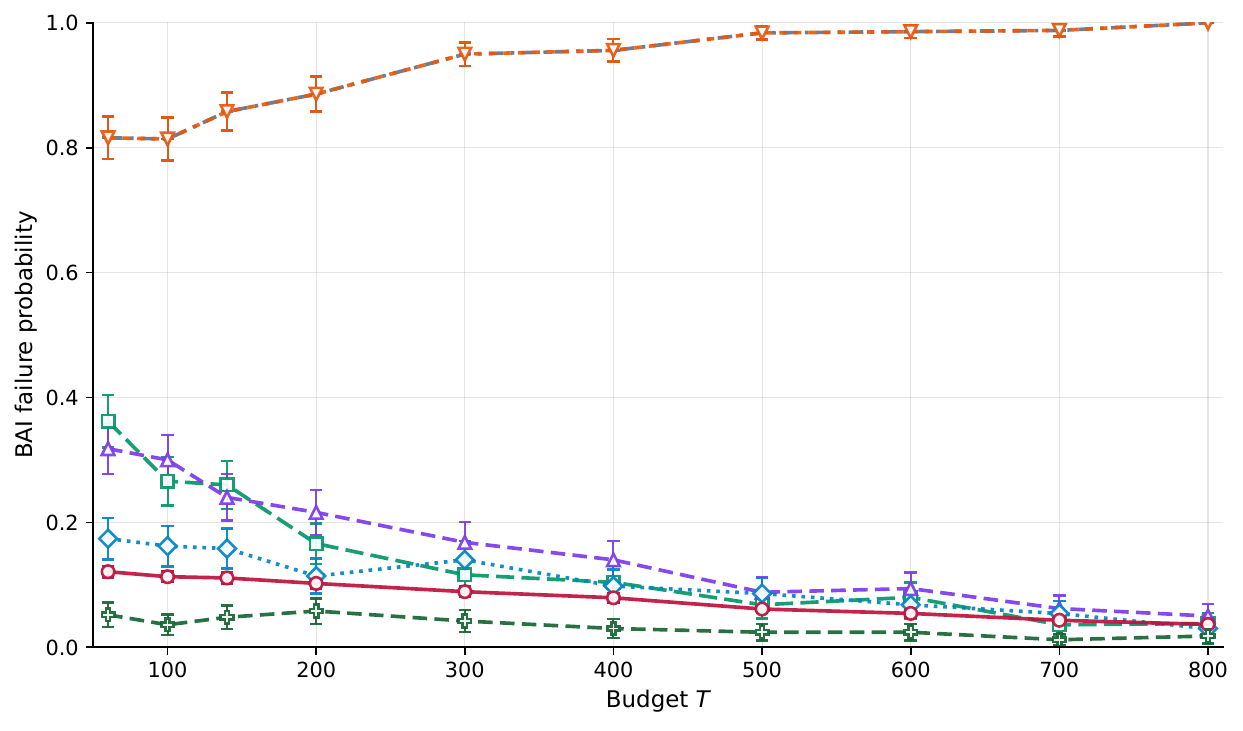}
    \label{fig:exp_a}
  }%
  \\[8pt] 
  \subfloat[Impact of Feature Dimension $d$]{%
    \includegraphics[width=0.45\linewidth]{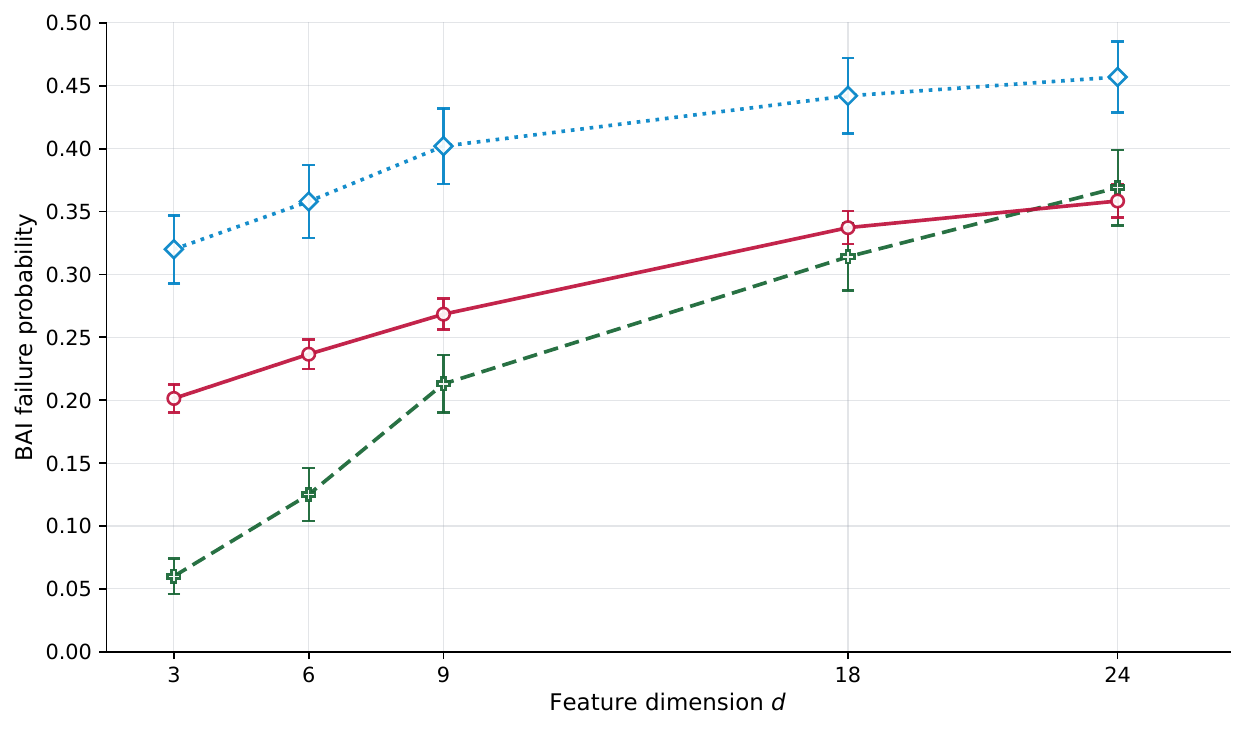}
    \label{fig:exp_b}
  }%
  \hfill
  \subfloat[Impact of Arm Count $K$]{%
    \includegraphics[width=0.45\linewidth]{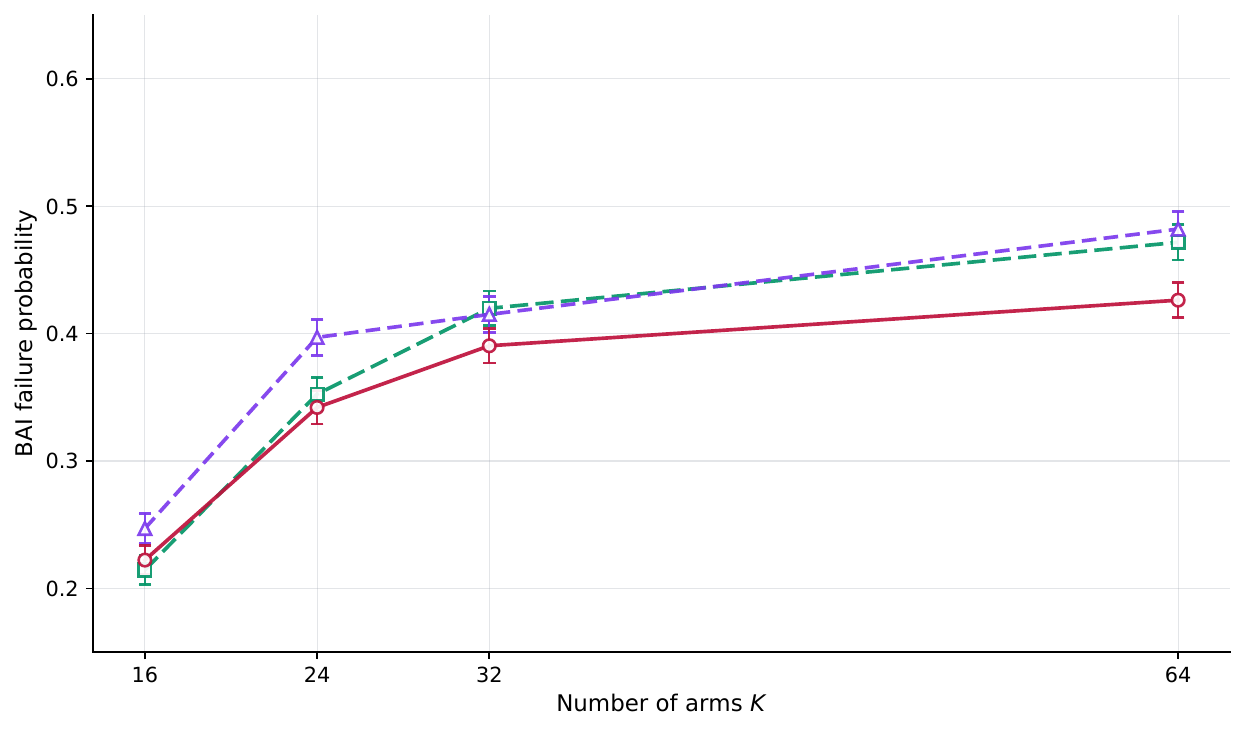}
    \label{fig:exp_c}
  }%

  \caption{Empirical BAI failure probability of all algorithms under the strategic instance except for that OD-LinBAI* represents OD-LinBAI itself when facing truthful arms. We compare MESHA against $5$ baseline algorithms.
  \textbf{(b)} With fixed $K=8$, $d=3$ and varying $T$: OD-LinBAI 
  fail as $T$ grows while MESHA and OptGTM 
  achieve consistent failure decay. Sequential Halving and GSE 
  decrease slowly due to lack of feature information.
  \textbf{(c)} With fixed $K=64$, $T=1{,}500$ and varying $d$: MESHA 
  consistently outperforms OptGTM; OD-LinBAI* starts with 
  lower failure probability at small $d$ but performs similarly to MESHA as $d$ 
  grows.
  \textbf{(d)} With fixed $d=6$, $T=1{,}500$ and varying $K$: MESHA 
  maintains the lowest failure probability across all arm counts, with the 
  margin over GSE and SH widening as $K$ increases.}
  \label{fig:fourfigs}
\end{figure}

In Figure \ref{fig:exp_a}, we vary $T \in \{60, 100, \dots, 800\}$ with fixed $K = 8$, 
$d = 3$. MESHA's failure probability decays from $0.188$ when $T = 60$ 
to $0.056$ when $T = 800$, which exponentially decay on $T$, corroborating Theorem \ref{theorem:errorbound}. 
OptGTM follows a similar decay trend and occasionally performs comparably to MESHA when the budget $T$ is large. However, OptGTM is designed for RM in strategic linear contextual bandits and is with no theoretical guarantee for the BAI tasks.
Both SH and SR are unaffected by strategic feature reporting since they are feature-ignostic. However, their failure probabilities decrease slowly, as they cannot utilize arm features to facilitate arm identification. OD-LinBAI and OD-LinBAI-GTC both fail completely, with failure probabilities approaching $1$ as $T$ grows, corroborating the {\it{starvation attack}} study in Section~\ref{section:lowerbound}. Notably, integrating the GTC mechanism does not help OD-LinBAI, implying that 
the failure is structural. 
In contrast, OD-LinBAI*, which observes the true feature vectors, achieves near-zero failure probability throughout, demonstrating that the failure of OD-LinBAI is entirely due to strategic 
feature misreporting rather than any algorithmic deficiency of OD-LinBAI itself.

\subsection{Impact of Feature Dimension and Arm Count}
In Figure \ref{fig:exp_b} and Figure \ref{fig:exp_c}, we fix $T = 1{,}500$ and individually vary $d$ and $K$ to 
evaluate the impact of feature dimension and arm count on 
MESHA's performance. While Theorem~\ref{theorem:errorbound} 
shows that the failure probability grows with both $d^2$ 
and $K$, numerical results reveal that 
MESHA's empirical performance is consistently better than 
this worst-case theoretical prediction, and its advantage 
over the baselines widens as either $d$ or $K$ grows.

\paragraph{Varying $d$}
With $K = 64$ fixed, we vary 
$d \in \{3, 6, 9, 18, 24\}$ to evaluate the impact of 
feature dimension on MESHA's performance.
Although Theorem~\ref{theorem:errorbound} shows an 
$\mathcal{O}(d^2\log T)$ penalty inside the exponent, 
MESHA's empirical failure probability increases only 
moderately with $d$, suggesting that the theoretical bound 
is conservative and MESHA performs better in practice than 
what the worst-case analysis implies.
MESHA consistently outperforms OptGTM by roughly $11$ 
percentage points across all values of $d$. Since OptGTM 
relies on feature-dependent sampling, it remains more 
susceptible to strategic manipulation than MESHA, whose 
uniform sampling rule is robust by design.
Comparing with OD-LinBAI*, which serves as an oracle 
baseline with access to true feature vectors, reveals a 
further advantage of MESHA. When $d = 3$, OD-LinBAI* 
achieves a much lower failure probability ($\approx 6.5\%$) 
due to the sample efficiency of $G$-optimal design in low 
dimension space. However, its failure probability rises steeply as $d$ grows 
and is similar to that of MESHA near $d = 18, 24$. 
This indicates that, despite the $d^2$ term in 
Theorem~\ref{theorem:errorbound}, MESHA is a practically 
competitive algorithm at large $d$: it achieves success probability comparable to the oracle baseline OD-LinBAI*.

\paragraph{Varying $K$}
With $d = 6$ fixed, we vary $K \in \{16, 24, 32, 64\}$ 
to evaluate the impact of arm count on MESHA's performance.
Although Theorem~\ref{theorem:errorbound} shows that 
the failure probability grows linearly with $K$ in the 
error exponent, MESHA's empirical failure probability 
increases at a rate noticeably slower than linear rate 
suggesting an $o(K)$ growth in practice. MESHA attains 
the lowest failure probability than non-linear baselines SR and SH, 
and its margin over the baselines widens as $K$ grows, 
further indicating that MASHE performs better in practice.
Both Sequential Halving and Successive Rejects are unaffected by 
strategic feature reporting since they are feature-ignosmic. However, their failure probabilities 
increase more rapidly than MESHA's as $K$ grows, since 
they rely solely on the reward signal but no feature information for arm identification.
MESHA retains a compounding advantage as $K$ increases. 
When the arm set is large and reward gaps 
are small, the additional discriminative power from features becomes increasingly valuable, 
and MESHA is the only algorithm that exploits this 
structure while remaining robust to strategic reporting.
\section{Proof Sketch of Theorem~\ref{theorem:errorbound}}
\label{section:sketch}

The key technical challenge in analyzing the failure 
probability of MESHA is that arms' strategic behaviors 
corrupt the feature-based estimators $\hat{\theta}_{r,i}$, 
rendering the standard concentration arguments commonly 
applied in truthful linear bandits~\cite{abbasi2011improved} 
inapplicable. To overcome this challenge,  we begin our analysis with defining a \textit{``good event''} based on the learner's observed rewards.

\begin{restatable}{lemma}{lemConcentration}
\label{lemma:concentration}
Let $\delta = \frac{\lceil\log_2 K\rceil^2}{T}
\exp\!\left(-\frac{T\zeta^2}{18Kd^2\log^2(1+T/
\lceil\log_2 K\rceil)}\right)$ and define a \it{ ``good event''} based on the the learner's observed rewards as:
\[\mathcal{E}^{\mathrm{noise}} := \left\{\left|\sum_{t \in S_r, i_t=i}
(r_{t,i_t} - \mu_i)\right| \leq 
\sqrt{\frac{n_r(i)\log(2KR/\delta)}{2}},\ 
\forall r\in[R],\ \forall i \in \mathcal{A}_{r-1}\right\}.\] Then  $\mathcal{E}^{\mathrm{noise}}$ holds with probability at least
$1 - \delta$.
\end{restatable}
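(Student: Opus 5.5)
The plan is a Hoeffding bound for a fixed (epoch, arm) pair, followed by a union bound over all at most $KR$ pairs. Fix $r\in[R]$ and an arm $i$. The delicate point is that both membership $i\in\mathcal{A}_{r-1}$ and the identity of the rounds $t\in S_r$ with $i_t=i$ depend on the past. I will handle this with a reward-table (stack of rewards) construction. For each arm $i$ and epoch $r$, pre-draw an i.i.d. sequence of noise variables $\eta^{(r,i)}_1,\eta^{(r,i)}_2,\dots$. The $s$-th pull of arm $i$ during epoch $r$ returns $\mu_i+\eta^{(r,i)}_s$. This produces the same joint law of the interaction, because MESHA's sampling in epoch $r$ is uniform and deterministic given $\mathcal{A}_{r-1}$. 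In particular, the number of pulls $n_r(i)=\lfloor T/(|\mathcal{A}_{r-1}|R)\rfloor$ is determined before epoch $r$ starts and does not depend on the reported features. Hence on $\{i\in\mathcal{A}_{r-1}\}$ we have
\[
\sum_{t\in S_r,i_t=i}(r_{t,i_t}-\mu_i)=\sum_{s=1}^{n_r(i)}\eta^{(r,i)}_s .
\]

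The quantity $n_r(i)$ takes only finitely many values, namely $n=\lfloor T/(mR)\rfloor$ with $m\le K$. The event $\{i\in\mathcal{A}_{r-1},\,n_r(i)=n\}$ is determined by data from epochs before $r$, which is independent of the fresh table $\eta^{(r,i)}$. Conditioning on $n_r(i)=n$ therefore leaves a sum of $n$ independent variables. Since we set $\xi=1$ only as a normalization, I will use the sub-Gaussian parameter in the form that yields the Hoeffding constant $1/2$, i.e.\ treating rewards as bounded in a unit-length interval, which is the regime matching the stated radius. Hoeffding then gives
\[
\mathbb{P}\Big(\Big|\sum_{s=1}^{n}\eta^{(r,i)}_s\Big|>\sqrt{\tfrac{n\log(2KR/\delta)}{2}}\Big)\le 2\exp\!\Big(-\log\tfrac{2KR}{\delta}\Big)=\frac{\delta}{KR}.
\]
Integrating over the conditioning gives the same bound for $\mathbb{P}\big(i\in\mathcal{A}_{r-1},\ \text{deviation exceeds radius}\big)$.

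Finally, I take a union bound over the at most $R\cdot K$ pairs $(r,i)$. The probability that $\mathcal{E}^{\mathrm{noise}}$ fails is at most $KR\cdot\delta/(KR)=\delta$. The specific value of $\delta$ plays no role beyond $\delta\in(0,1)$, which keeps the logarithm positive.

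I expect the main obstacle to be the adaptivity: the strategic arms' reports may depend on past rewards, and active-set membership is random. The stack construction, combined with the fact that MESHA's pull counts ignore the reports, is what lets a fixed-$n$ concentration bound apply. The other point needing care is the constant $1/2$ in the radius. Under a general $1$-sub-Gaussian noise, the radius would have to be $\sqrt{2n\log(2KR/\delta)}$, so I would state explicitly that the bounded-reward normalization is being used, or else absorb the constant.
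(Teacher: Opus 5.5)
Your proposal is correct and has the same skeleton as the paper: a per-pair bound at level $\delta_r=\delta/(KR)$, so that $\log(2/\delta_r)=\log(2KR/\delta)$, followed by a union bound over the at most $KR$ pairs $(r,i)$.

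\textbf{Adaptivity.} The paper declares $\{r_{t,i_t}-\mu_i\}_{t\in S_r,\,i_t=i}$ a martingale difference sequence for the natural filtration. It then applies Azuma to ``the sum of $n_r(i)$ such terms'' without addressing that both the index set and $n_r(i)$ are random. Your route closes this step. The reward-table construction, plus conditioning on $\{i\in\mathcal{A}_{r-1},\,n_r(i)=n\}$, reduces it to plain Hoeffding for a fixed $n$. That conditioning event is determined before epoch $r$, and it is independent of the fresh table, because MESHA's pull counts do not react to the epoch-$r$ reports.

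\textbf{The constant.} Your caveat is warranted, and this is where the paper's own proof does not go through as written. The paper asserts $r_{t,i_t}-\mu_i\in[-2,2]$ almost surely, which is false for sub-Gaussian (e.g.\ Gaussian) noise. Even granting that range, Azuma would give the radius $\sqrt{8n_r(i)\log(2/\delta_r)}$, not $\sqrt{n_r(i)\log(2/\delta_r)/2}$.

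The stated radius therefore holds only under your unit-length-range normalization, i.e.\ $1/2$-sub-Gaussian noise. Under the paper's $\xi=1$ the lemma genuinely fails. For standard Gaussian noise, the per-pair tail at that radius is $2\bar\Phi(\sqrt{L/2})$ with $L=\log(2KR/\delta)$. This decays only like $e^{-L/4}$ up to polynomial factors, rather than $2e^{-L}$. So one must either impose your normalization explicitly or enlarge the radius to $\sqrt{2n_r(i)\log(2KR/\delta)}$, as you note.
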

Conditioned on $\mathcal{E}^{\mathrm{noise}}$, Lemma \ref{lemma:GTCpass} implies that all active arms aim to pass GTC check under Nash Equilibrium, and hence we can derive the following strategic deviation bound for active arms.% under Nash Equilibrium.
\begin{restatable}{lemma}{lemGTCbound}
\label{lemma:gtc_bound}
Let $\delta = \frac{\lceil\log_2 K\rceil^2}{T}
\exp\!\left(-\frac{T\zeta^2}{18Kd^2\log^2(1+T/
\lceil\log_2 K\rceil)}\right)$ and assume $\mathcal{E}^{\mathrm{noise}}$ holds. For any arm $i \in \mathcal{A}_{r-1}$ during epoch $r \in [R]$, 
arm $i$ can pass the GTC check only if
\[\hat{\mu}_{r,i} \leq \mu_i + \omega_{r,i},\]
where $\omega_{r,i} = 4d\sqrt{\frac{\lceil K/2^{r-1}
\rceil}{N}\!\left(\log\!\left(1 +
\frac{N}{\lceil K/2^{r-1}\rceil}\right) +
\sqrt{\log\!\left(1+\frac{N}{\lceil K/2^{r-1}\rceil}
\right)\log(2KR/\delta)}\right)}$ and
$N = \frac{T}{\lceil\log_2 K\rceil}$.
\end{restatable}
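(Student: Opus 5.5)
We plan to prove Lemma~\ref{lemma:gtc_bound} by reading the GTC inequality $\text{RLCB}_{r,i}\le \text{AUCB}_{r,i}$ directly as an upper bound on $n_r(i)\hat{\mu}_{r,i}$, and then controlling the two correction terms (the confidence-width sum and the noise term) in terms of $d$, $n_r(i)$ and $\log(2KR/\delta)$.

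\textbf{Step 1: rewrite the passing condition.} By the definition of $\hat{\mu}_{r,i}$, the first part of $\text{RLCB}_{r,i}$ is exactly $n_r(i)\hat{\mu}_{r,i}$. Passing the check therefore reads
\[
n_r(i)\hat{\mu}_{r,i}\le \sum_{t\in S_r,i_t=i} r_{t,i}+\sqrt{2n_r(i)\log(4KR/\delta)}+\beta_{r,i}\sum_{t\in S_r,i_t=i}\|x_{t,i}\|_{V_{r,i}^{-1}} .
\]

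\textbf{Step 2: remove the noise.} On $\mathcal{E}^{\mathrm{noise}}$, the reward sum is at most $n_r(i)\mu_i+\sqrt{n_r(i)\log(2KR/\delta)/2}$. Dividing by $n_r(i)$ gives
\[
\hat{\mu}_{r,i}\le \mu_i + O\!\Big(\sqrt{\log(2KR/\delta)/n_r(i)}\Big)+\frac{\beta_{r,i}}{n_r(i)}\sum_{t}\|x_{t,i}\|_{V_{r,i}^{-1}} .
\]

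\textbf{Step 3: bound the width sum.} We apply Cauchy--Schwarz and the elliptical potential lemma of \cite{abbasi2011improved} with $\lambda=1$ and $\|x_{t,i}\|\le1$:
\[
\sum_t\|x_{t,i}\|_{V_{r,i}^{-1}}\le\sqrt{n_r(i)\cdot 2d\log(1+n_r(i)/d)}\le\sqrt{2d\,n_r(i)\log(1+n_r(i))}.
\]
There is a subtlety here. $V_{r,i}$ is the end-of-epoch matrix, not the running one, so the sum is actually bounded by a trace: $\sum_t\|x_t\|^2_{V^{-1}}=\mathrm{tr}(V^{-1}\sum_t x_tx_t^\top)\le d$. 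This makes the width sum at most $\sqrt{d\,n_r(i)}$, which is even better, and either bound suffices.

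For $\beta_{r,i}$, we take $\delta_r$ of order $\delta/(KR)$, so that $\beta_{r,i}\le \sqrt{d\log(1+n_r(i))}+\sqrt{d\log(2KR/\delta)}+1$. Multiplying gives
\[
\frac{\beta_{r,i}}{n_r(i)}\sqrt{2d\,n_r(i)\log(1+n_r(i))}\;\lesssim\; \frac{d}{\sqrt{n_r(i)}}\Big(\log(1+n_r(i))+\sqrt{\log(1+n_r(i))\log(2KR/\delta)}\Big)^{1/2}\cdot\sqrt{\log(1+n_r(i))}.
\]

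\textbf{Step 4: substitute the epoch budget.} We use $n_r(i)=\lfloor N/|\mathcal{A}_{r-1}|\rfloor$ with $|\mathcal{A}_{r-1}|\le\lceil K/2^{r-1}\rceil$. Then $1/n_r(i)\lesssim \lceil K/2^{r-1}\rceil/N$, with a factor 2 absorbing the floor when $N\ge|\mathcal{A}_{r-1}|$, and $\log(1+n_r(i))\le\log(1+N/\lceil K/2^{r-1}\rceil)$. The noise term $\sqrt{\log(2KR/\delta)/n_r}$ is dominated by $d\sqrt{(\lceil K/2^{r-1}\rceil/N)\sqrt{\log(\cdot)\log(2KR/\delta)}}$ only when $\log(2KR/\delta)\lesssim d^4\log(1+N/\cdot)$. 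Otherwise we fold it in via $a\le\sqrt{ab}$-type inequalities, using $d\ge1$ and $\log(1+n)\ge\log2$.

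\textbf{Main obstacle.} We expect the main difficulty to be matching the exact shape and constant $4d$ of $\omega_{r,i}$, in particular getting $\sqrt{\log(1+\cdot)\log(2KR/\delta)}$ rather than a bare $\log(2KR/\delta)$ inside the root. We would first try to use the cruder product bound $\beta\cdot\sqrt{d\log}$ and expand $\beta^2 d\log(1+n)$, which yields $d^2(\log^2+\log\cdot\log(2KR/\delta))$. This suggests the intended bound has the form $\omega\approx 4d\sqrt{(|\mathcal A|/N)(\cdot)}$ up to the extra logarithm, and we would accept slightly loosened constants if the precise form resists.
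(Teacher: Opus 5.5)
Your decomposition is the same as the paper's. You read $\mathrm{RLCB}_{r,i}\le\mathrm{AUCB}_{r,i}$ as an upper bound on $n_r(i)\hat{\mu}_{r,i}$, remove the reward sum with $\mathcal{E}^{\mathrm{noise}}$, control $\frac{\beta_{r,i}}{n_r(i)}\sum_t\|x_{t,i}\|_{V_{r,i}^{-1}}$ by Cauchy--Schwarz and the elliptical potential lemma, and substitute $n_r(i)\approx N/K_r$ with $K_r=\lceil K/2^{r-1}\rceil$.

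The gap is the last step, matching the stated $\omega_{r,i}$, and it cannot be closed. Write $n=n_r(i)$, $L=\log(1+N/K_r)$ and $\Lambda=\log(2KR/\delta)$. Since $\beta_{r,i}^2\asymp d(L+\Lambda)$, your Step~3 display is not a valid inequality: it replaces $L+\Lambda$ by $L+\sqrt{L\Lambda}$, which is much smaller once $\Lambda\gg L$. What Steps 1--3 actually give is
\[
\hat{\mu}_{r,i}-\mu_i\;\lesssim\; d\sqrt{\tfrac{L(L+\Lambda)}{n}}+\sqrt{\tfrac{\Lambda}{n}}.
\]
The fold-in proposed in Step~4 also fails. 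The inequality $\sqrt{\Lambda}\le c\,d\,(L\Lambda)^{1/4}$ forces $\Lambda\le c^4d^4L$, so no $a\le\sqrt{ab}$ manipulation converts the $\Lambda^{1/2}$ dependence of the noise slack into the $\Lambda^{1/4}$ dependence of the stated $\omega_{r,i}$. Separately, $|\mathcal{A}_{r-1}|\le K_r$ gives $n_r(i)\ge\lfloor N/K_r\rfloor$, which is the wrong direction for $\log(1+n_r(i))\le L$; use instead that $\log(1+n)/n$ is decreasing.

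This is not a matter of constants: the stated $\omega_{r,i}$ is too small for the lemma to hold. Let arm $i$ truthfully report a fixed $x_i$ with $\|x_i\|_2=1$ and $\mu_i\ge 0$. Then $V_{r,i}=I+nx_ix_i^\top$ and $\hat{\theta}_{r,i}=\frac{1}{1+n}\big(\sum_t r_{t,i}\big)x_i$. Hence $\hat{\mu}_{r,i}=\frac{1}{1+n}\sum_t r_{t,i}$ and $\mathrm{RLCB}_{r,i}\le\frac{n}{1+n}\sum_t r_{t,i}<\mathrm{AUCB}_{r,i}$, so the arm passes. On $\mathcal{E}^{\mathrm{noise}}$ the sum $\sum_t(r_{t,i}-\mu_i)$ can be close to $\sqrt{n\Lambda/2}$ with positive probability (e.g.\ for Gaussian noise). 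This gives $\hat{\mu}_{r,i}-\mu_i\approx\sqrt{\Lambda/(2n)}$, which exceeds $4d\sqrt{(L+\sqrt{L\Lambda})/n}$ once $\Lambda$ is a large constant multiple of $d^4L$. With the lemma's $\delta$ one has $\Lambda\ge T\zeta^2/(18Kd^2\log^2(1+N))$, nearly linear in $T$, while $L$ is only logarithmic and $\Lambda/n\to0$. So for fixed $\zeta$ and large $T$ the statement fails.

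The paper's proof does not reach the stated form either. It ends at $\omega_{r,i}=(2\sqrt{2}d+1)\big[\sqrt{\tfrac{K_r}{N}L}+\sqrt{\tfrac{K_r}{N}L\Lambda}\big]$. Since $\Lambda\ge L$ for this $\delta$, that is your ``cruder'' $d\sqrt{\tfrac{K_r}{N}(L^2+L\Lambda)}$ up to constants, and it is this form that the proof of Theorem~\ref{theorem:errorbound} actually uses. Prove that version.

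Also keep your trace identity $\sum_t\|x_{t,i}\|^2_{V_{r,i}^{-1}}=d-\mathrm{tr}(V_{r,i}^{-1})\le d$ rather than treating the two bounds as interchangeable. It removes the extra $\sqrt{L}$ and gives $\omega_{r,i}\lesssim d\sqrt{\tfrac{K_r}{N}(L+\Lambda)}$. That is sharper than the paper's bound, and by the example above its $\Lambda$-dependence is correct up to the factor $d$.
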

With Lemmas~\ref{lemma:GTCpass}, \ref{lemma:concentration} and 
\ref{lemma:gtc_bound} in place, the remaining analysis is similar to that of SH~\cite{karnin2013almost}. Under Nash 
Equilibrium, every active arm passes the GTC to maximize its utility function value 
(Lemma~\ref{lemma:GTCpass}), so the strategic deviation bound 
$\hat{\mu}_{r,i} \leq \mu_i + \omega_{r,i}$ holds for all 
active arms conditioned on $\mathcal{E}^{\mathrm{noise}}$. If the optimal arm $i^*$ is 
incorrectly eliminated at epoch $r^*$, there must exist a 
suboptimal arm $i$ with $\hat{\mu}_{r^*,i} \geq 
\hat{\mu}_{r^*,i^*}$, which combined with 
Lemma~\ref{lemma:gtc_bound} indicates $\Delta_i \leq 
\omega_{r^*,i^*} + \omega_{r^*,i}$. Setting $\delta_r = \delta/(KR)$ and summing them up over all epochs yields the 
upper bound of MESHA's failure probability in Theorem~\ref{theorem:errorbound}.
\begin{remark}
\label{remark:est_event}
We can also define the event \[\mathcal{E}^{\mathrm{est}} := \left\{\theta^* \in \mathcal{C}_{r,i}: \forall r \in [R], i \in \mathcal{A}_{r-1}\right\}\] with $\mathcal{C}_{r,i} = \{\theta \in \mathbb{R}^d :
\|\hat{\theta}_{r,i} - \theta\|_{V_{r,i}} \leq
\sqrt{d\log\big(\frac{KR(1+n_r(i))}{\delta}\big)}\}$. When all arms report truthfully, $\mathcal{E}^{\mathrm{est}}$ holds with probability at least $1-\delta$ and then all arms pass GTC check (the proof is in Appendix \ref{appendix:B}). \cite{kleine2024strategic} analyzes the regret bound of OptGTM for strategic linear contextual bandits conditioned on a similar event with $\mathcal{E}^{\mathrm{est}}$ (Assumption in Lemma E.1 of \cite{kleine2024strategic}). However, $\mathcal{E}^{\mathrm{est}}$ does not 
hold under strategic reporting: each arm $i$ reports 
manipulated features, so the estimator 
$\hat{\theta}_{r,i}$ is built on corrupted knowledge and may diverge from $\theta^*$. In 
particular, $\mathcal{E}^{\mathrm{est}}$ requires $\theta^*$ to 
lie in the confidence ellipsoid of a single global 
estimator, which is not likely to happen when different arms report different manipulated features. 

Crucially, our analysis is independent of whether $\mathcal{E}^{est}$ holds. Instead, MESHA triggers every active arm to pass the GTC 
check under Nash Equilibrium, which by Lemma~\ref{lemma:GTCpass} is a necessary 
condition under any Nash Equilibrium. Under this condition, 
$\mathcal{E}^{\mathrm{noise}}$ holds with probability at 
least $1 - \delta$, since it depends only on the 
concentration of true rewards around their means and is 
independent of the reported features. Importantly, each 
arm $i$ maintains its own per-arm estimator 
$\hat{\theta}_{r,i}$, which may converge to a different 
value than $\theta^*$; what matters is not where 
$\hat{\theta}_{r,i}$ converges, but rather what the GTC 
check implies about the empirical mean $\hat{\mu}_{r,i}$.
\end{remark}

\section{Conclusion and Future Directions}
In this paper, we studied the problem of fixed-budget Best 
Arm Identification in strategic linear bandits, where arms 
may strategically misreport their feature vectors to maximize 
their probability of being identified as the best arm. We 
proposed MESHA, which equips uniform sampling with an 
epoch-wise Grim Trigger Condition to handle strategic 
manipulation. We proved that passing the GTC is a necessary 
condition for any arm to maximize its utility function values under Nash Equilibrium, and derived a 
$(\zeta, T)$-PAC failure probability bound showing that MESHA 
identifies a near-optimal arm with high probability. our upper bound reveals an $\mathcal{O}(d^2\log T)$ statistical overhead arising from the Grim Trigger Condition, and we further demonstrated that 
state-of-the-art linear BAI algorithms relying on G-optimal 
design-based sampling rules fail under strategic reporting 
due to the {\em starvation attack}, a failure that cannot be 
addressed by any reward-level consistency check. Our numerical results indicates that MESHA consistently identify the optimal arm across various $T$, feature dimensions $d$ and arm counts $K$.

Several directions remain open for future work. First, our 
theoretical guarantees hold under Nash Equilibrium, which 
assumes fully rational arms; extending the analysis to handle 
deviations from equilibrium behavior remains yet to be explored. 
Second, a precise characterization of the statistical cost of 
robustness in strategic linear bandits---in particular, 
whether the $\mathcal{O}(d^2\log T)$ penalty reflects a 
fundamental limitation of the problem or an artifact of the 
current mechanism design---remains an important open 
question. Third, extending MESHA to the fixed-confidence 
setting would require a delicate design of stopping criterion and novel
analysis of GTC mechanism for the uncertain budget $T$. 
Finally, BAI attempts in environments where arms can 
also manipulate their rewards, rather than only their feature 
vectors, would further broaden the scope of this work.

\newpage

{\appendices
\begin{center}
\textbf{Appendices}\end{center}

The appendices presents detailed proofs for all theoretical 
results in this work and additional experimental details, organized in the order they appear in 
the main text. We also provide more discussion on the failure probability upper bound of MESHA. 

Appendix \ref{appendix:usefultheorems} provides basic facts used in our analysis.
Appendix \ref{appendix:A} proves Lemma 
\ref{lemma:GTCpass}, showing that passing the GTC is a 
necessary condition for arms maximizing utilities under any Nash Equilibrium. Appendix 
\ref{appendix:B} completes the proof of our main result, 
Theorem \ref{theorem:errorbound}, along with two 
supporting lemmas introduced in Section \ref{section:sketch}. 

Appendix \ref{appendix:C} proves Theorem \ref{theorem:misthm}, 
which establishes the theoretical foundation for the 
{\it {starvation attack}} in Section \ref{section:lowerbound}. 
Appendix \ref{appendix:C} proves Theorem 
\ref{theorem:manipulationlevel}, bounding the average 
strategic deviation of any arm that survives the GTC. 
Appendix \ref{appendix:E} presents the pseudo-code of OD-LinBAI-GTC, which combines OD-LinBAI\cite{yang2022minimax} with epoch-wise GTC mechanism. 

Appendix \ref{appendix:tightness} 
investigates the tightness of the $O(d^2\log T)$ in 
Theorem \ref{theorem:errorbound}: we show that the additional $d^2$ factor as the cost of bounding strategic deviation is simultaneously achievable by an explicit 
reporting strategy, and conjectures that the $d^2$ factor is 
unavoidable for any algorithm including epoch-wise GTC mechanism and feature-independent sampling rules. Finally, Appendix \ref{appendix:G} provides 
additional experimental details.
\section{Useful Facts}
\label{appendix:usefultheorems}
Here are some useful facts that are applied in our proof.
\begin{theorem}{(Confidence Ellipsoid, Theorem 2 in \cite{abbasi2011improved})}
\label{theorem:conf}
    Let $\{F_t\}_{t=0}^\infty$ be a filtration.
Let $\{\eta_t\}_{t=1}^\infty$ be a real-valued stochastic process such that $\eta_t$ is $F_t$-measurable and $\eta_t$ is conditionally $R$-sub-Gaussian for some $R \geq 0$ i.e.
\[
\forall \lambda \in \mathbb{R} \quad \mathbb{E}\left[e^{\lambda \eta_t} \mid F_{t-1}\right] \leq \exp\left( \frac{\lambda^2 R^2}{2} \right).
\]

Let $\{X_t\}_{t=1}^\infty$ be an $\mathbb{R}^d$-valued stochastic process such that $X_t$ is $F_{t-1}$-measurable. Assume that $V$ is a $d \times d$ positive definite matrix. For any $t \geq 0$, define
\[
\overline{V}_t = V + \sum_{s=1}^t X_s X_s^\top
\qquad
S_t = \sum_{s=1}^t \eta_s X_s .
\]Furthermore, we let $V = \lambda I$, $\lambda > 0$, define
$Y_t = \langle X_t, \theta_* \rangle + \eta_t$ and assume that $\|\theta_*\|_2 \leq S$ and for all $t \geq 1$, $\|X_t\|_2 \leq L$ then with probability at least $1 - \delta$, for all $t \geq 0$, $\theta_*$ lies in the set
\[
C_t' = \left\{
\theta \in \mathbb{R}^d :
\|\widehat{\theta}_t - \theta\|_{\overline{V}_t}
\leq R \sqrt{d\log\left(
\frac{1 + t L^2/\lambda}{\delta}
\right)} + \lambda^{1/2} S
\right\}.
\]
\end{theorem}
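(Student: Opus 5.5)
The plan follows the self-normalized martingale argument of \cite{abbasi2011improved}: first establish a uniform-in-time bound on $\|S_t\|_{\overline{V}_t^{-1}}$, then convert it into the ellipsoid around $\widehat{\theta}_t$ via the ridge decomposition. Here $\widehat{\theta}_t=\overline{V}_t^{-1}\sum_{s\le t}Y_sX_s$ is the regularized least-squares estimator.

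\textbf{Step 1 (exponential supermartingale).} For fixed $x\in\mathbb{R}^d$ define
\[
M_t^x=\exp\Big(\tfrac{1}{R}\langle x,S_t\rangle-\tfrac12\|x\|^2_{\overline{V}_t-V}\Big).
\]
Because $X_t$ is $F_{t-1}$-measurable and $\eta_t$ is conditionally $R$-sub-Gaussian, we have $\mathbb{E}[\exp(\langle x,X_t\rangle\eta_t/R-\langle x,X_t\rangle^2/2)\mid F_{t-1}]\le 1$. Hence $M_t^x$ is a nonnegative supermartingale with $\mathbb{E}M_t^x\le 1$.

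\textbf{Step 2 (method of mixtures).} Integrate $x$ against the Gaussian density $\mathcal{N}(0,V^{-1})$, taking $V=\lambda I$. Completing the square gives
\[
M_t=\Big(\tfrac{\det V}{\det \overline{V}_t}\Big)^{1/2}\exp\Big(\tfrac{1}{2R^2}\|S_t\|^2_{\overline{V}_t^{-1}}\Big).
\]
By Fubini, $M_t$ is still a nonnegative supermartingale with $\mathbb{E}M_t\le1$.

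\textbf{Step 3 (uniformity in $t$).} Let $\tau$ be the first time the bad event $\{M_t>1/\delta\}$ occurs. Apply optional stopping together with Fatou to the stopped process $M_{\tau\wedge t}$, and then Markov's inequality (equivalently, Ville's maximal inequality). This gives, with probability at least $1-\delta$ and simultaneously for all $t\ge0$,
\[
\|S_t\|^2_{\overline{V}_t^{-1}}\le 2R^2\log\Big(\tfrac{\det(\overline{V}_t)^{1/2}\det(\lambda I)^{-1/2}}{\delta}\Big).
\]

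\textbf{Step 4 (from noise to parameter error).} Write $\widehat{\theta}_t-\theta_*=\overline{V}_t^{-1}S_t-\lambda\overline{V}_t^{-1}\theta_*$. Then
\[
\|\widehat{\theta}_t-\theta_*\|_{\overline{V}_t}\le\|S_t\|_{\overline{V}_t^{-1}}+\lambda\|\theta_*\|_{\overline{V}_t^{-1}}\le\|S_t\|_{\overline{V}_t^{-1}}+\lambda^{1/2}S,
\]
where the last inequality uses $\overline{V}_t\succeq\lambda I$.

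\textbf{Step 5 (determinant bound).} By AM--GM on the eigenvalues and $\mathrm{tr}(\overline{V}_t)\le d\lambda+tL^2$, we get $\det\overline{V}_t\le(\lambda+tL^2/d)^d$. Therefore the log-determinant ratio is at most $d\log(1+tL^2/(d\lambda))$.

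\textbf{Main obstacle.} The remaining step is matching the exact radius stated, $R\sqrt{d\log((1+tL^2/\lambda)/\delta)}$. Steps 3--5 yield $R\sqrt{2\log(1/\delta)+d\log(1+tL^2/(d\lambda))}$. Absorbing the second term is immediate, since $d\log(1+tL^2/(d\lambda))\le d\log(1+tL^2/\lambda)$. Absorbing $2\log(1/\delta)$ into $d\log(1/\delta)$ requires $d\ge2$, so the case $d=1$ does not follow directly. I would handle it either by keeping the sharper form from \cite{abbasi2011improved} and regarding the stated form as a relaxation valid for $d\ge 2$ (all uses in the paper have $d\ge2$), or by inflating the constant in the radius for $d=1$. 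In either case I would state explicitly which variant is being used downstream.
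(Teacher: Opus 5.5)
Your proposal is correct and follows essentially the same route as the source: the paper gives no proof of its own and simply cites Theorem~2 of \cite{abbasi2011improved}, whose proof is your Gaussian-mixture supermartingale with a stopping-time (Ville) argument, followed by the ridge decomposition and the determinant--trace inequality. Your $d=1$ caveat is in fact a defect of the statement, not of your argument: take $d=1$, $\lambda=L=1$, $X_t\equiv 1$, i.i.d.\ $\mathcal{N}(0,R^2)$ noise, $\theta_*=0$, $S=0$ and $\delta=e^{-10}$; at $t=10$ the stated bound fails iff $|S_{10}|/(R\sqrt{10})>\sqrt{1.1(\log 11+10)}\approx 3.69$, where $S_{10}/(R\sqrt{10})\sim\mathcal{N}(0,1)$, an event of probability about $2.2\times 10^{-4}>\delta\approx 4.5\times 10^{-5}$, so one of your two fixes is genuinely required rather than merely convenient---and since the paper states its results for arbitrary $d$, not only $d\ge 2$, its choice of $\beta_{r,i}$ inherits the same restriction.
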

\begin{theorem}{(Hoeffding Inequality, Theorem 2 in \cite{hoeffding1963probability}}
    If $X_1, X_2, \cdots, X_n$ are independent and $a_i \leq X_i \leq b_i$ ($i=1,2,\cdots,n$), then for $t>0$
\[
\Pr\left\{\overline{X} - \mu \geq t\right\} \leq e^{-2n^2t^2/\sum_{i=1}^n (b_i-a_i)^2}.
\]
\end{theorem}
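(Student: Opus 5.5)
We would prove this by the standard Chernoff (exponential-moment) argument combined with Hoeffding's lemma for bounded random variables. Write $S=\sum_{i=1}^n (X_i-\mathbb{E}X_i)$, so that $\{\overline{X}-\mu\ge t\}=\{S\ge nt\}$, where $\mu=\mathbb{E}\overline{X}$. For any $s>0$, Markov's inequality applied to $e^{sS}$ gives
\[
\Pr\{S\ge nt\}\le e^{-snt}\,\mathbb{E}\big[e^{sS}\big]=e^{-snt}\prod_{i=1}^n \mathbb{E}\big[e^{s(X_i-\mathbb{E}X_i)}\big].
\]
The factorization uses the independence of the $X_i$. The proof then reduces to bounding each factor and optimizing over $s$.

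The key step, and the only nontrivial one, is Hoeffding's lemma. If $Y$ is a random variable with $\mathbb{E}Y=0$ and $a\le Y\le b$, then $\mathbb{E}e^{sY}\le \exp\big(s^2(b-a)^2/8\big)$.

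\emph{Convexity step.} Since $y\mapsto e^{sy}$ is convex, for $y\in[a,b]$ we have
\[
e^{sy}\le \frac{b-y}{b-a}e^{sa}+\frac{y-a}{b-a}e^{sb}.
\]
Taking expectations and using $\mathbb{E}Y=0$ yields $\mathbb{E}e^{sY}\le \frac{b}{b-a}e^{sa}-\frac{a}{b-a}e^{sb}$.

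\emph{Reparametrization.} Set $p=-a/(b-a)\in[0,1]$ and $u=s(b-a)$. The right-hand side becomes $e^{L(u)}$ with
\[
L(u)=-pu+\log\big(1-p+pe^{u}\big).
\]

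\emph{Taylor step.} One checks that $L(0)=0$ and $L'(0)=0$. Moreover, $L''(u)=q(1-q)$ with $q=pe^u/(1-p+pe^u)\in[0,1]$, so $L''(u)\le 1/4$. Taylor's theorem with remainder then gives $L(u)\le u^2/8$. This calculus step is the only place requiring care; everything else is bookkeeping.

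We apply the lemma to $Y_i=X_i-\mathbb{E}X_i$, which lies in an interval of length $b_i-a_i$. This gives
\[
\Pr\{\overline{X}-\mu\ge t\}\le \exp\Big(-snt+\tfrac{s^2}{8}\textstyle\sum_i (b_i-a_i)^2\Big).
\]
We finish by minimizing the quadratic exponent over $s>0$. The minimizer is $s=4nt/\sum_i(b_i-a_i)^2$, and substituting it gives exactly $\exp\big(-2n^2t^2/\sum_i(b_i-a_i)^2\big)$, as claimed.
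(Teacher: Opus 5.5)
Your proof is correct: the Chernoff bound, the convexity-plus-Taylor proof of Hoeffding's lemma via $L''(u)=q(1-q)\le 1/4$, and the optimization at $s=4nt/\sum_i(b_i-a_i)^2$ all check out, reading $\mu$ as $\mathbb{E}\overline{X}$ (the degenerate case $a_i=b_i$ is trivial). The paper gives no proof of its own and simply cites Hoeffding (1963), whose original argument is the one you reproduce.
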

\section{Proof of Lemma \ref{lemma:GTCpass}}
\label{appendix:A}

% Before giving the proof, we restate the lemma \ref{lemma:GTCpass} for convenience.

\lemGTCpass*

\begin{proof}
Consider a fixed epoch $r$ and a history 
$\mathcal{H}_{r-1} = \{(x_{t,i_t}, r_{t,i_t})\}_{t \leq t_r}$ 
at the commencement of epoch $r$. Assume arm $i \in 
\mathcal{A}_{r-1}$ is active and that $\mathcal{H}_{r-1}$ is 
reachable with strictly positive probability under the strategy 
profile $\boldsymbol{\sigma}$. We proceed by contradiction. 
Suppose that under strategy $\sigma_i$, arm $i$ fails the GTC 
check at the end of epoch $r$, conditioned on the good event 
$\mathcal{E}^{\mathrm{true}}$.

By the design of MESHA, an arm that fails the GTC check at 
epoch $r$ is eliminated immediately and permanently from all 
subsequent epochs. Since the utility $U_i$ is defined as the 
probability of being identified as the best arm $i_{\text{out}}$, 
and a permanently eliminated arm can never be identified as 
$i_{\text{out}}$, the utility of arm $i$ from this history 
onward is zero:
\[
U_i(\text{MESHA}, \{\sigma_i, \boldsymbol{\sigma}_{-i}\} \mid 
\mathcal{H}_{r-1}, \mathcal{E}^{\mathrm{noise}}) = 0.
\]

Now consider a unilateral deviation to a calibrated strategy 
$\sigma_i^0$ from history $\mathcal{H}_{r-1}$ onward, which 
satisfies the mean-matching condition 
$\langle \theta_i^*, x_{t,i} \rangle = \mu_i$ for all 
subsequent pulls. On the good event $\mathcal{E}^{\mathrm{noise}}$, 
the concentration of reward noise and the definition of the 
confidence bounds guarantee that arm $i$ passes the GTC check 
at the end of epoch $r$ under $\sigma_i^0$.

Conditioned on passing the GTC check, arm $i$ remains in the 
active set $\mathcal{A}_r$ and continues to compete in 
subsequent epochs. Since the sequential halving rule retains 
the top $\lceil K/2^r \rceil$ arms based on empirical means, 
and the reward noise has non-degenerate support, arm $i$ has 
a strictly positive probability of achieving a sufficiently 
high empirical mean to survive every subsequent epoch and be 
ultimately output as $i_{\text{out}}$, regardless of whether 
$i$ is the optimal arm. Therefore, the expected utility under 
this deviation is strictly positive:
\[
U_i(\text{MESHA}, \{\sigma_i^0, \boldsymbol{\sigma}_{-i}\} 
\mid \mathcal{H}_{r-1}, \mathcal{E}^{\mathrm{noise}}) > 0.
\]

This means $\sigma_i^0$ is a profitable deviation from 
$\mathcal{H}_{r-1}$, which contradicts the assumption that 
$\boldsymbol{\sigma}$ is a Nash Equilibrium. Therefore, 
conditioned on $\mathcal{E}^{\mathrm{noise}}$, every active 
arm must pass the GTC check at the end of each epoch under 
any Nash Equilibrium.
\end{proof}
\section{Proof of Theorem \ref{theorem:errorbound}}
\label{appendix:B}

Throughout this section, we set
\[
\delta = \frac{\lceil\log_2 K\rceil^2}{T}\exp\!\left(
-\frac{T\zeta^2}{18Kd^2\log^2\!\left(1+\frac{T}{\lceil
\log_2 K\rceil}\right)}\right), \quad
\delta_r = \frac{\delta}{KR}, \quad 
R = \lceil\log_2 K\rceil, \quad 
N = \frac{T}{\lceil\log_2 K\rceil}.
\]

We first establish two supporting lemmas before proving 
the main result.

\lemConcentration*

\begin{proof}
Fix any epoch $r \in [R]$ and arm $i \in \mathcal{A}_{r-1}$.
Let $\mathcal{F}_{t-1} = \sigma(\{x_{s,i_s}, r_{s,i_s}\}_{s < t})$
denote the natural filtration generated by all observations up to
round $t-1$. Given $\|x_i^*\|_2 \leq 1$ for all arms $i \in [K]$
and $\|\theta^*\|_2 \leq 1$, each reward satisfies
$r_{t,i_t} - \mu_i = \langle\theta^*, x_{i_t}^*\rangle - \mu_i + \eta_t
\in [-2, 2]$ almost surely. Furthermore,
$\mathbb{E}[r_{t,i_t} - \mu_i \mid \mathcal{F}_{t-1}] = 0$, and hence
the sequence $\{r_{t,i_t} - \mu_i\}_{t \in S_r, i_t = i}$ forms a
martingale difference sequence with respect to $\mathcal{F}_{t-1}$.
Applying Azuma's inequality to the sum of $n_r(i)$ such terms:
\[
\mathbb{P}\!\left(\left|\sum_{t \in S_r, i_t = i} (r_{t,i_t} - \mu_i)\right|
> \sqrt{\frac{n_r(i) \log(2/\delta_r)}{2}}\right) \leq \delta_r.
\]

For each pair $(r, i)$ with $r \in [R]$ and $i \in \mathcal{A}_{r-1}$,
define the bad event
\[
B_{r,i}^{\mathrm{noise}} := \left\{\left|\sum_{t \in S_r, i_t=i}
(r_{t,i_t} - \mu_i)\right| > \sqrt{\frac{n_r(i) \log(2/\delta_r)}{2}}\right\}.
\]
From the bound above, $\mathbb{P}(B_{r,i}^{\mathrm{noise}}) \leq \delta_r$
for each pair $(r,i)$. Since there are at most $R \cdot K$ such pairs,
applying the union bound implies:
\[
\mathbb{P}\!\left((\mathcal{E}^{\mathrm{noise}})^c\right)
\leq \sum_{r \in [R],\, i \in \mathcal{A}_{r-1}}
\mathbb{P}(B_{r,i}^{\mathrm{noise}})
\leq KR \cdot \delta_r = KR \cdot \frac{\delta}{KR} = \delta.
\]
Therefore $\mathbb{P}(\mathcal{E}^{\mathrm{noise}}) \geq 1 - \delta$,
completing the proof.
\end{proof}
The following proof is for the probability of $\mathcal{E}^{\mathrm{est}}$ defined in Remark \ref{remark:est_event} when all arms report truthfully.
\begin{proof}[Proof of Remark \ref{remark:est_event}]
Fix any epoch $r \in [R]$ and arm $i \in \mathcal{A}_{r-1}$. Under truthful
reporting, the estimator $\hat{\theta}_{r,i}$ is computed via ridge regression
on the interaction history between the learner and arm $i$ during epoch $r$,
with design matrix $V_{r,i} = \lambda I_d + \sum_{t \in S_r, i_t = i}
x_{t,i} x_{t,i}^\top$ and response vector $\sum_{t \in S_r, i_t = i}
r_{t,i_t} x_{t,i}$. The rewards satisfy $r_{t,i_t} = \langle\theta^*,
x_{i_t}^*\rangle + \eta_t$ where $\eta_t$ is $\xi$-sub-Gaussian. By the
elliptical potential lemma for ridge regression
\cite{abbasi2011improved}, with probability at least $1 - \delta_r$:
\[
\|\hat{\theta}_{r,i} - \theta^*\|_{V_{r,i}} \leq \beta_{r,i},
\quad\text{where}\quad
\beta_{r,i} = \sqrt{d\log\!\left(\frac{1+n_r(i)}{\delta_r}\right)} + 1.
\]
By the definition of $\mathcal{C}_{r,i}$, this is equivalent to
$\theta^* \in \mathcal{C}_{r,i}$, so
$\mathbb{P}(\theta^* \notin \mathcal{C}_{r,i}) \leq \delta_r$.

For each pair $(r, i)$ with $r \in [R]$ and $i \in \mathcal{A}_{r-1}$,
define the bad event
\[
B_{r,i}^{\mathrm{est}} := \{\theta^* \notin \mathcal{C}_{r,i}\}.
\]
From the bound above, $\mathbb{P}(B_{r,i}^{\mathrm{est}}) \leq \delta_r$.
Since there are at most $R \cdot K$ such pairs, the union bound gives
\[
\mathbb{P}\bigl((\mathcal{E}^{\mathrm{est}})^c\bigr)
= \mathbb{P}\!\left(\bigcup_{r \in [R],\, i \in \mathcal{A}_{r-1}}
B_{r,i}^{\mathrm{est}}\right)
\leq \sum_{r \in [R],\, i \in \mathcal{A}_{r-1}}
\mathbb{P}(B_{r,i}^{\mathrm{est}})
\leq KR \cdot \delta_r
= KR \cdot \frac{\delta}{KR}
= \delta.
\]
Hence $\mathbb{P}(\mathcal{E}^{\mathrm{est}}) \geq 1 - \delta$.
\end{proof}

\lemGTCbound*
\begin{proof}
By the definition of the GTC check, an arm $i$ passing 
the check at epoch $r$ with $\delta_r = \frac{\delta}{2KR}$ 
implies:
\[
\sum_{t\in S_r,i_t=i}\!\!\left(\langle\hat{\theta}_{r,i},
x_{t,i}\rangle - \beta_{r,i}\|x_{t,i}\|_{V_{r,i}^{-1}}
\right) \leq \sum_{t\in S_r,i_t=i} r_{t,i_t} + 
\sqrt{\frac{n_r(i)}{2}\log\frac{2}{\delta_r}}.
\]
Rearranging and using $\sum_{t \in S_r, i_t=i} r_{t,i_t} 
\leq n_r(i)\mu_i + \sqrt{\frac{n_r(i)}{2}\log\frac{2}
{\delta_r}}$ from $\mathcal{E}^{\mathrm{noise}}$:
\[
\sum_{t\in S_r,i_t=i}\langle\hat{\theta}_{r,i}, 
x_{t,i}\rangle \leq n_r(i)\mu_i + \sum_{t\in S_r,i_t=i}
\beta_{r,i}\|x_{t,i}\|_{V_{r,i}^{-1}} + 
\sqrt{2n_r(i)\log\frac{2}{\delta_r}}.
\]
Dividing both sides by $n_r(i)$:
\begin{equation}
\label{eq:gtc_bound_app}
\hat{\mu}_{r,i} \leq \mu_i + 
\underbrace{\frac{\beta_{r,i}}{n_r(i)}\sum_{t\in S_r,
i_t=i}\|x_{t,i}\|_{V_{r,i}^{-1}}}_{(\spadesuit)} + 
\sqrt{\frac{2\log(2/\delta_r)}{n_r(i)}}.
\end{equation}
We bound $(\spadesuit)$ using the Cauchy-Schwarz inequality 
and the elliptical potential lemma \cite{abbasi2011improved}:
\[
\frac{\beta_{r,i}}{n_r(i)}\sum_{t\in S_r,i_t=i}
\|x_{t,i}\|_{V_{r,i}^{-1}} \leq \frac{\beta_{r,i}}
{\sqrt{n_r(i)}}\sqrt{2d\log(1+n_r(i))}.
\]
Substituting $\beta_{r,i} = \sqrt{d\log\frac{1+n_r(i)}
{\delta_r}+1}$, $\delta_r = \frac{\delta}{KR}$, and 
$n_r(i) = \frac{N}{\lceil K/2^{r-1}\rceil}$ into 
\eqref{eq:gtc_bound_app} yields $\hat{\mu}_{r,i} \leq 
\mu_i + \omega_{r,i}$, where
\[
\omega_{r,i} = (2\sqrt{2}d+1)\left[\sqrt{\frac{\lceil 
K/2^{r-1}\rceil}{N}\log\!\left(1+\frac{N}{\lceil 
K/2^{r-1}\rceil}\right)} + \sqrt{\frac{\lceil K/2^{r-1}
\rceil}{N}\log\!\left(1+\frac{N}{\lceil K/2^{r-1}\rceil}
\right)\log\frac{2KR}{\delta}}\right],
\]
completing the proof.
\end{proof}

Now we are ready to prove Theorem~\ref{theorem:errorbound}.

\begin{proof}[Proof of Theorem \ref{theorem:errorbound}]

Fix any strategy profile $\boldsymbol{\sigma} \in 
\text{NE}(\text{MESHA})$. With $\delta$ and $\delta_r$ 
as defined at the beginning of this appendix, 
Lemma~\ref{lemma:concentration} guarantees that 
$\mathcal{E}^{\text{true}}$ holds with probability at 
least $1 - \delta$. We condition on $\mathcal{E}^{\text{true}}$ 
for the remainder of the proof.

\textbf{Bounding $\hat{\mu}_{r,i}$.}
By Lemma~\ref{lemma:GTCpass}, under $\boldsymbol{\sigma} 
\in \text{NE}(\text{MESHA})$, every active arm passes the 
GTC check at every epoch. By Lemma~\ref{lemma:gtc_bound}, 
for any arm $i \in \mathcal{A}_{r-1}$ and epoch $r \in [R]$:
\[
\hat{\mu}_{r,i} \leq \mu_i + \omega_{r,i},
\]
conditioned on $\mathcal{E}^{\text{true}}$. Now we would like to show \[
\hat{\mu}_{r,i} \geq \mu_i - \omega_{r,i}
\]
also holds by contradiction. Assume the contrary that under some $\boldsymbol{\sigma} \in
\mathrm{NE}(\text{MESHA})$, there exists an epoch $r$, an
active arm $i$, and a reachable history such that, conditioned
on $\mathcal{E}^{\mathrm{noise}}$, arm $i$ attains
$\hat{\mu}_{r,i} < \mu_i - \omega_{r,i}$ with positive probability
under $\sigma_i$.
Besides, let $\sigma_i'$ denote the strategy $\sigma'_i$ that is with empirical mean $\hat{\mu}_{r,i}'$ such that $<\mu_i - \omega_{r,i}<\hat{\mu}_{r,i}'<\mu_i + \omega_{r,i}$.
When $\sigma_i'$ is applied, Remark \ref{remark:est_event} implies that arm $i$ can also the GTC.
As $\sigma_i'$ apparently yields a larger $\hat{\mu}_{r,i}$ than $\sigma_i$ and thus strictly increases arm $i$'s utility function (probability of being ultimately selected), the assumption $\boldsymbol{\sigma} \in
\mathrm{NE}(\text{MESHA})$ is contradicted. Hence, under any Nash equilibrium,
\[
\hat{\mu}_{r,i} \geq \mu_i - \omega_{r,i}.
\]
Therefore $\hat{\mu}_{r,i} \in [\mu_i - \omega_{r,i}, 
\mu_i + \omega_{r,i}]$ holds conditioned on $\mathcal{E}^{\text{noise}}$, for all active 
arms and all epochs.

\textbf{Bounding $\omega$.}
Since $\omega_{r,i}$ is maximized at epoch $r = 1$ where 
$\lceil K/2^0\rceil = K$:
\[
\omega_{r,i} \leq \omega_{1,i} \leq (2\sqrt{2}d+1)
\left[\sqrt{\frac{K}{N}\log(1+N)} + \sqrt{\frac{K}{N}
\log(1+N)\log\frac{2KR}{\delta}}\right] =: \omega^*.
\]
Hence $\omega := \max_{r,\, i \in \mathcal{A}_{r-1}}
(\omega_{r,i^*} + \omega_{r,i}) \leq 2\omega^*$.

\textbf{Failure event analysis.}
By Lemma~\ref{lemma:GTCpass}, the optimal arm $i^*$ can 
only be eliminated via the sequential halving step. Let 
$r^*_{i^*} = \max\{r : i^* \in \mathcal{A}_{r-1}\}$ 
denote the epoch at which $i^*$ is incorrectly eliminated. 
For any suboptimal arm $i \in \mathcal{A}_{r^*_{i^*}}$ 
that survives over $i^*$, the empirical ordering 
$\hat{\mu}_{r^*_{i^*},i^*} \leq \hat{\mu}_{r^*_{i^*},i}$ 
gives:
\[
\mu_{i^*} - \omega^* \leq \hat{\mu}_{r^*_{i^*},i^*} 
\leq \hat{\mu}_{r^*_{i^*},i} \leq \mu_i + \omega^*,
\]
hence $\Delta_i = \mu_{i^*} - \mu_i \leq 2\omega^*$. 
Setting $\zeta = 2\omega^*$:
\[
\big\{\Delta_{i_{\text{out}}} \geq \zeta\big\} \subseteq 
\left(\mathcal{E}^{\text{noise}}\right)^c,
\]
and therefore:
\[
\mathbb{P}\big(\Delta_{i_{\text{out}}} \geq \zeta\big) 
\leq \mathbb{P}\!\left(\left(\mathcal{E}^{\text{true}}
\right)^c\right) \leq \delta = 
\frac{\lceil\log_2 K\rceil^2}{T}\exp\!\left(
-\frac{T\zeta^2}{18Kd^2\log^2\!\left(1+\frac{T}{\lceil
\log_2 K\rceil}\right)}\right),
\]
which completes the proof.
\end{proof}
\section{Proof of Theorem \ref{theorem:manipulationlevel}}
\label{appendix:C}

We first establish a supporting lemma on the concentration 
of the ridge regression estimator under strategic 
manipulation, then prove the main result.

\begin{lemma}
\label{lemma:realcon}
Under Assumption \ref{assumption:relaxed}, for any arm 
$i$ in epoch $r$, the ridge regression estimator 
$\hat{\theta}_{r,i}$ satisfies the following concentration 
bound with probability at least $1 - \delta$:
\[
|\langle\hat{\theta}_{r,i} - \theta_i^*, x\rangle| \leq 
\left(\beta_{r,i} + \varepsilon\sqrt{n_r(i)}\right)
\|x\|_{V_{r,i}^{-1}}, \quad \forall x \in \mathbb{R}^d,
\]
where $\beta_{r,i} = \sqrt{d\log\big(\frac{1+n_r(i)}{\delta}
\big)} + 1$.
\end{lemma}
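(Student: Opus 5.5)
We treat the rewards of arm $i$ during epoch $r$ as generated by a misspecified linear model in the reported features. For each round $t\in S_r$ with $i_t=i$, write
\[
r_{t,i} = \langle\theta_i^*, x_{t,i}\rangle + \eta_t - e_t, \qquad e_t := \langle\theta_i^*, x_{t,i}\rangle - \mu_i .
\]
By Assumption~\ref{assumption:relaxed} we have $e_t\le\varepsilon$. We will also use $|e_t|\le\varepsilon$; this holds if we read the assumption as two-sided, or if we additionally assume the lower side, which matches how the lemma is used. Here $\theta_i^*$ is the pseudo-parameter of arm $i$, with $\|\theta_i^*\|_2\le 1$ as in Example~\ref{example:example2}.

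Plugging this into the ridge estimator gives
\[
\hat\theta_{r,i}-\theta_i^* = -\lambda V_{r,i}^{-1}\theta_i^* + V_{r,i}^{-1}\sum_t x_{t,i}\eta_t - V_{r,i}^{-1}\sum_t x_{t,i} e_t .
\]
For any $x$ we apply Cauchy--Schwarz in the $V_{r,i}$ geometry, $|\langle V_{r,i}^{-1}z, x\rangle|\le \|z\|_{V_{r,i}^{-1}}\|x\|_{V_{r,i}^{-1}}$, and bound the three pieces separately.

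\textbf{Bias and noise terms.} The bias term is at most $\lambda^{1/2}\|\theta_i^*\|_2\le 1$. For the noise term, $x_{t,i}$ is chosen by arm $i$'s strategy from the history, so it is $\mathcal F_{t-1}$-measurable, and $\eta_t$ is conditionally $1$-sub-Gaussian. Theorem~\ref{theorem:conf} (self-normalized bound, with $L=S=\lambda=1$) then gives, with probability $1-\delta$,
\[
\Big\|\sum_t x_{t,i}\eta_t\Big\|_{V_{r,i}^{-1}}\le \sqrt{d\log\big((1+n_r(i))/\delta\big)} .
\]
Together these two contribute exactly $\beta_{r,i}$.

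\textbf{Misspecification term.} This is the main obstacle: we need $\|\sum_t x_{t,i}e_t\|_{V_{r,i}^{-1}}\le\varepsilon\sqrt{n_r(i)}$. Let $X$ be the $n\times d$ matrix with rows $x_{t,i}^\top$ and $e$ the vector of the $e_t$. Then
\[
\Big\|\sum_t x_{t,i}e_t\Big\|^2_{V_{r,i}^{-1}} = e^\top X(\lambda I+X^\top X)^{-1}X^\top e .
\]
The matrix $X(\lambda I+X^\top X)^{-1}X^\top$ has operator norm at most $1$, since its eigenvalues are of the form $s^2/(\lambda+s^2)<1$. Hence the quantity is at most $\|e\|_2^2\le n_r(i)\varepsilon^2$, using $|e_t|\le\varepsilon$. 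This sidesteps the crude triangle-inequality bound $\varepsilon\sum_t\|x_{t,i}\|_{V^{-1}}$, which would instead cost an extra $\sqrt{d\log n}$ factor.

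Summing the three contributions gives $(\beta_{r,i}+\varepsilon\sqrt{n_r(i)})\|x\|_{V_{r,i}^{-1}}$ uniformly in $x$ on the event of Theorem~\ref{theorem:conf}, which proves the lemma.
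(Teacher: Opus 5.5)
Your proof is correct and is essentially the paper's argument: your bias and noise terms together are exactly $\tilde{\theta}_{r,i}-\theta_i^*$ for the paper's auxiliary estimator built on the corrected responses $r_{t,i}+\varepsilon_{t,i}$, which the paper bounds in one step by Theorem~\ref{theorem:conf}, and your hat-matrix bound $X V_{r,i}^{-1}X^\top \preceq I$ is equivalent to the paper's step of Cauchy--Schwarz over rounds followed by $V_{r,i}^{-1}(V_{r,i}-\lambda I)V_{r,i}^{-1}\preceq V_{r,i}^{-1}$. Your caveat that the two-sided bound $|e_t|\le\varepsilon$ is needed, rather than only the one-sided Assumption~\ref{assumption:relaxed}, is well taken, since the paper's bound $\sum_t\varepsilon_{t,i}^2\le n_r(i)\varepsilon^2$ silently uses the same two-sided condition.
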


\begin{proof}
Define the auxiliary estimator $\tilde{\theta}_{r,i} = 
V_{r,i}^{-1}\sum_{t \in S_r: i_t=i}(r_{t,i_t} +\varepsilon_{t,i} )
x_{t,i}$, which is an unobservable estimator based on $\{(x_{t,i_t},r_{t,i_t}+\varepsilon_{t,i})\}_{t\in S_r,i_t=i}$. Applying Theorem \ref{theorem:conf} with $S=L=\lambda=1$, $\hat{\theta}_t=\tilde{\theta}_{r,i}$, $Y_t=r_{t,i}+\varepsilon_{t,i}$ and $\bar{V}_t=V_{r,i}$, we have that \[
|(\tilde{\theta}_{r,i} - \theta_i^*)^\top x| \leq \|\tilde{\theta}_{r,i}-\theta_i^*\|_{V_{r,i}}\|x\|_{V_{r,i}^{-1}}\leq
\beta_{r,i}\|x\|_{V_{r,i}^{-1}}
\]holds with probability 
at least $1 - \delta_r$, where $\beta_{r,i}=\sqrt{d\log(\frac{1+n_r(i)}{\delta})}+1$.

Expanding the expression actual estimator $\hat{\theta}_{r,i}$ and 
applying the triangle inequality yields:
\begin{align}
|\langle\hat{\theta}_{r,i} - \theta_i^*, x\rangle|&=\left|\left(V_{r,i}^{-1}\sum\limits_{t\in S_r,i_t=i}r_{t,i}x_{t,i}-\theta_i^*\right)^\top x\right| \nonumber \\
&= \left|\left(V_{r,i}^{-1}\sum_{t \in S_r, i_t=i} 
(-\varepsilon_{t,i}) x_{t,i}\right)^\top x + 
\left(V_{r,i}^{-1}\sum\limits_{t\in S_r,i_t=i}(r_{t,i}+\varepsilon_{t,i}) - \theta_i^*\right)^\top x\right|  \nonumber \\
&\leq \left|\sum_{t \in S_r, i_t=i} \varepsilon_{t,i} 
x_{t,i}^\top V_{r,i}^{-1} x\right| + 
|(\tilde{\theta}_{r,i} - \theta_i^*)^\top x| \nonumber \\
&\leq \left|\sum_{t \in S_r, i_t=i} \varepsilon_{t,i} 
x_{t,i}^\top V_{r,i}^{-1} x\right| + 
\beta_{r,i}\|x\|_{V_{r,i}^{-1}}, \label{eq:gap_theta_hat_truw}
\end{align}
where $\varepsilon_{t,i} = \langle\theta_i^*, x_{t,i}\rangle 
- \langle\theta^*, x_i^*\rangle$ is the round-wise strategic 
bias. Note that $\varepsilon_{t,i}\leq\varepsilon$ for all $t\in[T]$ and $i\in[K]$. By applying the Cauchy-Schwarz inequality, we have
\begin{align}
\left|\sum_{t \in S_r,i_t=i} \varepsilon_{t,i} x_{t,i}^\top 
V_{r,i}^{-1} x\right|& \leq \varepsilon\sqrt{n_r(i) 
\sum_{t \in S_r,i_t=i}(x_{t,i}^\top V_{r,i}^{-1} x)^2}\nonumber \\
&=\varepsilon\sqrt{n_r(i)\sum\limits_{t\in S_r,i_t=i}x^\top V_{r,i}^{-1}x_{t,i}x_{t,i}^\top V_{r,i}^{-1}x}
\nonumber \\
&=\varepsilon\sqrt{n_r(i)x^\top V_{r,i}^{-1}(V_{r,i}- I_{d\times d})V_{r,i}^{-1}x} \nonumber \\
&\leq\varepsilon\sqrt{n_r(i)x^\top V_{r,i}^{-1}x} \nonumber
\\
&= 
\varepsilon\sqrt{n_r(i)}\|x\|_{V_{r,i}^{-1}}.\label{eq:bound_epsilon}
\end{align}
By substituting \eqref{eq:bound_epsilon} into \eqref{eq:gap_theta_hat_truw}, we can complete this proof.
\end{proof}

\begin{proof}[Proof of Theorem \ref{theorem:manipulationlevel}]
By the definition of GTC, under Nash Equilibrium, Lemma \ref{lemma:GTCpass} implies that for any arm $i\in\mathcal{A}_{r-1}$ and $t\in S_r$, we have
\begin{equation}
\label{eq:gtc_thm3}
\sum_{t \in S_r, i_t=i} \left(\langle\hat{\theta}_{r,i}, 
x_{t,i}\rangle - \beta_{r,i}\|x_{t,i}\|_{V_{r,i}^{-1}}
\right) \leq \sum_{t \in S_r, i_t=i} r_{t,i} + 
\sqrt{2n_r(i)\log(2/\delta_r)}.
\end{equation}

By Lemma \ref{lemma:realcon}, under the good event 
$\mathcal{E}^{true}$, the term $\langle\hat{\theta}_{r,i}, 
x_{t,i}\rangle$ satisfies:
\[
\langle\hat{\theta}_{r,i}, x_{t,i}\rangle \geq 
\langle\theta_i^*, x_{t,i}\rangle - 
\left(\beta_{r,i} + \varepsilon\sqrt{n_r(i)}\right)
\|x_{t,i}\|_{V_{r,i}^{-1}}.
\]
Substituting this into the LHS of \eqref{eq:gtc_thm3}:
\[
\text{LHS} \geq \sum_{t \in S_r, i_t=i} 
\langle\theta_i^*, x_{t,i}\rangle - \sum_{t \in S_r, i_t=i}
\left(2\beta_{r,i} + \varepsilon\sqrt{n_r(i)}\right)
\|x_{t,i}\|_{V_{r,i}^{-1}}.
\]
Substituting $r_{t,i} = \mu_i + \eta_{t,i}$ into the RHS 
and rearranging \eqref{eq:gtc_thm3}:
\[
\sum_{t \in S_r,i_t=i}\left(\langle\theta_i^*, x_{t,i}\rangle - 
\mu_i\right) \leq \left(2\beta_{r,i} + 
\varepsilon\sqrt{n_r(i)}\right)\sum_{t \in S_r,i_t=i}
\|x_{t,i}\|_{V_{r,i}^{-1}} + \sum_{t \in S_r,i_t=i}\eta_{t,i} + 
\sqrt{2n_r(i)\log(2/\delta_r)}.
\]
Dividing by $n_r(i)$ and applying the elliptical potential 
bound $\sum_{t}\|x_{t,i}\|_{V_{r,i}^{-1}} \leq 
\sqrt{2dn_r(i)\log(1+n_r(i))}$ and the noise concentration 
bound from $\mathcal{E}^{\mathrm{noise}}$:
\[
\bar{\varepsilon}_{r,i} \leq \left(\frac{2\beta_{r,i}}
{\sqrt{n_r(i)}} + \varepsilon\right)
\sqrt{2d\log(1+n_r(i))} + 3\sqrt{\frac{\log(2/\delta_r)}
{n_r(i)}}.
\]
Simplifying the constants completes the proof.
\end{proof}
\section{OD-LinBAI-GTC}
\label{appendix:E}
In Section \ref{section:exps}, we evaluated OD-LinBAI 
equipped with the GTC mechanism as one of our baselines. To 
ensure experimental reproducibility, we present its 
pseudo-code in Algorithm \ref{alg:alg2}. Following standard 
epoch-based elimination algorithms, the epoch-wise 
allocation parameter $m$ is defined as:
\begin{equation}
\label{equation:m}
m = \frac{T - \min\big(K, \frac{d(d+1)}{2}\big) - 
\sum_{r=1}^{\lceil\log_2 d\rceil - 1}
\lceil\frac{d}{2^r}\rceil}{\lceil\log_2 d\rceil}.
\end{equation}

\begin{algorithm}[h]
\caption{OD-LinBAI Combined with GTC Mechanism}
\label{alg:alg2}
\begin{algorithmic}[1]
\STATE \textbf{Input:} budget $T$, arm set $\mathcal{A} = 
[K]$, reported vectors $\{x_{1,1}, \cdots, x_{1,K}\} 
\subset \mathbb{R}^d$ in the first round and $\lambda$.
\STATE \textbf{Initialize:} $t_0 = 0$, $\mathcal{A}_0 = 
\mathcal{A}$, $d_0 = d$
\STATE Compute $m$ according to \eqref{equation:m}
\FOR{$r = 1$ \textbf{to} $\lceil\log_2 d\rceil$}
    \STATE $d_r = \dim\left(\mathrm{span}\left(\{\bar{x}_{r-1,i} 
    : i \in \mathcal{A}_{r-1}\}\right)\right)$
    \IF{$d_r \neq d_{r-1}$}
        \STATE Find $B_r \in \mathbb{R}^{d_{r-1} \times d_r}$ 
        whose columns form an orthonormal basis of 
        $\mathrm{span}\left(\{\bar{x}_{r-1,i} : i \in 
        \mathcal{A}_{r-1}\}\right)$
        \FOR{each arm $i \in \mathcal{A}_{r-1}$}
            \STATE $\bar{x}_{r-1,i} \leftarrow B_r^\top 
            \bar{x}_{r-1,i}$
        \ENDFOR
    \ENDIF
    \IF{$r = 1$}
        \STATE Pull each arm $i \in \mathcal{A}_{r-1}$ 
        uniformly for $T_r(i) = \lceil m/K \rceil$ times
    \ELSE
        \STATE Update average historical features in epoch $r$ for each arm $i\in\mathcal{A}_{r-1}$:
        \[
        \bar{x}_{r-1,i} = \frac{1}{T_{r-1}(i)}
        \sum_{t \in S_{r-1}} x_{t,i} \cdot \mathbf{1}(i_t=i)
        \quad \forall i \in \mathcal{A}_{r-1}
        \]
        \STATE Compute $G$-optimal design $\pi_r : 
        \{\bar{x}_{r-1,i} : i \in \mathcal{A}_{r-1}\} 
        \to [0,1]$ and set $T_r(i) = \lceil\pi_r(i) \cdot m\rceil$ 
        and pull each arm $i \in \mathcal{A}_{r-1}$ 
        accordingly
    \ENDIF
    \STATE Update the global statistics $V_r$ and $\hat{\theta}_{r}$:
    \[
    V_r = \sum_{t \in S_r} x_{t,i_t} x_{t,i_t}^\top, \quad
    \hat{\theta}_r = V_r^{-1} \sum_{t \in S_r} 
    x_{t,i_t} r_{t,i_t}
    \]
    \FOR{each arm $i \in \mathcal{A}_{r-1}$}
        \STATE Compute empirical mean:
        $\hat{\mu}_{r,i} = \langle\hat{\theta}_r, 
        \bar{x}_{r,i}\rangle$
    \ENDFOR
    \STATE Apply GTC: eliminate any arm $i \in 
    \mathcal{A}_{r-1}$ with $\text{RLCB}_{r,i}>\text{AUCB}_{r,i}$
    \STATE Update $\mathcal{A}_r$ to retain the 
    $\min\{|\mathcal{A}_{r-1}|,\lceil d/2^r \rceil\}$ arms with largest $\hat{\mu}_{r,i}$
    \STATE Update $t_{r+1} = t_r + T_r$
\ENDFOR
\STATE \textbf{Return} the unique arm $i_{\text{out}}$ in 
$\mathcal{A}_{\lceil\log_2 d\rceil}$
\end{algorithmic}
\end{algorithm}

As analyzed in Section \ref{section:lowerbound}, although 
this algorithm incorporates a GTC mechanism, it fundamentally 
fails to identify the best arm under arms' strategic reporting. Suboptimal arms can 
construct reported features such that the optimal arm falls 
within the cone spanned by their features, causing 
$G$-optimal design to allocate zero pulls to the optimal 
arm. Since this manipulation is structural
in the sense that the learner fails to learn the geometry of arm feature space when using OD-based sampling rules, the GTC check, which detects the gap between reported rewards and actual rewards, cannot help it identify the optimal arm.
\section{Tightness of The $d^2$ Penalty}
\label{appendix:tightness}

This section provides a structural analysis of the
$\mathcal{O}(d^2\log T)$ penalty in
Theorem~\ref{theorem:errorbound}.

%------------------------------------------------------------------

%------------------------------------------------------------------

We revisit the derivation of $\omega_{r,i}$ from the proof of
Lemma~\ref{lemma:gtc_bound}. Starting from this lemma's
GTC-derived bound, any arm $i$ surviving the GTC at epoch $r$
satisfies
\begin{equation}
    \hat{\mu}_{r,i}
    \;\leq\;
    \mu_i
    + \frac{\beta_{r,i}}{n_r}
      \sum_{t\in S_r,\,i_t=i}\|x_{t,i}\|_{V_{r,i}^{-1}}
    + \sqrt{\frac{2\log(2/\delta_r)}{n_r}}.
    \label{eq:gtc-mu-bound}
\end{equation}
Applying the Cauchy-Schwarz inequality to the middle term:
\begin{equation}
    \frac{\beta_{r,i}}{n_r}
    \sum_{t:\,i_t=i}\|x_{t,i}\|_{V_{r,i}^{-1}}
    \;\leq\;
    \frac{\beta_{r,i}}{\sqrt{n_r}}
    \sqrt{\sum_{t:\,i_t=i}\|x_{t,i}\|^2_{V_{r,i}^{-1}}}.
    \label{eq:cs-split}
\end{equation}
We bound the two factors on the right-hand side separately.

\paragraph{First factor (first $\sqrt{d}$) -- confidence radius}
% \zx{does $\log T$ also root from here? how will $n_r$ contribute?}
By the elliptical potential lemma~\cite{abbasi2011improved}, the ridge
regression confidence radius satisfies
\begin{equation}
    \beta_{r,i}
    \;\leq\;
    \sqrt{(d+1)\log\frac{1+n_r}{\delta_r}},
    \label{eq:beta-bound}
\end{equation}
where we have $\log(1+n_r/(\lambda d))\leq\log(1+n_r)$ for
$\lambda\geq 1/d$. This bound holds for any reported feature
sequence, as it is derived from Theorem \ref{theorem:conf} and does not
depend on the arms' reporting strategies.

\paragraph{Second factor (second $\sqrt{d}$) -- potential sum}
By the definition $V_{r,i} = \lambda I_d +
\sum_{t:\,i_t=i}x_{t,i}x_{t,i}^\top$, we have
$\sum_{t:\,i_t=i}x_{t,i}x_{t,i}^\top = V_{r,i} - \lambda I_d$,
and therefore,
\begin{equation}
    \sum_{t:\,i_t=i}\|x_{t,i}\|^2_{V_{r,i}^{-1}}
    \;=\;
    \mathrm{tr}\!\left(V_{r,i}^{-1}(V_{r,i}-\lambda I_d)\right)
    \;=\;
    d - \lambda\,\mathrm{tr}(V_{r,i}^{-1})
    \;\leq\;
    d,
    \label{eq:trace-id}
\end{equation}
where the last inequality uses $\mathrm{tr}(V_{r,i}^{-1})\geq 0$.
This bound holds for any reported feature sequence with
$\|x_{t,i}\|_2\leq 1$, and follows purely from the definition of
$V_{r,i}$ without invoking the elliptical potential
lemma~\cite{abbasi2011improved}.

\paragraph{Combining the two factors}
Substituting~\eqref{eq:beta-bound} and~\eqref{eq:trace-id}
into~\eqref{eq:cs-split} yields
\begin{equation}
    \frac{\beta_{r,i}}{\sqrt{n_r}}
    \sqrt{\sum_{t:\,i_t=i}\|x_{t,i}\|^2_{V_{r,i}^{-1}}}
    \;\leq\;
    \sqrt{\frac{d(d+1)\log\frac{1+n_r}{\delta_r}}{n_r}}.
    \label{eq:combined}
\end{equation}
With $\delta_r = \delta/(2KR)$ and $n_r =
\lfloor T/(K_r R)\rfloor$ where $K_r = \lceil K/2^{r-1}\rceil$,
we have $\log((1+n_r)/\delta_r) = O(\log T)$, so there exists
an absolute constant $c_1 > 0$ such that
\begin{equation}
    \omega_{r,i}
    \;\leq\;
    c_1\sqrt{\frac{d^2\log T}{n_r}}.
    \label{eq:omega-final}
\end{equation}

\paragraph{Independence of the two factors}
The first $\sqrt{d}$ arises from $\beta_{r,i}$ via the elliptical
potential lemma~\cite{abbasi2011improved}, reflecting the estimation cost of
ridge regression in $\mathbb{R}^d$. The second $\sqrt{d}$ arises
from the trace identity~\eqref{eq:trace-id}, reflecting the
structure of the GTC verification step. These sources are
logically independent: the first cannot be reduced without
changing the estimator, and the second cannot be reduced without
changing the consistency check.
\section{Additional Experimental Details}
\label{appendix:G}

\textbf{Hyperparameters.} 
All results are averaged over 5000 
independent trials. With random seeds generated as 
$\texttt{seed} + 100000 \times T + \texttt{run\_idx}$, 
where $\texttt{seed} = 20260323$. Besides, we set $\lambda=1.35$ for MESHA.

\subsection{Experiment 1: Overall Comparison (Varying $T$)}

\textbf{Instance Construction.} We construct a strategic 
linear bandit instance with $K = 8$ arms and feature 
dimension $d = 3$. The true parameter is $\theta^* = 
(1, 0, 0)$, so the true expected reward of arm $i$ is 
determined merely by its first feature coordinate. 
The true means are $\boldsymbol{\mu} = (0.52, 0.49, 0.40, 
0.33, 0.27, 0.22, 0.18, 0.15)$, with arm $1$ being the 
unique optimal arm. Each arm $i$ receives reward 
$r_{t,i} = \mu_i + \eta_t$ where $\eta_t \sim \mathcal{N}
(0, 0.155^2)$. The true feature vectors $x_i\in 
\mathbb{R}^3$ are in Table~\ref{tab:vary_T_instance}. The non-zero coordinates 
in dimensions $2$ and $3$ play no role in determining 
rewards under $\theta^* = (1,0,0)$, but serve to create 
a non-trivial feature geometry that prevents trivial 
identification of the reward-relevant direction from 
the feature vectors alone.

\textbf{Strategic Reporting.} Under strategic reporting, 
every arm sets the first coordinate of its reported feature 
to zero, completely preventing the learner from learning the reward-relevant direction. Each 
arm $i$ maintains a pseudo-parameter $\theta_i^*$ satisfying 
$\langle x_{t,i}, \theta_i^*\rangle = \mu_i$ for all $t$ to ensure its reported rewards remain locally consistent considering the 
GTC check. This instance is motivated by the {\it starvation attack} 
discussed in Section \ref{section:lowerbound}; it would severely harm the behaviours of
feature-dependent algorithms 
while leaving reward-only algorithms unaffected. 
The complete true and reported features for this instance 
are given in Table \ref{tab:vary_T_instance}.

\begin{table}[h]
\centering
\caption{True features, reported features in the vary-$T$ 
instance ($K=8$, $d=3$, $\theta^*=(1,0,0)$). }
\label{tab:vary_T_instance}
\renewcommand{\arraystretch}{1.2}
\begin{tabular}{c|c|c|c}
\hline
Arm $i$ & $\mu_i$ & $x_i$ & 
$x_{t,i}$ \\
\hline
0 & 0.520 & $(0.520,\ 0.000,\ 0.000)$ & 
$(0.000,\ 1.850,\ 0.000)$ \\
1 & 0.490 & $(0.490,\ 0.110,\ {-}0.080)$ & 
$(0.000,\ 0.667,\ 0.667)$ \\
2 & 0.400 & $(0.400,\ {-}0.120,\ 0.090)$ & 
$(0.000,\ 0.000,\ 1.949)$ \\
3 & 0.330 & $(0.330,\ 0.065,\ 0.050)$ & 
$(0.000,\ {-}0.586,\ 0.586)$ \\
4 & 0.270 & $(0.270,\ {-}0.050,\ {-}0.060)$ & 
$(0.000,\ {-}0.771,\ 0.000)$ \\
5 & 0.220 & $(0.220,\ 0.050,\ {-}0.050)$ & 
$(0.000,\ {-}0.505,\ {-}0.505)$ \\
6 & 0.180 & $(0.180,\ {-}0.060,\ 0.030)$ & 
$(0.000,\ 0.000,\ {-}0.657)$ \\
7 & 0.150 & $(0.150,\ 0.050,\ {-}0.040)$ & 
$(0.000,\ 0.424,\ {-}0.424)$ \\
\hline
\end{tabular}
\end{table}

\textbf{Baselines.} 
We compare MESHA against the following 
SOTA algorithms in linear and non-linear bandits under strategic environment:
\begin{itemize}
    \item \textbf{Sequential Halving} \cite{karnin2013almost}: a SOTA algorithm that has no access to feature vectors and allocates pulls uniformly across active 
    arms in each epoch.
    
    \item \textbf{Successive Rejects} \cite{audibert2010best}: a SOTA algorithm for standard stochastic bandits and has no access to feature information.
    \item \textbf{OptGTM} \cite{kleine2024strategic}: a GTC-based 
    contextual linear bandit algorithm that uses reported 
    features for sampling with parameters set as in the original work \cite{kleine2024strategic}.

    \item \textbf{OD-LinBAI}: OD-LinBAI 
    \cite{yang2022minimax} applied directly to reported features 
    without any mechanism design.

    \item \textbf{OD-LinBAI-GTC}: OD-LinBAI 
    combined with our GTC mechanism, as described in 
    Appendix \ref{appendix:E}.

    \item \textbf{OD-LinBAI*}: OD-LinBAI applied to 
    the true feature vectors, serving as an oracle baseline 
    that represents the best achievable performance without 
    strategic manipulation.
\end{itemize}

\textbf{Results.} 
% Algorithms' empirical BAI failure probability are summarized 
Results are presented in Table \ref{tab:vary_T_results}.
\begin{table}[h]
\centering
\caption{BAI failure probabilities for varying budget $T$
($K=8$, $d=3$).}
\label{tab:vary_T_results}
\renewcommand{\arraystretch}{1.3}
\begin{tabular}{c|c|c|c|c|c|c|c}
\hline
$T$ & MESHA & SH & SR & OptGTM & OD-LinBAI &
OD-Lin-GTC & OD-LinBAI* \\
\hline
60  & 0.1210 & 0.318 & 0.362 & 0.174 & 0.816 & 0.816 & 0.052 \\
100 & 0.1130 & 0.300 & 0.266 & 0.162 & 0.814 & 0.814 & 0.036 \\
140 & 0.1110 & 0.240 & 0.266 & 0.158 & 0.858 & 0.858 & 0.048 \\
200 & 0.1022 & 0.216 & 0.169 & 0.114 & 0.886 & 0.886 & 0.058 \\
300 & 0.0890 & 0.168 & 0.120 & 0.140 & 0.950 & 0.950 & 0.042 \\
400 & 0.0792 & 0.140 & 0.111 & 0.098 & 0.956 & 0.956 & 0.030 \\
500 & 0.0612 & 0.088 & 0.068 & 0.086 & 0.984 & 0.984 & 0.024 \\
600 & 0.0542 & 0.094 & 0.080 & 0.068 & 0.986 & 0.986 & 0.024 \\
700 & 0.0432 & 0.062 & 0.036 & 0.054 & 0.988 & 0.988 & 0.012 \\
800 & 0.0366 & 0.050 & 0.038 & 0.030 & 1.000 & 1.000 & 0.018 \\
\hline
\end{tabular}
\end{table}

\subsection{Experiment 2: Varying Feature Dimension $d$}

\textbf{Instance Construction.} We fix $K = 64$ arms and budget $T = 1500$, and vary 
$d \in \{3, 6, 9, 18, 24\}$. For each $d$, the true 
parameter $\theta^* \in \mathbb{R}^d$ is a fixed dense 
unit vector whose coordinates are listed in 
Table~\ref{tab:vary_d_params}. The true feature of arm 
$i$ is constructed as:
\[
x_i^* = \mu_i \theta^* + \rho(d) z_i,
\]
where $z_i \perp \theta^*$ is a random unit vector 
orthogonal to $\theta^*$ and $\rho(d)$ controls the 
magnitude of the nuisance direction. The optimal arm is 
arm $0$ with true mean $\mu_0 = 0.62$ across all values 
of $d$. The second-best arm has true mean $\mu_1$, and 
the suboptimality gap $\Delta_1 = \mu_0 - \mu_1$ is 
denoted as ``gap'' in Table~\ref{tab:vary_d_params}; 
the remaining $62$ arms have means strictly decreasing 
from $\mu_1$ to $0.12$. The reward noise follows 
$\eta_t \sim \mathcal{N}(0, \xi^2)$, where $\xi$ varies 
with $d$ to maintain a consistent signal-to-noise ratio. 
All instance parameters are summarized in 
Table~\ref{tab:vary_d_params} and \ref{tab:vary_d_theta}.

% 参数表（gap, xi, rho）
\begin{table}[h]
\centering
\caption{Instance parameters for the varying-$d$ 
experiment ($K=64$, $T=1500$). ``gap'' denotes the 
suboptimality gap $\Delta_1 = \mu_0 - \mu_1$ between 
the optimal arm and the second-best arm.}
\label{tab:vary_d_params}
\renewcommand{\arraystretch}{1.3}
\begin{tabular}{c|c|c|c}
\hline
$d$ & gap & $\xi$ & $\rho(d)$ \\
\hline
3  & 0.0150 & 0.1400 & 0.350 \\
6  & 0.0135 & 0.1461 & 0.411 \\
9  & 0.0122 & 0.1522 & 0.472 \\
18 & 0.0096 & 0.1705 & 0.567 \\
24 & 0.0093 & 0.1826 & 0.567 \\
\hline
\end{tabular}
\end{table}

% theta* 表
\begin{table}[h]
\centering
\caption{Values of $\theta^*$ for each coordinate $k$ 
across different feature dimensions $d$ in the 
varying-$d$ experiment. ``---'' indicates the coordinate 
does not exist for the corresponding $d$.}
\label{tab:vary_d_theta}
\renewcommand{\arraystretch}{1.2}
\begin{tabular}{c|ccccc}
\hline
$k$ & $d=3$ & $d=6$ & $d=9$ & $d=18$ & $d=24$ \\
\hline
1  & 0.7385 & 0.6389 & 0.5945 & 0.5349 & 0.5146 \\
2  & 0.5222 & 0.4518 & 0.4204 & 0.3782 & 0.3639 \\
3  & 0.4264 & 0.3689 & 0.3433 & 0.3088 & 0.2971 \\
4  & ---    & 0.3194 & 0.2973 & 0.2674 & 0.2573 \\
5  & ---    & 0.2857 & 0.2659 & 0.2392 & 0.2301 \\
6  & ---    & 0.2608 & 0.2427 & 0.2184 & 0.2101 \\
7  & ---    & ---    & 0.2247 & 0.2022 & 0.1945 \\
8  & ---    & ---    & 0.2102 & 0.1891 & 0.1819 \\
9  & ---    & ---    & 0.1982 & 0.1783 & 0.1715 \\
10 & ---    & ---    & ---    & 0.1691 & 0.1627 \\
11 & ---    & ---    & ---    & 0.1613 & 0.1552 \\
12 & ---    & ---    & ---    & 0.1544 & 0.1486 \\
13 & ---    & ---    & ---    & 0.1484 & 0.1427 \\
14 & ---    & ---    & ---    & 0.1430 & 0.1375 \\
15 & ---    & ---    & ---    & 0.1381 & 0.1329 \\
16 & ---    & ---    & ---    & 0.1337 & 0.1287 \\
17 & ---    & ---    & ---    & 0.1297 & 0.1248 \\
18 & ---    & ---    & ---    & 0.1261 & 0.1213 \\
19 & ---    & ---    & ---    & ---    & 0.1181 \\
20 & ---    & ---    & ---    & ---    & 0.1151 \\
21 & ---    & ---    & ---    & ---    & 0.1123 \\
22 & ---    & ---    & ---    & ---    & 0.1097 \\
23 & ---    & ---    & ---    & ---    & 0.1073 \\
24 & ---    & ---    & ---    & ---    & 0.1050 \\
\hline
\end{tabular}
\end{table}
\textbf{Strategic Reporting.} 
Each arm $i$ acts as a self-interested agent aiming to 
maximize its own probability of being identified as the 
best arm. Under this strategic instance, every arm sets 
the first coordinate of its reported feature to zero, 
hiding the reward-relevant direction from the learner. 
The suboptimal arms report features with norm $1.0$ to 
appear as prominent as possible in the reported feature 
space, while the optimal arm's reported feature has norm 
$0.88$, making it appear less competitive than the 
suboptimal arms. Each arm $i$ maintains a pseudo-parameter 
$\theta_i^*$ satisfying $\langle x_{t,i}, \theta_i^*
\rangle = \mu_i$ for all $t$, ensuring that its 
estimated rewards under $\theta_i^*$ remain consistent 
with its observed rewards and the GTC check is passed. 
All results are averaged over 1000 independent trials.

\textbf{Baselines.} We compare MESHA against OptGTM 
\cite{kleine2024strategic} and OD-LinBAI*\cite{yang2022minimax}, which serves as an SOTA 
baseline facing non-strategic environment. Note that 
OD-LinBAI under strategic reporting fails completely in 
this instance and is omitted from this comparison for 
clarity.

\textbf{Results.} Algorithms' empirical BAI failure probabilities are summarized in 
Table \ref{tab:vary_d_results}.

\begin{table}[h]
\centering
\caption{BAI failure probabilities in the varying-$d$ experiment
($K=64$, $T=1500$)}
\label{tab:vary_d_results}
\renewcommand{\arraystretch}{1.3}
\begin{tabular}{c|c|c|c}
\hline
$d$ & MESHA & OptGTM & OD-LinBAI* \\
\hline
3  & 0.201 & 0.320 & 0.060 \\
6  & 0.236 & 0.358 & 0.125 \\
9  & 0.268 & 0.402 & 0.213 \\
18 & 0.337 & 0.442 & 0.314 \\
24 & 0.358 & 0.457 & 0.369 \\
\hline
\end{tabular}
\end{table}

\subsection{Experiment 3: Varying Arm Count $K$}

\textbf{Instance Construction.} We fix feature dimension 
$d = 6$, budget $T = 1500$, and sweep $K \in \{16, 24, 
32, 64\}$. The true parameter is the same dense unit 
vector $\theta^* = (0.6389,\ 0.4518,\ 0.3689,\ 0.3194,\ 
0.2857,\ 0.2608) \in \mathbb{R}^6$ across all values of 
$K$. The true feature of arm $i$ is constructed as:
\[
x_i^* = \mu_i \theta^* + \rho(K) z_i, \quad z_i \perp 
\theta^*, \quad \|z_i\|_2 = 1,
\]
where $z_i$ is a random unit vector orthogonal to 
$\theta^*$, and $\rho(K)$ controls the magnitude of the 
nuisance direction. The optimal arm is arm $0$ with true 
mean $\mu_0 = 0.62$ fixed across all $K$. The second-best 
arm has true mean $\mu_1 = \mu_0 - \text{gap}(K)$, where 
the suboptimality gap $\Delta_1 = \text{gap}(K)$ shrinks 
as $K$ grows according to:
\[
\text{gap}(K) = \max\!\left(0.0022,\ 0.010 \times 
\left(\frac{16}{\max(K,16)}\right)^{1.48}\right).
\]
The remaining $K-2$ arms have means linearly spaced 
between $\mu_1$ and $\mu_{\text{tail}} = 0.20$. The noise 
standard deviation $\xi$ and nuisance magnitude $\rho(K)$ 
increase with $K$ to reflect the greater difficulty of 
identifying the optimal arm when more arms compete. All 
instance parameters are summarized in 
Table~\ref{tab:vary_k_params}.

\begin{table}[h]
\centering
\caption{Instance parameters for the varying-$K$ 
experiment ($d=6$, $T=1500$). ``gap'' denotes the 
suboptimality gap $\Delta_1 = \mu_0 - \mu_1$ between 
the optimal arm and the second-best arm.}
\label{tab:vary_k_params}
\renewcommand{\arraystretch}{1.3}
\begin{tabular}{c|c|c|c|c}
\hline
$K$ & gap & $\xi$ & $\rho(K)$ & $\mu_1$ \\
\hline
16 & 0.0100 & 0.1400 & 0.350 & 0.6100 \\
24 & 0.0055 & 0.1512 & 0.376 & 0.6145 \\
32 & 0.0036 & 0.1675 & 0.412 & 0.6164 \\
64 & 0.0022 & 0.2128 & 0.497 & 0.6178 \\
\hline
\end{tabular}
\end{table}

\textbf{Strategic Reporting.} Under strategic reporting, 
every arm sets the first coordinate of its reported feature 
to zero. Unlike the vary-$d$ experiment, the true optimal 
arm remains the strongest arm under reported features as 
well (no collapse bait), so the strategic difficulty comes 
purely from the geometry compression rather than active 
misdirection. Each arm maintains a pseudo-parameter 
satisfying $\langle x_{t,i}, \theta_i^*\rangle = 
\mu_i$ to pass the GTC. The reported geometry is scaled by 
a $K$-dependent multiplier (K=16$\to$1.00, 24$\to$1.05, 
32$\to$1.12, 64$\to$1.51) to reflect increasing strategic 
pressure as more arms compete. To ensure consistency across 
$K$ values, the instance uses a nested construction where 
the first $K'$ arms under a larger $K$ are identical to 
those under a smaller $K$. All results are averaged over 
5000 independent trials.

\textbf{Baselines.} We compare MESHA against Sequential 
Halving \cite{karnin2013almost} and SE, both of which rely solely on 
observed rewards and are unaffected by strategic feature 
reporting. OD-LinBAI and OptGTM are omitted from this 
comparison since they fail under strategic reporting in 
this instance.

\textbf{Results.} The BAI failure probabilities are summarized in Table 
\ref{tab:vary_k_results}.

\begin{table}[h]
\centering
\caption{BAI failure probabilities for varying $K$ 
($d=6$, $T=1500$, $\pm$ indicates Wald $95\%$ CI half-width).}
\label{tab:vary_k_results}
\renewcommand{\arraystretch}{1.3}
\begin{tabular}{c|c|c|c}
\hline
$K$ & MESHA & Sequential Halving & SR \\
\hline
16 & $0.222$ & $0.247 $ & 
$0.2144 $ \\
24 & $0.342 $ & $0.397 $ & 
$0.3524 $ \\
32 & $0.390 $ & $0.415$ & 
$0.4198 $ \\
64 & $0.426 $ & $0.482 $ & 
$0.4716 $ \\
\hline
\end{tabular}
\end{table}

 % argument is your BibTeX string definitions and bibliography database(s)
%\bibliography{IEEEabrv,../bib/paper}
%

\bibliographystyle{IEEEtran}
\bibliography{references}

@article{chapelle2011empirical,
  title={An empirical evaluation of thompson sampling},
  author={Chapelle, Olivier and Li, Lihong},
  journal={Advances in neural information processing systems},
  volume={24},
  year={2011}
}

@article{yang2022minimax,
  title={Minimax optimal fixed-budget best arm identification in linear bandits},
  author={Yang, Junwen and Tan, Vincent},
  journal={Advances in Neural Information Processing Systems},
  volume={35},
  pages={12253--12266},
  year={2022}
}

@article{soare2014best,
  title={Best-arm identification in linear bandits},
  author={Soare, Marta and Lazaric, Alessandro and Munos, R{\'e}mi},
  journal={Advances in neural information processing systems},
  volume={27},
  year={2014}
}

@inproceedings{xu2018fully,
  title={A fully adaptive algorithm for pure exploration in linear bandits},
  author={Xu, Liyuan and Honda, Junya and Sugiyama, Masashi},
  booktitle={International Conference on Artificial Intelligence and Statistics},
  pages={843--851},
  year={2018},
  organization={PMLR}
}

@inproceedings{audibert2010best,
  title={Best arm identification in multi-armed bandits},
  author={Audibert, Jean-Yves and Bubeck, S{\'e}bastien},
  booktitle={COLT-23th Conference on learning theory-2010},
  pages={13--p},
  year={2010}
}

@article{auer2002finite,
  title={Finite-time analysis of the multiarmed bandit problem},
  author={Auer, Peter and Cesa-Bianchi, Nicolo and Fischer, Paul},
  journal={Machine learning},
  volume={47},
  number={2},
  pages={235--256},
  year={2002},
  publisher={Springer}
}

@inproceedings{karnin2013almost,
  title={Almost optimal exploration in multi-armed bandits},
  author={Karnin, Zohar and Koren, Tomer and Somekh, Oren},
  booktitle={International conference on machine learning},
  pages={1238--1246},
  year={2013},
  organization={PMLR}
}

@article{shen2019universal,
  title={Universal best arm identification},
  author={Shen, Cong},
  journal={IEEE Transactions on Signal Processing},
  volume={67},
  number={17},
  pages={4464--4478},
  year={2019},
  publisher={IEEE}
}

@article{kleine2024strategic,
  title={Strategic linear contextual bandits},
  author={Kleine Buening, Thomas and Saha, Aadirupa and Dimitrakakis, Christos and Xu, Haifeng},
  journal={Advances in Neural Information Processing Systems},
  volume={37},
  pages={116638--116675},
  year={2024}
}

@inproceedings{braverman2019multi,
  title={Multi-armed bandit problems with strategic arms},
  author={Braverman, Mark and Mao, Jieming and Schneider, Jon and Weinberg, S Matthew},
  booktitle={Conference on Learning Theory},
  pages={383--416},
  year={2019},
  organization={PMLR}
}

@article{even2006action,
  title={Action elimination and stopping conditions for the multi-armed bandit and reinforcement learning problems.},
  author={Even-Dar, Eyal and Mannor, Shie and Mansour, Yishay and Mahadevan, Sridhar},
  journal={Journal of machine learning research},
  volume={7},
  number={6},
  year={2006}
}

@article{gabillon2012best,
  title={Best arm identification: A unified approach to fixed budget and fixed confidence},
  author={Gabillon, Victor and Ghavamzadeh, Mohammad and Lazaric, Alessandro},
  journal={Advances in neural information processing systems},
  volume={25},
  year={2012}
}

@article{abbasi2011improved,
  title={Improved algorithms for linear stochastic bandits},
  author={Abbasi-Yadkori, Yasin and P{\'a}l, D{\'a}vid and Szepesv{\'a}ri, Csaba},
  journal={Advances in neural information processing systems},
  volume={24},
  year={2011}
}

@inproceedings{esmaeili2025robust,
  title={Robust performance incentivizing algorithms for multi-armed bandits with strategic agents},
  author={Esmaeili, Seyed A and Shin, Suho and Slivkins, Aleksandrs},
  booktitle={Proceedings of the AAAI Conference on Artificial Intelligence},
  volume={39},
  number={13},
  pages={13814--13822},
  year={2025}
}

@article{verma2025cobra,
  title={COBRA: Contextual Bandit Algorithm for Ensuring Truthful Strategic Agents},
  author={Verma, Arun and Saha, Indrajit and Yokoo, Makoto and Low, Bryan Kian Hsiang},
  journal={arXiv preprint arXiv:2505.23720},
  year={2025}
}

@article{esmaeili2023replication,
  title={Replication-proof bandit mechanism design},
  author={Esmaeili, Seyed and Hajiaghayi, MohammadTaghi and Shin, Suho},
  journal={arXiv e-prints},
  pages={arXiv--2312},
  year={2023}
}

@inproceedings{shin2022multi,
  title={Multi-armed bandit algorithm against strategic replication},
  author={Shin, Suho and Lee, Seungjoon and Ok, Jungseul},
  booktitle={International Conference on Artificial Intelligence and Statistics},
  pages={403--431},
  year={2022},
  organization={PMLR}
}

@inproceedings{feng2020intrinsic,
  title={The intrinsic robustness of stochastic bandits to strategic manipulation},
  author={Feng, Zhe and Parkes, David and Xu, Haifeng},
  booktitle={International Conference on Machine Learning},
  pages={3092--3101},
  year={2020},
  organization={PMLR}
}

@article{kaufmann2016complexity,
  title={On the complexity of best-arm identification in multi-armed bandit models},
  author={Kaufmann, Emilie and Capp{\'e}, Olivier and Garivier, Aur{\'e}lien},
  journal={The Journal of Machine Learning Research},
  volume={17},
  number={1},
  pages={1--42},
  year={2016},
  publisher={JMLR. org}
}

@article{fiez2019sequential,
  title={Sequential experimental design for transductive linear bandits},
  author={Fiez, Tanner and Jain, Lalit and Jamieson, Kevin G and Ratliff, Lillian},
  journal={Advances in neural information processing systems},
  volume={32},
  year={2019}
}

@article{lai1985asymptotically,
  title={Asymptotically efficient adaptive allocation rules},
  author={Lai, Tze Leung and Robbins, Herbert},
  journal={Advances in applied mathematics},
  volume={6},
  number={1},
  pages={4--22},
  year={1985},
  publisher={Academic Press, Inc. Orlando, FL, USA}
}

@article{zheng2022ai,
  title={The AI Economist: Taxation policy design via two-level deep multiagent reinforcement learning},
  author={Zheng, Stephan and Trott, Alexander and Srinivasa, Sunil and Parkes, David C and Socher, Richard},
  journal={Science advances},
  volume={8},
  number={18},
  pages={eabk2607},
  year={2022},
  publisher={American Association for the Advancement of Science}
}

@inproceedings{dani2008stochastic,
  title={Stochastic linear optimization under bandit feedback},
  author={Dani, Varsha and Hayes, Thomas P and Kakade, Sham M},
  booktitle={21st Annual Conference on Learning Theory},
  number={101},
  pages={355--366},
  year={2008}
}

@book{lattimore2020bandit,
  title={Bandit algorithms},
  author={Lattimore, Tor and Szepesv{\'a}ri, Csaba},
  year={2020},
  publisher={Cambridge University Press}
}

@inproceedings{even2002pac,
  title={PAC bounds for multi-armed bandit and Markov decision processes},
  author={Even-Dar, Eyal and Mannor, Shie and Mansour, Yishay},
  booktitle={International Conference on Computational Learning Theory},
  pages={255--270},
  year={2002},
  organization={Springer}
}

@article{thompson1933likelihood,
  title={On the likelihood that one unknown probability exceeds another in view of the evidence of two samples},
  author={Thompson, William R},
  journal={Biometrika},
  volume={25},
  number={3/4},
  pages={285--294},
  year={1933},
  publisher={JSTOR}
}

@inproceedings{agrawal2012analysis,
  title={Analysis of thompson sampling for the multi-armed bandit problem},
  author={Agrawal, Shipra and Goyal, Navin},
  booktitle={Conference on learning theory},
  pages={39--1},
  year={2012},
  organization={JMLR Workshop and Conference Proceedings}
}

@article{auer2002using,
  title={Using confidence bounds for exploitation-exploration trade-offs},
  author={Auer, Peter},
  journal={Journal of machine learning research},
  volume={3},
  number={Nov},
  pages={397--422},
  year={2002}
}

@article{zhong2021achieving,
  title={Achieving the pareto frontier of regret minimization and best arm identification in multi-armed bandits},
  author={Zhong, Zixin and Cheung, Wang Chi and Tan, Vincent YF},
  journal={arXiv preprint arXiv:2110.08627},
  year={2021}
}

@inproceedings{degenne2019bridging,
  title={Bridging the gap between regret minimization and best arm identification, with application to a/b tests},
  author={Degenne, R{\'e}my and Nedelec, Thomas and Calauz{\`e}nes, Cl{\'e}ment and Perchet, Vianney},
  booktitle={The 22nd International Conference on Artificial Intelligence and Statistics},
  pages={1988--1996},
  year={2019},
  organization={PMLR}
}

@article{hoeffding1963probability,
  title={Probability inequalities for sums of bounded random variables},
  author={Hoeffding, Wassily},
  journal={Journal of the American statistical association},
  volume={58},
  number={301},
  pages={13--30},
  year={1963},
  publisher={Taylor \& Francis}
}

@inproceedings{zhao2023revisiting,
  title={Revisiting simple regret: Fast rates for returning a good arm},
  author={Zhao, Yao and Stephens, Connor and Szepesv{\'a}ri, Csaba and Jun, Kwang-Sung},
  booktitle={International Conference on Machine Learning},
  pages={42110--42158},
  year={2023},
  organization={PMLR}
}

\vfill

\end{document}